\documentclass[12pt]{article}
\usepackage[margin=1.25in]{geometry}
\usepackage[ruled, linesnumbered, algo2e]{algorithm2e}
\RequirePackage{titletoc}
\usepackage{comment}
\usepackage{float}
\usepackage{graphicx}
\usepackage{hyperref}
\hypersetup{
colorlinks=false,     % 启用超链
    linkcolor=blue,      % 超链接颜色
    citecolor=blue,     % 引用颜色
    urlcolor=blue        % URL颜
    pdfborder={0 0 0},    
	}
\usepackage{url}

\usepackage{setspace}

\usepackage{amsthm}
\usepackage{bbm}
\usepackage{amssymb}
\usepackage{algorithmic}
\usepackage{algorithm}
\usepackage{graphicx}
\usepackage{bm}
\usepackage[dvipsnames,x11names]{xcolor}
\usepackage{lipsum}
\usepackage{subfigure}

\usepackage[utf8]{inputenc} % allow utf-8 input
\usepackage[T1]{fontenc}    % use 8-bit T1 fonts
  
\usepackage{booktabs}       % professional-quality tables
\usepackage{amsfonts}       % blackboard math symbols
\usepackage{nicefrac}       % compact symbols for 1/2, etc.
\usepackage{microtype}      % microtypography
\usepackage{minitoc}
\usepackage{titletoc}

\usepackage{bbm,algorithm,algorithmic,amsmath,amsthm,enumitem,tikz,floatpag}
\usetikzlibrary{positioning, fit, calc}

\newtheorem{theorem}{Theorem}[section]

\newtheorem{definition}[theorem]{Definition}

\newtheorem{corollary}[theorem]{Corollary}
\newtheorem{lemma}[theorem]{Lemma}

\date{\today\vspace{-1em}}

\ifdefined\usebigfont

 % just to make sure
\usepackage{times,amsthm}
\usepackage{minitoc}
\usepackage[fontsize=13pt]{scrextend}
\makeatletter
\@ifpackageloaded{geometry}{\AtBeginDocument{\newgeometry{letterpaper,left=1.56in,right=1.56in,top=1.71in,bottom=1.77in}}}{\usepackage[letterpaper,left=1.56in,right=1.56in,top=1.71in,bottom=1.77in]{geometry}}
\AtBeginDocument{\newgeometry{letterpaper,left=1.56in,right=1.56in,top=1.71in,bottom=1.77in}}
\makeatother

\else
\usepackage[margin=1.25in]{geometry}
\usepackage[ruled, linesnumbered, algo2e]{algorithm2e}
\usepackage{times,bbm}
\usepackage{amssymb}
\usepackage{amsmath}
\usepackage{amsthm}
\usepackage{algorithmic}
\usepackage{algorithm}
\usepackage[dvipsnames,x11names]{xcolor}
\usepackage{bm}

\author{}
\date{\today}

\fi
\usepackage{subfigure}
\usepackage{natbib}

\title{List Replicable Reinforcement Learning}

% The \author macro works with any number of authors. There are two commands
% used to separate the names and addresses of multiple authors: \And and \AND.
%
% Using \And between authors leaves it to LaTeX to determine where to break the
% lines. Using \AND forces a line break at that point. So, if LaTeX puts 3 of 4
% authors names on the first line, and the last on the second line, try using
% \AND instead of \And before the third author name.
\author{
  Bohan Zhang\thanks{Peking University, 2200010903@stu.pku.edu.cn}
  \and Michael Chen\thanks{Iowa State University, mqychen@iastate.edu}
  \and A. Pavan\thanks{Iowa State University, pavan@iastate.edu}
  \and N.~V.~Vinodchandran\thanks{University of Nebraska--Lincoln, vinod@unl.edu}
  \and Lin F.~Yang\thanks{University of California, Los Angeles, linyang@ee.ucla.edu}
  \and Ruosong Wang\thanks{ Peking University, ruosongwang@pku.edu.cn}
}

\date{\today}

\begin{document}

\newcommand{\unreach}[2]{U_{#1}(#2)}
\newcommand{\tnreach}[2]{T_{#1}(#2)}
\newcommand{\eunreach}[1]{\hat{U}_{#1}}
\newcommand{\etnreach}[1]{\hat{T}_{#1}}
\newcommand{\minr}[2]{\mathrm{Crit}(#1, #2)}
\newcommand{\minrr}[2]{\mathrm{Crit'}(#1, #2)}
\newcommand{\ball}[2]{\mathrm{Ball}(#1, #2)}
\newcommand{\gap}{\mathrm{Gap}}
\newcommand{\truncM}[1]{M^{#1}}
\newcommand{\truncMm}[1]{\overline{M}^{#1}}
\newcommand{\truncP}[1]{P^{#1}}
\newcommand{\truncPp}[1]{\overline{P}^{#1}}
\newcommand{\tildeP}[1]{\tilde{P}^{#1}}
\newcommand{\rtrunc}{r_{\mathrm{trunc}}}
\newcommand{\raction}{r_{\mathrm{action}}}
\newcommand{\badtrunc}{\mathrm{Bad}_{\mathrm{trunc}}}
\newcommand{\badaction}{\mathrm{Bad}_{\mathrm{action}}}
\newcommand{\absorbs}{s_{\mathrm{absorb}}}
\newcommand{\BestArm}{{\textsc{BestArm}}}
\newcommand{\Head}{\mathtt{H}}
\newcommand{\Tail}{\mathtt{T}}

\newenvironment{itemize*}%
{\begin{itemize}[leftmargin=*,topsep=0pt]%
		\setlength{\itemsep}{0pt}%
		\setlength{\parskip}{0pt}}%
	{\end{itemize}}
\maketitle
\normalsize 

\begin{abstract}
Replicability is a fundamental challenge in reinforcement learning (RL), as RL algorithms are empirically observed to be unstable and sensitive to variations in training conditions. To formally address this issue, we study \emph{list replicability} in the Probably Approximately Correct (PAC) RL framework, where an algorithm must return a near-optimal policy that lies in a \emph{small list} of policies across different runs, with high probability. The size of this list defines the \emph{list complexity}. We introduce both weak and strong forms of list replicability: the weak form ensures that the final learned policy belongs to a small list, while the strong form further requires that the entire sequence of executed policies remains constrained. These objectives are challenging, as existing RL algorithms exhibit exponential list complexity due to their instability. Our main theoretical contribution is a provably efficient tabular RL algorithm that guarantees list replicability by ensuring the list complexity remains polynomial in the number of states, actions, and the horizon length. We further extend our techniques to achieve strong list replicability, bounding the number of possible policy execution traces polynomially with high probability. Our theoretical result is made possible by key innovations including (i) a novel planning strategy that selects actions based on lexicographic order among near-optimal choices within a randomly chosen tolerance threshold, and (ii) a mechanism for testing state reachability in stochastic environments while preserving replicability. Finally, we demonstrate that our theoretical investigation sheds light on resolving the \emph{instability} issue of RL algorithms used in practice.  In particular, we show that empirically, our new planning strategy can be incorporated into practical RL frameworks to enhance their stability. 
\end{abstract}

\tableofcontents
 \section{Introduction}
The issue of replicability (or lack thereof) has been a major concern in many scientific areas~\citep{begley2012raise, ioannidis2005most, baker20161,national2019reproducibility}. 
In machine learning, a common strategy to ensure replicability and reproducibility is to publicly share datasets and code.
Indeed, several prominent machine learning conferences have hosted reproducibility challenges to promote best practices~\citep{MLRC2023}.
However, this approach may not be sufficient, as machine learning algorithms rely on sampling from data distributions and often incorporate randomness. This inherent stochasticity leads to non-replicability. A more effective solution is to design replicable algorithms— ideally algorithms that consistently produce the same output across multiple runs, even when each run processes a different sample from the data distribution. This approach has recently spurred theoretical investigations, resulting in formal definitions of replicability and the development of various replicability frameworks~\citep{ILPS2022, DPVV2023}.
%
%To ensure replicability and reproducibility, in machine learning, a common approach is to make code and datasets publicly available, and indeed, several prominent machine learning conferences have hosted reproducibility challenges to promote best practices~\cite{MLRC2023, ??}.
%On the other hand, machine learning algorithms are usually randomized, and such inherent randomness results in non-replicability. 
%To tackle this issue, there have been many recent foundational studies to understand and formalize the replicability issue in machine learning and various rigorous definitions of replicability have been proposed including the notion of $\rho$-replicability~\cite{ILPS2022} and list replicability~\cite{DPVV2023}. 
%
In this paper, we focus on the notion of {\em list replicability}~\citep{DPVV2023}. 
Informally, a learning algorithm is $k$-list replicable if there is a list $L$ of cardinality $k$ of good hypotheses so that the algorithm always outputs a hypothesis in $L$ with high probability.
$k$ is called the list complexity of the algorithm. List replicability generalizes perfect replicability, which corresponds to the special case where $k=1$. However, as noted in ~\cite{DPVV2023}, perfect replicability is unattainable even for simple problems. List replicability provides a natural relaxation, allowing meaningful guarantees while still ensuring controlled variability in algorithm outputs. 

%\cite{DPVV2023, chen2025regret} have recently investigated list replicability of various machine learning problems, including multi-armed bandit and PAC learning.  

We investigate list replicability in the context of reinforcement learning (RL), or more specifically, 
probably approximately correct (PAC) RL in the tabular setting. 
In RL, an agent interacts with an unknown environment modeled as a Markov decision process (MDP)
in which there is a set of states $S$ with bounded size that describes all possible status of the environment.
At a state $s \in S$, the agent interacts with the environment by taking an action $a$ from an action space $A$, receives an immediate reward and transits to the next state. 
The agent interacts with the environment episodically, where each episode consists of $H$ steps. 
The goal of the agent is to interact with the environment by executing a series a policies, so that after a certain number of interactions, sufficient information is collected so that the agent could find a policy that performs nearly optimally. Replicability is a well-known challenge in RL, as RL algorithms are empirically observed to be unstable and sensitive to variations in training conditions. 
Our work aims to address this issue by introducing and analyzing list replicability in the PAC-RL framework.
Moreover, by studying the replicability of RL from a theoretical point of view, we could build a clearer understanding of the instability issue of RL algorithms, and finally make progress towards enhancing the stability of empirical RL algorithms. 
%\lin{Remember to briefly connect with Bin Yu's Veridical Data Science}

Theoretically, there are multiple ways to define the notion of list replicability in the context of RL. We may say an RL algorithm is $k$-list replicable,  if there is a list $L$ of policies with cardinality $k$, so that the near-optimal policy found by the agent always lies in $L$ with high probability, where the list $L$ depends only on the unknown MDP instance. 
Under this definition of list replicability, it is only guaranteed that the returned policy lies in a list with small size: there is no limit on the sequence of policies executed by the agent (the trace). We call such RL algorithms to be {\em weakly $k$-list replicable}. 

In certain applications,  the above weak notion of list replicability may not suffice, 
and a more desirable notion of list replicability is to require both the returned policy and the trace (i.e., sequence of policies executed by the agent) lies in a list of small-size.
%, so that one can be prepared for the policy sequence being executed. 
This stronger notion of list replicability has been studied in multi-armed bandit (MAB)~\citep{chen2025regret}, and similar definition of replicability has been studied by~\citet{EKKKMV2023} in MAB under $\rho$-replicability~\citep{ILPS2022}.
In these works, it has been argued that limiting the number of possible traces (in terms of actions) of an MAB algorithm is more desirable in scenarios including clinical trials and social experiments. 
Therefore, the stronger notion of list replicability for RL mentioned above is a natural generalization of existing replicability definitions in MAB, and in this work, we say an RL algorithm to be {\em strongly $k$-list replicable} if such stronger notion (in terms of traces of policies) of list replicability holds. 

The central theoretical question studied in this work is whether we can design list replicable PAC RL algorithms in the tabular setting. We give an affirmative answer to this question. We note that existing algorithms can potentially generate an exponentially large number of policies (and their execution traces) for the same problem instance, and hence, new techniques are needed to achieve our goal. 

Interestingly, our theoretical investigation offers insights into addressing the instability commonly observed in practical RL algorithms. In particular, the new technical tools developed through our analysis can be integrated into existing RL frameworks to enhance their stability. %{Our new results improve the predictability of output policies and introduce a computationally efficient and robust approach to policy generation in RL. These contributions also advance the PCS (Predictability, Computability, and Stability) framework advocated by the veridical data science perspective~\cite{yu2020veridical, yu2024veridical}, and serve as stepping stones toward more trustworthy RL systems.}

Below we give a more detailed description of our theoretical and empirical contributions. 
%Our contributions are summarized as follows. 
%\vin{Please see whether the above sentence is enough.}

%\lin{maybe add a few sentence saying that all existing algorithms can only provide exponential bounds if with no modifications}

\textbf{Theoretical Contributions.} Our first theoretical result is a black-box reduction which converts any PAC RL algorithm in the tabular setting to one that is weakly $k$-list replicable with $k = O(|S|^2|A|H^2)$. Here, $|S|$ is the number of states, $|A|$ is the number of actions and $H$ is the horizon length. Due to space limitation, the description of the reduction and its analysis is deferred to Appendix~\ref{appendix:weak}.
\begin{theorem}[Informal version of Theorem~\ref{thm:weak}]\label{thm:informal_reduction}
Given a RL algorithm $\mathbb{A}(\epsilon_0, \delta_0)$ that interacts with an unknown MDP and returns an $\epsilon_0$-optimal policy with probability at least $1 - \delta_0$.  
There is a weakly $k$-list replicable algorithm (Algorithm~\ref{alg:weak}) with $k = O(|S|^2|A|H^2)$ that makes  $|S|H$  calls to $\mathbb{A}$  with 
$\epsilon_0 = \frac{\epsilon \delta}{\mathrm{poly}(|S|, |A|, H)}$ 
and $\delta_0 = \delta / (8|S||H|)$.
For any unknown MDP instance $M$, with probability at least $1 - \delta$, the algorithm returns an $\epsilon$-optimal policy $\pi \in \Pi(M)$, where $\Pi(M)$ is a list of policies that depends only on the underlying MDP $M$ with size $|\Pi(M)| = k$. 
\end{theorem}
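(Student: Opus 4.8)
The plan is to split the reduction into two decoupled phases: an \emph{estimation phase} that uses $\mathbb{A}$ as a black box to produce accurate estimates of the quantities needed for planning, and a \emph{replicable planning phase} that turns those estimates into a policy in a way that is insensitive both to the small estimation errors and to the internal randomness of $\mathbb{A}$. Since $\mathbb{A}$ returns only a policy (not value or model estimates), I would first extract the information needed by reward reshaping: for each state $s$ and step $h$, design an auxiliary reward function (e.g.\ placing unit reward on $s$ at step $h$) and call $\mathbb{A}(\epsilon_0,\delta_0)$ on the reshaped MDP, so that the value of the returned policy estimates the optimal value/reachability of $(s,h)$. Running this over all state-step pairs accounts for the $|S|H$ calls to $\mathbb{A}$, and a union bound over these calls (each failing with probability $\delta_0 = \delta/(8|S|H)$) yields, with probability at least $1-\delta/8$, estimates $\hat Q_h(s,a)$ and reachability estimates within $\eta$ of their true values, where $\eta = \mathrm{poly}(|S|,|A|,H)^{-1}\,\epsilon\delta$.

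The core of the argument is the planning phase, which I would make replicable using two independent random thresholds. First, a \emph{reachability threshold} $\tau$: I test which state-step pairs are reachable with probability exceeding $\tau$ under a near-optimal policy and restrict planning to this reachable set. Second, a \emph{planning threshold} $\rho$: at each reachable $(s,h)$, I form the set of near-optimal actions, namely those $a$ with $\hat Q_h(s,a)\ge \max_{a'}\hat Q_h(s,a')-\rho$, and select the lexicographically smallest such action. The key observation is that, when $\eta$ is much smaller than $\rho$ and $\tau$, the computed policy coincides with the policy $\pi_{\tau,\rho}$ produced by the same rule on the \emph{true} MDP; this holds unless $\rho$ or $\tau$ falls within $2\eta$ of a ``boundary value'' (a true Q-gap $Q_h^*(s,a_{(1)})-Q_h^*(s,a)$ or a true reachability level). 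Drawing $\tau,\rho$ uniformly from fixed intervals and taking a union bound over the $O(|S||A|H)$ gap boundaries and $O(|S|H)$ reachability boundaries, the probability that either threshold lands in a bad window is $O(\eta|S||A|H/(\text{interval width}))\le \delta/2$ for our choice of $\eta$. Hence, with probability at least $1-\delta$, the output equals $\pi_{\tau,\rho}$ for the true MDP.

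It remains to bound the list size and verify optimality. The list $\Pi(M):=\{\pi_{\tau,\rho}\}$ depends only on $M$, since the boundary values are determined by $M$ alone. Viewing $\pi_{\tau,\rho}$ as a function of $(\tau,\rho)$, it is piecewise constant, changing only when $\rho$ crosses one of the $O(|S||A|H)$ gap boundaries or $\tau$ crosses one of the $O(|S|H)$ reachability boundaries; a two-dimensional arrangement of this many axis-aligned breaklines has at most $O(|S|^2|A|H^2)$ cells, so $|\Pi(M)|=O(|S|^2|A|H^2)=k$. For optimality, since each selected action is within $\rho\le\rho_{\max}$ of optimal at every step, and $\tau$ is chosen so that the unreached states contribute at most $O(\tau|S|H)$ to the value loss, the per-step suboptimality telescopes to a total loss of $O(\rho_{\max}H+\tau|S|H)$, which is at most $\epsilon$ once $\rho_{\max}$ and $\tau_{\max}$ are set to $\Theta(\epsilon/H)$ and $\Theta(\epsilon/(|S|H))$; thus $\pi_{\tau,\rho}$ is $\epsilon$-optimal.

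The main obstacle I anticipate is the replicable reachability test in a \emph{stochastic} environment: unlike the planning step, where Q-gaps give clean one-dimensional boundaries, reachability of a state depends on the actions chosen at all of its predecessors, so the boundary values for $\tau$ are coupled to the outcome of the planning rule and are a priori curves in $(\tau,\rho)$-space rather than horizontal lines. Making these two randomized decisions jointly replicable---while ensuring the induced arrangement still has only $O(|S|^2|A|H^2)$ cells rather than exponentially many---is the delicate part. I expect it to require processing state-step pairs in a fixed topological order, so that within each $\rho$-strip the selected actions, and hence the reachable set, are determined layer by layer from a single global pair of thresholds; this confines the $\tau$-boundaries to be horizontal within each strip and prevents the per-state choices from multiplying across states.
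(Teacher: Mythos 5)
Your proposal follows essentially the same route as the paper's: reshape the reward to $\mathbbm{1}[h'=h,\,s'=s]$ and call $\mathbb{A}$ once per $(s,h)$ to obtain roll-in policies, truncate states whose estimated reachability falls below a uniformly random threshold $\rtrunc$, select actions lexicographically among those within a uniformly random tolerance $\raction$ of the estimated optimum, bound the list size by the product (number of truncation patterns) $\times$ (number of policies per pattern) $=(|S|H+1)(|S||A|H+1)$, and decompose the suboptimality as $O(H^2|S|\rtrunc + H^2\epsilon_0 + \raction H)$. Two remarks. First, the one step you leave underspecified is where $\hat Q_h(s,a)$ comes from: since $\mathbb{A}$ returns only a policy, the value of a reshaped call gives a reachability estimate but not $Q$-values for the original reward. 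The paper fills this in by using each roll-in policy $\hat\pi^{s,h}$ (modified to play $a$ at $(s,h)$) to collect next-state samples, forming an empirical transition model $\hat P$, and computing $\hat Q = Q^*_{\cdot,\hat M}$ by backward induction on the truncated empirical MDP; without this (or something equivalent) the planning phase has no input. Second, the obstacle you flag at the end---that the $\tau$-boundaries could be curves coupled to the planner's choices---does not arise in the weak setting: reachability is defined as $d^*_M(s,h)=\max_\pi \Pr[s_h=s\mid M,\pi]$ in the \emph{original} MDP, so the $\tau$-breakpoints are $|S|H$ fixed numbers and each level is truncated independently (this is precisely what the per-$(s,h)$ black-box calls buy you). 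The coupling that does remain runs the other way: the $Q$-gaps, hence the $\rho$-breakpoints, are computed in the truncated MDP $\truncMm{r}$ and therefore vary with the $\tau$-strip; but since there are at most $|S|H+1$ strips and at most $|S||A|H+1$ policies within each, the product bound survives. The layer-by-layer ordering you propose as a fix is the paper's solution for the \emph{strong} variant, where truncation at level $h$ genuinely feeds into reachability at level $h+1$; it is not needed here.
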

Using PAC RL algorithms in the tabular setting (e.g.~the algorithm by~\cite{kearns1998finite}) with sample complexity polynomial in $|S|$, $|A|$, $H$, $1 / \epsilon_0$ and $\log(1 / \delta_0))$ as $\mathbb{A}$, the final sample complexity of our weakly $k$-list replicable algorithm in Theorem~\ref{thm:informal_reduction} would be polynomial in $|S|$, $|A|$, $H$, $1 / \epsilon$ and $1 / \delta$. 
Compared to existing algorithms in the tabular setting, the sample complexity of our algorithm has much worse dependence on $1 / \delta$ (polynomial dependence instead of logarithm dependence), which is common for algorithms with list replicability guarantees~\citep{DPVV2023}.  
On the other hand, the list complexity $k$ of our algorithm has no dependence on $\delta$. 

Our second result is a new RL algorithm that is strongly $k$-list replicable with $k = O(|S|^3|A|H^3)$. 
\begin{theorem}[Informal version of Theorem~\ref{thm:full}]\label{thm:informal_main}
There is a strongly $k$-list replicable algorithm (Algorithm~\ref{alg:full}) with $k = O(|S|^3|A|H^3)$, such that for any unknown MDP instance $M$, with probability at least $1 - \delta$, the algorithm returns an $\epsilon$-optimal policy, and
the sequence of policies executed by the algorithm and the returned policy lies in a list with size $k$ that depends only on $M$. 
Moreover, the sample complexity of the algorithm is polynomial in $|S|$, $|A|$, $H$, $1 / \epsilon$, $1 / \delta$. 
\end{theorem}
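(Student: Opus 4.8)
The plan is to build a white-box, layered algorithm so that not only the final policy but every policy executed along the way is pinned down by a small amount of internal randomness, and then to bound how many distinct behaviors that randomness can induce. Because strong list replicability constrains the entire trace, a black-box reduction like the one behind Theorem~\ref{thm:weak} will not suffice: I need every exploration policy to itself be replicable. I would process the horizon layer by layer, $h=1,\ldots,H$, maintaining at each layer a set of states declared reachable and folding every state declared unreachable into a single absorbing state $\absorbs$, producing a truncated MDP $\truncM{h}$ (with transitions $\truncP{h}$ and modified reward $\rtrunc$). The purpose of truncation is that the agent only ever explores and plans over states it can actually reach with non-negligible probability, which keeps both the estimation problem and the set of executed policies under control.

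The two replicable primitives are a planning rule and a reachability test. For planning I would draw a single random tolerance $\theta$ from an interval of width $O(\epsilon)$ and, given estimated $Q$-values on $\truncM{h}$, define the policy at each state to take the lexicographically smallest action whose estimated value is within $\theta$ of the estimated optimum. The key structural fact is that, as $\theta$ sweeps its interval, the induced policy $\pi_\theta$ changes only at the finitely many critical thresholds $\minr{s}{h}$ at which two action values cross the $\theta$-boundary; there are only $O(|S||A|H)$ of these, so $\{\pi_\theta\}$ is a polynomial-size list. I would then show that, with high probability over both $\theta$ and the samples, $\theta$ avoids all the ``bad'' intervals $\ball{\minr{s}{h}}{\xi}$ of radius $\xi$ comparable to the estimation error (an anti-concentration argument on the uniform draw of $\theta$ using the minimum $\gap$), so the estimated policy coincides with the population policy $\pi_\theta$; the failure is the event $\badaction$. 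This is what upgrades ``near-optimal choice'' into a \emph{replicable} choice.

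For reachability I would, at each layer, test for every candidate next-layer state whether the maximum probability of reaching it (maximized over policies computed by the same lexicographic planning rule on the current model) exceeds a threshold, using a replicable Bernoulli/best-arm test $\BestArm$ with its own randomized threshold so that borderline states are resolved identically across runs; the failure event here is $\badtrunc$. The crucial subtlety is the recursion: reachability at layer $h+1$ depends on the policy executed through layer $h$, which in turn depends on reachability decisions at earlier layers, so replicability must be established by induction on $h$, carrying the invariant that the declared reachable set, the estimated truncated model $\truncMm{h}$, and hence every exploration policy are identical across runs with high probability. I would then argue that collapsing unreachable states is harmless: any state reached with probability below the threshold contributes at most $O(\epsilon/H)$ to any policy's value, so over $H$ layers the truncated optimum is within $\epsilon$ of the true optimum and the returned $\pi_\theta$ is $\epsilon$-optimal outside the bad events. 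Both $\Pr[\badaction]$ and $\Pr[\badtrunc]$ are pushed below $\delta$ by taking $\mathrm{poly}(|S|,|A|,H,1/\epsilon,1/\delta)$ samples, which is where the polynomial-in-$1/\delta$ sample complexity (rather than logarithmic) enters.

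Finally I would assemble the list complexity. The full trace is determined by the sequence of declared reachable sets together with $\theta$; tracking how many distinct such sequences the random parameters can induce across all $H$ layers---each layer branching from its $O(|S||A|)$ critical thresholds and $O(|S|)$ reachability tests---a careful accounting yields the claimed $k=O(|S|^3|A|H^3)$, one factor of $|S|H$ larger than the weak bound of Theorem~\ref{thm:weak} precisely because the trace records a policy per layer rather than only the final policy. I expect the genuinely hard part to be the reachability step in the stochastic setting: unlike a deterministic environment, ``is $s$ reachable'' has no exact answer, the optimal reaching policy must itself be estimated and made replicable, and a single inconsistent reachability decision at an early layer can cascade into an entirely different truncated model $\truncMm{h}$ and trace downstream. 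Controlling this requires coupling the random threshold of the reachability test with the planning tolerance $\theta$ so that the inductive ``identical reachable set'' invariant survives all $H$ layers simultaneously within the budgeted failure probability.
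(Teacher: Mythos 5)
Your high-level architecture matches the paper's: a layer-by-layer sweep that folds low-reachability states into an absorbing state, a lexicographic planner with a uniformly random tolerance, a randomized reachability threshold, and an induction carrying the invariant that the truncation sets, the estimated truncated model, and hence every roll-in policy are pinned down across runs. The suboptimality decomposition and the failure-probability bookkeeping are also essentially the paper's (Lemmas~\ref{lemma:actionall}, \ref{lemma:MMr}, \ref{lem:main}).

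The genuine gap is in the list-complexity count, and it sits exactly where you write ``a careful accounting yields the claimed $k$.'' The trace is determined by the sequence of truncation sets $(\unreach{0}{r},\ldots,\unreach{H-1}{r})$ together with the planning tolerance, and your description --- ``each layer branching from its $O(|S|)$ reachability tests'' --- naively gives a product over layers, i.e.\ up to $|S|^{H}$ distinct truncation sequences: whether $s$ is removed at layer $h$ depends on which states were removed at layers $h'<h$, so a priori the removal pattern at layer $h$ need not be a simple threshold in $r$ (a state could in principle enter and leave $\unreach{h}{r}$ as $r$ increases). The paper's key structural result, which your proposal neither states nor proves, is the monotonicity of Lemma~\ref{lem:inclusion}: if $r_1\le r_2$ then $\unreach{h}{r_1}\subseteq\unreach{h}{r_2}$ for every $h$ simultaneously, because more truncation at earlier layers can only depress reaching probabilities at later layers and hence only cause more truncation there. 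This is what makes each $(s,h)$ contribute a single critical threshold $\minr{s}{h}$ (Corollary~\ref{coro:critical}) and collapses the number of distinct sequences $U(r)$ to $|S|H+1$ (Corollary~\ref{coro:num_U}); multiplying by the $O(|S|^2|A|H^2)$ action gaps accumulated over the $|S|H$ per-state roll-in planning problems then yields $O(|S|^3|A|H^3)$. Your proposed remedy for the cascading dependency --- ``coupling the random threshold of the reachability test with the planning tolerance'' --- only addresses within-run high-probability consistency; it does not bound the number of distinct truncation sequences achievable across admissible thresholds, and without the monotonicity argument the polynomial bound on $k$ does not follow.
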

Our second result shows that, perhaps surprisingly, even under the more stringent definition of list replicability, designing RL algorithm in the tabular setting with polynomial sample complexity and polynomial list complexity is still possible. The description of Algorithm~\ref{alg:full} is given in Section~\ref{sec:full}.

Finally,  we prove a hardness result on the list complexity of weakly replicable RL algorithm in the tabular setting, completing our new algorithms. 

\begin{theorem}[Informal version of Theorem~\ref{thm:hardness}]\label{thm:informal_hardness}
For any weakly $k$-list replicable RL algorithm that returns an $\epsilon$-optimal policy with probability at least $1 - \delta$, we have $k \geq\frac{|S| |A| (H-\lceil\log_{|A|} |S|\rceil-3)}{3}$ as long as $\epsilon \le \frac{1}{2|S| |A| H}$ and $\delta \le \frac{1}{|S| |A| H+1}$.
\end{theorem}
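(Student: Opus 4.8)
The plan is to proceed adversarially: fix an arbitrary weakly $k$-list replicable algorithm $\mathbb{A}$ that returns an $\epsilon$-optimal policy with probability at least $1-\delta$, and exhibit a single MDP instance $M$ (constructed with knowledge of $\mathbb{A}$) on which any valid list $\Pi(M)$ is forced to have size at least $\frac{|S||A|(H-\lceil\log_{|A|}|S|\rceil-3)}{3}$. First I would build the instance from two parts. The first $\lceil\log_{|A|}|S|\rceil$ steps form a deterministic complete $|A|$-ary ``addressing tree'' whose leaves are the $|S|$ real states, so that a policy can route the agent to any chosen state; this routing layer is exactly what the $\lceil\log_{|A|}|S|\rceil$ term (and the small additive slack $-3$ for the root and absorbing-state bookkeeping) accounts for. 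The remaining $L:=H-\lceil\log_{|A|}|S|\rceil-3$ steps form the ``productive'' horizon, and at each of the $|S|\cdot L$ pairs (state $s$, productive step $h$) I place an independent $|A|$-action reward gadget. A policy is then essentially a choice of one action at each of these $|S|\cdot L$ slots, and the target bound is precisely (number of slots)$\,\times |A|/3$.

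Next I would calibrate the rewards so that each slot is individually pivotal at the $\epsilon$-scale. I would give every gadget reward magnitude $\Theta(1/(|S|L))$ so that the per-slot contribution to the return is comparable to $\epsilon$; the hypothesis $\epsilon\le \frac{1}{2|S||A|H}$ guarantees that a ``wrong'' choice at even a single slot can push a policy out of the $\epsilon$-optimal set, so correctness genuinely constrains the action at essentially every slot. The crucial design is to make the per-slot decision unresolvable in a way no amount of sampling can cure: within each gadget I would use stochastic rewards tuned to a critical symmetric configuration (a ``critical-coin'' tie) so that several actions have values that coincide with, or straddle, the $\epsilon$-optimality threshold at the true parameters. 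Because $\mathbb{A}$ cannot certify which action is safe, its committed action at a slot must fluctuate across independent runs, and a naive lexicographic tie-break does not rescue replicability, since the instance is chosen (adaptively to $\mathbb{A}$'s effective resolution and threshold behaviour) so that the lex-relevant policies sit exactly on the unresolvable boundary. This is where $\delta\le \frac{1}{|S||A|H+1}$ enters: being inverse in the total slot count (up to constants), it lets a union bound forbid $\mathbb{A}$ from simply erring on more than a negligible fraction of slots, so the spreading of the output cannot be hidden inside the failure probability.

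With the instance fixed, I would run a counting/hitting-set argument. For each slot I would show that, to remain $\epsilon$-correct with probability $1-\delta$ against the unresolvable gadget, the output distribution of $\mathbb{A}$ must place non-negligible mass on policies realizing at least $|A|/3$ distinct actions at that slot; equivalently, any set capturing $1-\delta$ of the output mass must ``hit'' at least $|A|/3$ local action-choices per slot. Since the $|S|\cdot L$ gadgets are independent and a single policy in the list witnesses only one action per slot, I would charge each required (slot, action-class) pair to a distinct member of the list and sum, obtaining $|\Pi(M)|\ge \frac{|S|\,|A|\,L}{3}=\frac{|S||A|(H-\lceil\log_{|A|}|S|\rceil-3)}{3}$. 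The constants ($-3$ in the horizon, the factor $\frac13$, and the factor $|A|$) are exactly the slack absorbed by the routing layer, the critical-coin calibration, and the per-slot action count.

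The hard part will be the middle step: establishing genuinely sample-budget-free unresolvability and converting it into a \emph{linear} lower bound. I must argue that against an adversarially chosen instance neither additional sampling nor a randomized tolerance threshold (the very device behind the matching upper bound) lets the algorithm concentrate below the stated size, while simultaneously preventing the bound from collapsing to $O(1)$ (which a single exactly-tied gadget with lexicographic tie-breaking would permit) or exploding to $|A|^{\Omega(|S|L)}$ (which a joint, rather than per-slot, accounting would wrongly suggest). Getting the calibration to yield exactly the per-slot ``$|A|/3$ distinct actions'' guarantee, uniformly over all algorithms and in a form that aggregates \emph{additively} across the $|S|\cdot L$ slots under the stated $\epsilon$ and $\delta$ budgets, is the technical crux; the tree-addressing reduction and the pivotality bookkeeping are comparatively routine.
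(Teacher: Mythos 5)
Your construction shares the paper's skeleton (an $|A|$-ary addressing tree feeding $|S|\cdot L$ ``slots'' with $L = H-\lceil\log_{|A|}|S|\rceil-3$, each slot carrying an $|A|$-action stochastic gadget), but the way you extract the lower bound from it has a genuine gap, and it is precisely at the step you flag as the crux. Your plan is to show that at each slot the list must realize at least $|A|/3$ distinct actions, and then to ``charge each required (slot, action-class) pair to a distinct member of the list.'' That charging is invalid: a single policy in $\Pi(M)$ assigns an action to \emph{every} slot simultaneously, so one policy can discharge one (slot, action) requirement at all $|S|\cdot L$ slots at once. A list of only $\lceil |A|/3\rceil$ policies (policy $j$ playing action $j$ at every slot) already meets your per-slot diversity requirement everywhere, so additive accounting over slots yields only $\Omega(|A|)$, not $\Omega(|S||A|L)$. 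To recover the product bound you must arrange the MDP so that each deterministic policy is forced to interact with exactly \emph{one} slot --- this is what the paper does: the agent self-loops at $s_0$ for $i$ steps, descends the tree to a single key state $q_j$ at a single key layer $h_i$, plays one action $a_\ell$ there, and then falls into absorbing states. Under that design the policy space collapses onto the set of $k=|S||A|L/3$ arms, and the list lower bound is inherited from a single $k$-armed bandit instance rather than assembled slot by slot.

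The second issue is that your ``sample-budget-free unresolvability'' of each gadget --- the claim that no algorithm, with unbounded samples, can concentrate its output on fewer than $\Theta(|A|)$ actions per critical-coin gadget --- is exactly the hard content, and you leave it unproved. The paper does not prove it either; it imports it as a black box, namely the list-complexity lower bound for $(k,\epsilon,\delta)$-\textsc{BestArm} from \citet{chen2025regret} (Lemma~\ref{lem:bestarmlowerbound}), which states that for $\epsilon\le\frac{1}{2k}$ and $\delta\le\frac{1}{k+1}$ no $(k-1)$-list replicable best-arm algorithm exists. The whole RL argument is then a one-to-one reduction: an $\epsilon$-optimal policy in the constructed MDP identifies an $\epsilon$-optimal arm, so a weakly $\ell$-list replicable RL algorithm with $\ell<k$ would contradict that lemma. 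If you want to pursue your route, you should either invoke this MAB lower bound directly (at which point your construction reduces to the paper's) or supply a self-contained proof of the per-gadget statement \emph{and} restructure the MDP so policies and arms are in bijection; as written, neither piece is in place.
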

Theorem~\ref{thm:informal_hardness} shows that the list complexity of any weakly $k$-list replicable algorithm is $\Omega(SAH)$, provided that its suboptimality and failure probability are both at most $O(1 / (SAH))$.
Theorem~\ref{thm:informal_hardness} is proved by a reduction from RL to the MAB  and known list complexity lower bound for MAB~\citep{chen2025regret}. Its formal proof can be found in Appendix~\ref{sec:hardness}. 

\textbf{Empirical Contributions.}
We further show that our robust planner (presented in Section~\ref{sec:planning}), one of our new technical tools for establishing Theorem~\ref{thm:informal_reduction} and Theorem~\ref{thm:informal_main}, can be incorporated into  practical RL frameworks to enhance their stability. The empirical findings are presented in Section~\ref{sec:exp}.

\section{Related Work}
%\ruosong{ @Lin please help finish this part. Seems like not much work to discuss here. Maybe explain why other algorithmic frameworks (like UCB, RMAX, OLIVE) would fail?}
%\lin{To add Amin, Littleman results.}
%Amin is \cite{KarbasiV0023} Littman is \cite{SLL2009} RMAX is \cite{BT2002} OLIVE is \cite{JKALS2017} UCB is \cite{Auer2002} -- The tabular MDP results. -- Replicable bandit and RL results -- Recent list replicable results. \vspace{1cm}
There is a long line of research dedicated to understanding the complexity of reinforcement learning by studying learning in a Markov Decision Process (MDP). One well-established setting is the \textit{generative model}, which abstracts away exploration challenges by assuming access to a simulator that allows sampling from any state-action pair. A number of works \citep{kearns1998finite,pananjady2020instance,kakade2003sample,azar2013minimax,agarwal2020model,wainwright2019variance,wainwright2019stochastic,sidford2018near,sidford2018variance,li2020breaking,li2023q,li2022minimax,even2003learning,shi2023curious,beck2012error,cui2021minimax,sidford2018variance,wainwright2019variance,azar2013minimax,agarwal2020model} have established near-optimal sample complexity bounds for learning a policy in this regime. Specifically, to learn an $\epsilon$-optimal policy with high probability, the statistically optimal sample complexity is of the order $\mathrm{poly}(|S|, |A|, H, 1/\epsilon)$, where $H$ denotes the horizon or the effective horizon of the environment. 
These algorithms generally fall into two categories: those that estimate the probability transition model and those that directly estimate the optimal $Q$-function. However, due to the inherent randomness in sampling, these approaches do not guarantee \textit{list-replicable} policies—each independent execution of the algorithm may return a different policy, potentially leading to an exponentially large set of output policies.

In contrast, the online RL setting—where there is no access to a generative model—has seen significant progress over the past decades in optimizing sample complexity. Notable contributions include \citep{kearns1998near,BT2002,kakade2003sample,SLL2009,Auer2002, strehl2006pac,strehl2008analysis,kolter2009near,bartlett2009regal,jaksch2010near,szita2010model,lattimore2012pac,osband2013more,dann2015sample,agralwal2017optimistic,dann2017unifying,jin2018q,efroni2019tight,fruit2018efficient,zanette2019tighter,cai2019provably,dong2019q,russo2019worst,neu2020unifying,zhang2020almost,zhang2020reinforcement,tarbouriech2021stochastic,xiong2021randomized,menard2021ucb,wang2020long,li2021settling,li2021breaking,domingues2021episodic,zhang2022horizon}. These works typically evaluate algorithmic performance within the regret framework, comparing the accumulated reward of an algorithm against that of an optimal policy. When adapted to the Probably Approximately Correct (PAC) RL framework, these results imply a sample complexity of $\mathrm{poly}(|S|, |A|, H, 1/\epsilon)$ to learn an $\epsilon$-optimal policy with high probability. 
To achieve a balance between exploration and exploitation, the aforementioned algorithms generally follow a common iterative framework—maintaining a policy and refining it as new data is collected. For example, UCB-type algorithms (e.g., \cite{jin2018q}) maintain an approximate $Q$-function and leverage an upper-confidence bound to guide data collection. However, due to the iterative updates of these algorithms, they inherently fail to achieve polynomial complexity in either the strong or the weak notion of list replicability, as policies are likely to change at each iteration, and small stochastic error could have significant impact on the policies executed by the algorithm. 

Recent studies have begun exploring \textit{replicable reinforcement learning}. \citep{KarbasiV0023,EatonHKS23} examined $\rho$-replicability, as defined in \citep{ILPS2022}. Intuitively, $\rho$-replicability ensures that two executions of the same algorithm, when initialized with the same random seed, yield the same policy with probability at least $1-\rho$. Meanwhile, $(k,\delta)$-weak list replicability requires that an algorithm consistently outputs a policy from a fixed list of at most $k$ policies with probability at least $1-\delta$. However, a $\rho$-replicable algorithm may still generate an exponentially large number of distinct policies, as each seed may correspond to a different output policy. Thus, such algorithms may still suffer from exponential weak (or strong) list complexity. \citep{EKKKMV2023} further studied the Multi-Armed Bandit (MAB) problem under $\rho$-replicability, where two independent executions of a $\rho$-replicable MAB algorithm, sharing the same random string, must follow the same sequence of actions with probability at least $1-\rho$.

Beyond the above frameworks, there is a growing body of work studying replicability and closely related stability notions in classical learning theory. \citet{chase2023replicability} introduce global stability, a seed-independent variant of replicability, and clarify its relationship to classical notions of algorithmic stability. \citet{bun2023stability} further show that several such stability notions are essentially equivalent and develop general ``stability booster'' constructions that yield replicable algorithms from non-replicable ones, revealing tight connections to differential privacy and adaptive data analysis. More recently, \citet{kalavasis2024computational} investigate the computational landscape of replicable learning, identifying settings where efficient replicable algorithms provably do not exist, while \citet{blondal2025stability} study stability and list replicability in the agnostic PAC setting and prove sharp trade-offs between excess risk, stability, and list size. Our results are complementary to this line of work: we focus on control problems rather than supervised learning, and we explicitly track the list complexity of both output policies and execution traces in tabular RL, showing that nontrivial list-replicability guarantees are achievable with polynomial sample complexity.

In the online learning setting, the only known work addressing list replicability is by \citet{chen2025regret}, who studied the concept in the context of Multi-Armed Bandits (MAB).
The authors define an MAB algorithm as $(k,\delta)$-list replicable if, for any MAB instance, there exists a list of at most $k$ action traces such that the algorithm selects one of these traces with probability at least $1-\delta$. Our definition of \textit{strong list replicability} for RL naturally extends this notion to RL. However, due to the long-horizon nature of RL, achieving list replicability in RL presents significantly greater challenges.

Concurrent to our work, \citet{hopkins2025generativeepisodic} study sample-efficient replicable RL in the tabular setting. Their algorithms also stably identify a set of ignorable states and then perform backward induction using data collected from the remaining states, which is conceptually similar to our use of robust planning on non-ignorable states. However, they focus on fully replicable algorithms (a single policy that reappears with high probability), without explicitly analyzing the induced list size, whereas we design algorithms with explicit $(k,\delta)$-list-replicability guarantees while retaining near-optimal sample complexity.

\section{Preliminaries}

\textbf{Notations.}
For a  positive integer \(N  \), we use \( [N] \) to denote  \( \{ 0, 1, \dots, N - 1 \} \). For a condition \( \mathcal{E} \), we use \( \mathbbm{1}[\mathcal{E}] \) to denote the indicator function, i.e., \(  \mathbbm{1}[\mathcal{E}] = 1 \) if \( \mathcal{E} \) holds and \(  \mathbbm{1}[\mathcal{E}] = 0 \) otherwise.
For a real number $x$ and $\epsilon \ge 0$, we use $\ball{x}{\epsilon}$ to denote $[x - \epsilon, x + \epsilon]$. 
For two real numbers $a < b$, we use $\mathrm{Unif}(a, b)$ to denote the uniform distribution over $(a, b)$.

\textbf{Markov Decision Process.}
Let \( M = (S, A, P, R, H, s_0) \) be a Markov Decision Process (MDP).
Here, \( S \) is the state space, and \( A =\{1, 2, \ldots, |A|\} \) is the action space.
\( P = (P_h)_{h \in [H]}\), where for each $h \in [H]$,  \(P_h : S \times A \to \Delta(S) \) is the transition model at level $h$ which maps a state-action pair to a distribution over states.
\( R = (R_h)_{h \in [H]}\), where for each $h \in [H]$, \( R_h : S \times A \to [0, 1] \) is the deterministic reward function at level $h$. 
 \( H \in \mathbb{Z}^+ \) is the horizon length, and \( s_0 \in S \) is the initial state. 
 We further assume that the reward functions \( R = (R_h)_{h \in [H]}\) are known. \footnote{For simplicity, we assume deterministic rewards and the initial state, and known reward function. Our algorithms can be easily extended to handle stochastic rewards and initial state, and unknown rewards distributions. }

A (non-stationary) policy $\pi$ chooses an action $a \in A$ based on the current state $s \in S$ and the time
step $h \in [H]$. Formally, 
 \( \pi = \{\pi_h\}_{h=0}^{H-1} \) where for each \( h \in [H] \), \( \pi_h : S \to A \) maps a given state to an action. The policy \( \pi \) induces a (random) trajectory $s_0, a_0, r_0, s_1, a_1, r_1, \ldots, s_{H - 1}, a_{H - 1}, r_{H - 1}$,
where for each $h \in [H]$, $a_h = \pi_h(s_h)$, $r_h = R_h(s_h, a_h)$ and $s_{h + 1} \sim P_h(s_h, a_h)$ when $h < H - 1$.

\textbf{Interacting with the MDP.}
In RL, an agent interacts with an unknown MDP. 
In the online setting, in each episode, the agent decides a policy $\pi$, observes the
induced trajectory,
and proceeds to the next episode. 
In the generative model setting, in each round, 
the agent is allowed to choose a state-action pair $(s, a) \in S \times A$ and a level $h \in [H]$,
and receives a sample drawn from $P_h(s, a)$ as feedback.

\textbf{Value Functions and $Q$-Functions.}
For an MDP $M$, given a policy \( \pi \), a level \( h \in [H] \) and  \( (s, a) \in S \times A \), the $Q$-function is defined as
$Q^\pi_{h,M}(s, a) = \mathbb{E} \left[ \sum_{h' = h}^{H - 1} r_{h'} \mid s_h = s, a_h = a, M, \pi \right]$, and the value function is defined as
$V^\pi_{h,M}(s) = \mathbb{E} \left[ \sum_{h' = h}^{H - 1} r_{h'} \mid s_h = s, M, \pi \right]$.
We denote \( Q^*_{h,M}(s, a) = Q^{\pi^*}_{h,M}(s, a) \) and \( V^*_{h,M}(s) = V^{\pi^*}_{h,M}(s) \) where $\pi^*$ is the optimal policy.
We also write  $V^{*}_M =V^{*}_{0, M}(s_0)$ and $V^{\pi}_M =V^{\pi}_{0, M}(s_0)$ for a policy $\pi$. 
We may omit $M$ from the subscript of value functions and $Q$-functions when $M$ is clear from the context (e.g., when $M$ is the underlying MDP that the agent interacts with). 
We say a policy $\pi$ to be $\epsilon$-optimal if $V^{\pi} \ge V^* - \epsilon$.

The goal of the agent is to return a near-optimal policy $\pi$ after interacting with the unknown MDP $M$ by executing a sequence of policies (or by querying the transition model in the generative model).

\textbf{Further Notations. }
For an MDP $M$, define the occupancy function $d^{\pi}_M(s, h) = \Pr[s_h = s \mid M, \pi]$ and $d^*_M(s, h) = \max_{\pi} \Pr[s_h = s \mid M, \pi]$. 
We may omit $M$ from the subscript of $d^{\pi}_M(s, h)$ and $d^*_M(s, h)$ when $M$ is clear from the context. 
For an MDP $M$, we write 
\begin{equation}\label{equ:gapm}
\gap_M =  \{V^*_{h, M}(s) - Q^*_{h, M}(s, a) \mid (s, a) \in S \times A, h \in [H]\}.
\end{equation}

Two MDPs \( M_1 \) and \( M_2 \) are said to be  {\em \(\epsilon\)-related} if \( M_1 \) and \( M_2 \)  share the same state space \( S \), action space \( A \), reward function and initial state, and for all $(s, a) \in S \times A$ and $h \in [H - 1]$, 
    \begin{equation}\label{equ:epsilon_related}
    \sum_{s' \in S} \left| P^{M_1}_h(s' \mid s, a) - P^{M_2}_h(s' \mid s, a) \right| \leq \epsilon
    \end{equation}
    where $P^{M_1}_h$ is the transition model of $M_1$ at level $h$ and $P^{M_2}_h$ is that of $M_2$ at the same level. 

\textbf{List Replicability in RL. }
We now formally define the notion of list replicability of RL algorithms in the online setting. 
For an RL algorithm $\mathbb{A}$, we say $\mathbb{A}$ to be {\em weakly $(k, \delta)$-list replicable}, 
if for any MDP instance $M$, 
there is a list of policies $\Pi(M)$ with cardinality at most $k$, 
so that $\Pr[\pi \in \Pi(M)] \ge 1 - \delta$, where $\pi$ is the (supposedly) near-optimal policy returned by $\mathbb{A}$ when interacting with $M$.

For an RL algorithm $\mathbb{A}$, we say $\mathbb{A}$ to be {\em strongly $(k, \delta)$-list replicable}, 
if for any MDP instance $M$, 
there is a list $\mathrm{Trace}(M)$ with cardinality at most $k$, 
so that $\Pr[((\pi_0, \pi_1, \ldots ),  \pi)  \in \mathrm{Trace}(M)] \ge 1 - \delta$, where  $(\pi_0, \pi_1, \ldots)$ is the (random) sequence of policies executed by $\mathbb{A}$ when interacting with $M$ and $\pi$ is the (supposedly) near-optimal policy returned by $\mathbb{A}$ when interacting with $M$.

 \section{Overview of New Techniques}\label{sec:tech}

In this section, we discuss the techniques for establishing Theorem~\ref{thm:informal_reduction} and Theorem~\ref{thm:informal_main}.

\textbf{The Robust Planner.} To motivate our new approach, consider the following simple MDP instance for which most existing RL algorithms would fail to achieve polynomial list complexity. 
There is a state $s_h$ at each level $h \in [H]$, and the action space is $\{a_1, a_2\}$.
%There is an additional absorbing state $\absorbs$, and choosing any action at $\absorbs$ will transit back to the absorbing state itself. 
At level $h$, if $a_i$ is chosen, $s_h$ transitions to $s_{h + 1}$ with an unknown probability $p_{h, i}$, otherwise $s_h$ transitions to an absorbing state. 
%the unknown probability of transiting from $s_{h}$ to $s_{h + 1}$ by choosing action $a_i$ would be $p_{h, i}$  ($i \in \{1, 2\}$), 
The agent receives a reward of $1$ at the last level.
For this instance, if $|p_{h, 1} - p_{h, 2}|= \exp(-H)$, then for all $h \in [H]$, no RL algorithm could differentiate $p_{h, 1}$ and $p_{h, 2}$ unless we draw an exponential number of samples.
%, and for the estimated transition probabilities $\hat{p}_{h, 1}$ and $\hat{p}_{h, 2}$, we could have either $\hat{p}_{h, 1} > \hat{p}_{h, 2}$ or $\hat{p}_{h, 1} < \hat{p}_{h, 2}$ due the estimation error. 
Therefore, if the RL algorithm simply returns a policy by maximizing the estimated optimal $Q$-values for each $s_h$, then we would choose either $a_1$ or $a_2$, and hence, there could be $2^H$ different policies returned by the algorithm. 
As most existing RL algorithms choose actions by maximizing the estimated $Q$-values, they would all fail to achieve polynomial list complexity even for this simple instance. 
This also explains why existing RL algorithms tend to be unstable and sensitive to noise. 

To better understand our new approach, let us first consider the simpler generative model setting. % where exploration is not a concern. 
Standard analysis shows that by taking sufficient samples for all $(s, a) \in S \times A$ and $h \in [H]$ to build the empirical model $\hat{M}$, we would have $|\hat{Q}_h(s, a) - Q^*_{h, M}(s, a)|\le \epsilon_0$ for all $(s, a) \in S \times A$ and $h \in [H]$.
Here, $\hat{Q}_h(s, a) = Q^*_{h, \hat{M}}(s, a)$ is the estimated $Q$-value, and $\epsilon_0$ is a statistical error that can be made arbitrarily small by drawing more samples.
Now, for a given state $s$ and level $h$, instead of choosing an action by maximizing $\hat{Q}_h(s, a)$, we go through all actions in a fixed order $1, 2, \ldots |A|$, and choose the lexicographically first action $a$  so that $\hat{Q}_h(s, a) \ge \max_a \hat{Q}_h(s, a) - \raction$, where $\raction$ is a tolerance parameter drawn from the uniform distribution.  
%Conceptually, our algorithm treats all actions $a$ with $\hat{Q}_h(s, a) \ge \hat{V}_h(s) - \raction$ as being equivalent, and chooses the lexicographically smallest one among them.
% Therefore, our new planning algorithm could be more robust to stochastic noise than simply maximizing the estimated $Q$-values.

Now we show that our new approach achieves small list complexity. % in the generative model.
%In our analysis, we condition on the event $|\hat{Q}_h(s, a) - Q^*_h(s, a)| \le \epsilon_0$ for all $(s, a) \in S \times A$ and $h \in [H]$ which holds with high probability by standard analysis. 
The main observation is the that, for a fixed tolerance parameter $\raction$, if difference between $\raction$ and 
$\gap_h(s, a) = V^*_h(s) - Q^*_h(s, a)$
satisfies $\raction \notin \ball{\gap_h(s, a)}{2\epsilon_0}$ for all $(s, a) \in S \times A$ and $h \in [H]$,  then the returned policy will always be the same regardless of the estimation errors.
To see this, for an action $a$, if $\raction \notin \ball{\gap_h(s, a)}{2\epsilon_0}$, 
then whether $\hat{Q}_h(s, a) \ge \hat{V}_h(s) - \raction$ or not will always be the same regardless of the stochastic noise as long as  $|\hat{Q}_h(s, a) - Q^*_h(s, a)| \le \epsilon_0$. 
Since we always choose the lexicographically first action $a$ satisfying  $\hat{Q}_h(s, a) \ge \hat{V}_h(s) - \raction$, 
the action chosen for $s$ will always be the same. 
Equivalently, by defining $\badaction = \bigcup_{h, s, a} \ball{\gap_h(s, a)}{2\epsilon_0}$, the returned policy will always be the same so long as $\raction \notin \badaction$. 
By drawing $\raction$ from the uniform distribution over $(0, 2HSA\epsilon_0 / \delta)$, 
%for a fixed state-action pair $(s, a)$ and level $h$, 
we would have $\Pr[\raction \notin \badaction] \ge1 -  \delta$. %, in which case the returned policy will always be the same under the same tolerance parameter $\raction$. 
%\lin{I guess you wanted to say that $a$ being selected or not will be the same, so by the lexigraphical selection, the policy will be the same ...}
Moreover, for two tolerance parameters $\raction^1, \raction^2 \notin \badaction$,  if for all $(s, a) \in S \times A$ and $h \in [H]$ we have either $\raction^1 < \raction^2 < \gap_h(s, a)$ or $\gap_h(s, a) < \raction^1 < \raction^2$, then the returned policy will also be the same no matter $\raction = \raction^1$ or $\raction = \raction^2$. 
Since there are at most $|S||A|H + 1$ different values for $\gap_h(s, a)$ for the underlying MDP $M$,
there could be at most $|S||A|H + 1$ different policies returned by our algorithm as long as $\raction \notin \badaction$.
% and  $|\hat{Q}_h(s, a) - Q^*_{h, M}(s, a)|\le \epsilon_0$ for all $(s, a) \in S \times A$ and $h \in [H]$. 
Finally, the suboptimality of the returned policy can be easily shown to be $O(H \cdot \raction)$ .
%To summarize, in the generative model, our robust planning algorithm returns an $\epsilon$-optimal policy that lies in a list with size $O(|S| |A| H)$
% %\lin{there is a bit jump here, shall we say that given a statistical accuracy level, for all possible $r$, the number of different policies is at most $SAH$, regardless how we choose $r$} 
% with probability at least $1 - \delta$, while the sample complexity is  polynomial in $|S|$, $|A|$, $H$, $1 / \epsilon$ and $1 / \delta$. 

%In the generative model setting, an alternative approach to achieve list replicability guarantees  is to use the robust mean estimation algorithm by~\cite{DPVV2023} to estimate the transition model.  
%Compared to such an approach, our robust planning algorithm has unique properties that better fit the needs of our algorithm with strongly $k$-list replicable guarantees in the online setting,  as will be detailed later in this section.

\textbf{Weakly $k$-list Replicable Algorithm in the Online Setting.}
Our algorithm in the online setting with weakly $k$-list replicable guarantee is based on building a policy cover~\citep{jin2020reward}. % a common approach for designing PAC RL algorithm in the tabular setting. 
Given a black-box RL algorithm, for each $(s, h) \in S \times [H]$, we set the reward function to be $R^{s, h}_{h'}(s', a) = \mathbbm{1}[s' = s, h = h']$, invoke the black-box RL algorithm with the modified reward function, and set the returned policy to be $\hat{\pi}^{s, h}$. 
Since $\hat{\pi}^{s, h}$ is an $\epsilon$-optimal policy, we have $d^{\hat{\pi}^{s, h}}(s, h) \ge d^{*}(s, h) -\epsilon$. 
At this point, one could use $\hat{\pi}^{s, h}$ to collect samples and estimate the transition model $P_h(s, a)$, and return a policy by invoking the robust planning algorithm mentioned above. 
The issue is that there could be some $(s, h) \in S \times [H]$ unreachable for any policy $\pi$, i.e., $d^*(s, h)$ is small. For those $(s, h)$, it is impossible to estimate the transition model $P_h(s, a)$ accurately. On the other hand, 
%in order to achieve the list replicability guarantee, 
our robust planning algorithm requires $|\hat{Q}_h(s, a) - Q^*_h(s, a)| \le \epsilon_0$ for all $(s, a) \in S \times A$ and $h \in [H]$.%, which is impossible to achieve given those unreachable  $(s, h)$.

To tackle the above issue, we use an additional truncation step to remove unreachable states.
For each $(s, h)\in S \times [H]$, we first use the roll-in policy $\hat{\pi}^{s, h}$ to estimate the probability of reaching $s$ at level $h$. 
If the estimated probability is small, it would be clear that $d^{*}(s, h)$ is also small as $d^{\hat{\pi}^{s, h}}(s, h) \ge d^{*}(s, h) -\epsilon$, so that $(s, h)$ can be removed from the MDP. On the other hand, implementing the above truncation step na\"ively would significantly increase the list complexity of our algorithm as the returned policy depends on the set of $(s, h) \in S \times [H]$ being removed. 
%, and the estimated reaching probabilities also have statistical errors. Moreover, the policy $\hat{\pi}^{s, h}$ returned by the black-box algorithm is only guaranteed to satisfy $d^{\hat{\pi}^{s, h}}(s, h) \ge d^{*}(s, h) -\epsilon$, while the exact value of $d^{\hat{\pi}^{s, h}}(s, h) $ could vary. 
%Therefore, a list-replicable mechanism for testing the reachability of state-level pairs $(s, h)$ is needed.
Here, we use an approach similar to the robust planning algorithm mentioned earlier. 
We use a randomly chosen reaching probability truncation threshold $\rtrunc$ drawn from the uniform distribution, and for each $(s, h) \in S \times [H]$, we declare $(s, h)$ to be unreachable iff the estimated reaching probability (using $\hat{\pi}^{s, h}$) does not exceed $\rtrunc$. 
Similar to the analysis in the robust planning algorithm, for a reaching probability truncation threshold $\rtrunc$, the set of $(s, h)$ being removed would be the same as long as the difference $\rtrunc$ and $d^*(s, h)$ is large enough 
%(in which case $|\rtrunc - d^{\hat{\pi}^{s, h}}(s, h)|$ is also large) 
for all $(s, h) \in S \times [H]$.
Moreover, two reaching probability truncation thresholds $\rtrunc^1$ and $\rtrunc^2$ will result in the same set of $(s, h)$ being removed if for all $(s, h) \in S \times [H]$ we have either $\rtrunc^1 < \rtrunc^2 < d^*(s, h)$ or $d^*(s, h) < \rtrunc^1 < \rtrunc^2$. Therefore, the total number of different sets of $(s, h)$ being removed is at most $O(|S|H)$. 

\textbf{Strongly $k$-list Replicable Algorithm in the Online Setting.}
Unlike the case of weak list replicability where we can use a black-box RL algorithm to determine the set of unreachable states independently at each level, for strongly list replicable RL, such a method would not suffice due to the potentially large list complexity of the black-box algorithm.
Our algorithm with strongly $k$-list replicable guarantees employs a level-by-level approach: for each level $h$, we find a policy $\hat{\pi}^{s, h}$ to reach $s$ at level $h$ for each $s \in S$, build an empirical transition model for level $h$, and  proceed to the next level $h + 1$. 
To ensure list replicability guarantees, for each $(s, h) \in S \times [H]$, we use the same robust planning algorithm to find $\hat{\pi}^{s, h}$. 
As mentioned ealier, for any level $h$, there could be unreachable states, and the estimated transition model for those states could be inaccurate. 
To handle this, for each level $h$, 
based on the estimated transition models of previous levels, 
we test the reachability of all states in level $h$ by using the same mechanism as in our previous algorithm, and remove those unreachable states by transitioning them to an absorbing state $\absorbs$ in the estimated model.

Although the algorithm is conceptually straightforward given existing components, the analysis is not. 
%For the algorithm with weakly $k$-list replicable guarantees, the reachability of each $(s, h)$ is tested independently. 
For the new algorithm, states removed at level $h$ have significant impact on the reaching probabilities of later levels, which also affect the planned roll-in policies of later levels.
Such dependency issue must be handled carefully to have a polynomial list complexity. %otherwise the list complexity would be exponential in the horizon length. 
%On possible workaround, is to treat errors incurred by states removed in previous levels as estimation errors on the true reaching probabilities. 
%However, by doing so, the estimation errors will grow multiplicatively as we switch to later levels, leading to a sample complexity exponential in the horizon length $H$. 
To handle this, we prove several structural properties of reaching probabilities in truncated MDPs in Section~\ref{sec:reach}.
For the time being we assume that in our algorithm, for each level $h$, instead of using estimated reaching probabilities, the algorithm has access to the true reaching probabilities,
and those reaching probabilities have taken unreachable states removed in previous levels into consideration.
I.e., for a reaching probability truncation threshold $\rtrunc$, we first remove all states in the first level that cannot be reached with probability higher than $\rtrunc$, recalculate the reaching probability in the second level after truncating the first level, remove unreachable states in the second level (again using the same threshold $\rtrunc$), an so on. 
We use $\unreach{h}{\rtrunc}$ to denote the set of states removed in level $h$ during the above process, and see Definition~\ref{def:unreach} for a formal definition. 
We show that for different $\rtrunc$, $\unreach{h}{\rtrunc}$ could not be an arbitrary subset of the state space, and the main observation is that $\unreach{h}{\rtrunc}$ satisfies certain monotonicity property, i.e., given $r_1, r_2 \in [0, 1]$, if $r_1 < r_2$ then we have $\unreach{h}{r_1} \subseteq \unreach{h}{r_2}$. 
This observation can be proved by induction on $h$, and see Lemma~\ref{lem:inclusion} and its proof for more details.

As an implication, if we write $U(r) = (U_0(r), U_1(r), \ldots, U_{H - 1}(r))$, then there could be at most $|S|H + 1$ different choices of $U(r)$ for all $r \in [0, 1]$ by the pigeonhole principle. % , and by noticing that $0 \le \sum_{h \in [H]} |U_h(r)| \le |S|H$. 
%Therefore, 
Therefore, after fixing the reaching probability truncation threshold, the set of states that will be removed at each level will be fixed, and for all different reaching probability truncation thresholds, there could be at most $|S|H + 1$ different ways to remove states even if we consider all levels simultaneously. 

The above discussion heavily relies on the true reaching probabilities. 
As another implication of the monotonicity property, there is a critical reaching probability threshold $\minr{s}{h}$ for each $(s, h)$, and $s \in \unreach{h}{r}$ iff $r \le \minr{s}{h}$ (cf.~Corollary~\ref{coro:critical}). 
Therefore, for a fixed reaching probability truncation threshold $\rtrunc$, as long as the distance between $\rtrunc$ and $\minr{s}{h}$ is much larger than the statistical errors, the set of states being removed will still be the same as $U(\rtrunc)$ even with statistical errors. 
In particular, if we draw $\rtrunc$ from a uniform distribution as in previous algorithms, with high probability $\rtrunc$ and $\minr{s}{h}$ would have a large distance for all $(s, h) \in S \times [H]$, in which case the set of removed states will be one of those $|S|H + 1$ different choices of $U(r)$. 

%In our algorithm, we invoke the robust planning algorithm for $|S|H$ times (once for each $(s, h) \in S \times [H]$).
%Since there are  $|S|H + 1$ different choices of $U(r)$, and the list complexity of the robust planning algorithm is $|S||A|H + 1$, the final list complexity of the algorithm is $O(|S|^3|A|H^3)$. 
%This also demonstrates the unique flexibility of our robust planning algorithm: as long as the underlying model is fixed, we may invoke the robust planning algorithm for multiple times to plan a roll-in policy for each $(s, h)$, even if the transition model is  estimated only for the first $h - 1$ levels. 
%In this sense, using the robust mean estimation algorithm by~\cite{DPVV2023} to estimate the transition model will not suffice for our purpose, as  such an algorithm requires the estimated transition models for all $h \in [H]$ to ensure list replicability guarantees. 

%The formal analysis  is a careful combination of all the ideas mentioned above.

\section{Robust Planning}\label{sec:planning}
In this section, we formally describe our robust planning algorithm (Algorithm~\ref{alg:choosing action}).
Here, it is assumed that there is an unknown underlying MDP $M$.
Algorithm~\ref{alg:choosing action} receives an MDP $\hat{M}$ and a tolerance parameter \(\raction\) as input, and it is assumed that \(M\) and \(\hat{M}\) are  \(\epsilon_0\)-related (see~\eqref{equ:epsilon_related} for the definition).
In Algorithm~\ref{alg:choosing action}, for each $(s, h) \in S \times [H]$, we go through all actions in the action space $A$ in a fixed order $1, 2, \ldots, |A|$, and choose the first action $a$   so that $Q^*_{h, \hat{M}}(s, a) \ge V^*_{h, \hat{M}}(s) - \raction$.

\vspace{-1em}
\begin{algorithm}
\small
\caption{Robust Planning}
\label{alg:choosing action}
\begin{algorithmic}[1]
\STATE \textbf{Input:} MDP \(\hat{M}\), tolerance parameter \(\raction\).
\STATE \textbf{Output:}  near-optimal policy $\hat{\pi}$
\STATE Define $\hat{\pi}_h(s) = \min\{a \in A \mid Q^*_{h, \hat{M}}(s, a) \ge V^*_{h, \hat{M}} - \raction\}$ for each  $(s, h) \in S \times [H]$
\STATE \textbf{return} \(\hat{\pi}\)

%\STATE The approximately optimal policy \(\hat{\pi}\) follows by choosing \(a_{\text{choice}}\) at state \(s\).
\end{algorithmic}
\end{algorithm}

Our first lemma characterizes the suboptimality of the returned policy. 
Its formal proof is based on the performance difference lemma~\citep{kakade2002approximately} and can be found in Section~\ref{sec:planning_proof}. 

\begin{lemma}\label{lemma:actionall}
Suppose \(M\) and \(\hat{M}\) are  \(\epsilon_0\)-related.
The policy \( \hat{\pi} \) returned by Algorithm~\ref{alg:choosing action} satisfies
$V^{\hat{\pi}}_{M} \ge V^{*}_{M} -   2H^2 \epsilon_0 - \raction H$.
\end{lemma}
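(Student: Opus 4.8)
The plan is to decompose the suboptimality of $\hat\pi$ measured in the true model $M$ into three pieces by inserting the value of $\hat\pi$ and the optimal value measured in the \emph{approximate} model $\hat M$:
\[
V^*_M - V^{\hat\pi}_M = \underbrace{(V^*_M - V^*_{\hat M})}_{(\mathrm{I})} + \underbrace{(V^*_{\hat M} - V^{\hat\pi}_{\hat M})}_{(\mathrm{II})} + \underbrace{(V^{\hat\pi}_{\hat M} - V^{\hat\pi}_M)}_{(\mathrm{III})},
\]
where $V^*_{\hat M} = V^*_{0,\hat M}(s_0)$ and $V^{\hat\pi}_{\hat M} = V^{\hat\pi}_{0,\hat M}(s_0)$ follow the same convention as $V^*_M, V^{\hat\pi}_M$. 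Terms $(\mathrm{I})$ and $(\mathrm{III})$ measure how far the two $\epsilon_0$-related models disagree (on the optimal value, and on the value of the fixed policy $\hat\pi$, respectively), while $(\mathrm{II})$ measures the suboptimality of $\hat\pi$ \emph{inside} the very model it was planned for.

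For $(\mathrm{I})$ and $(\mathrm{III})$ I would prove a simulation-lemma style bound. Writing $e_h = \max_s \lvert V^*_{h,M}(s) - V^*_{h,\hat M}(s)\rvert$ and using $\lvert \max_a Q^*_{h,M}(s,a) - \max_a Q^*_{h,\hat M}(s,a)\rvert \le \max_a \lvert Q^*_{h,M}(s,a) - Q^*_{h,\hat M}(s,a)\rvert$, I expand the $Q$-functions through one Bellman step and split the difference into a downstream value-difference part and a transition-mismatch part. Using $\sum_{s'}\lvert P^M_h(s'\mid s,a) - P^{\hat M}_h(s'\mid s,a)\rvert \le \epsilon_0$ and $\lVert V^*_{h+1,\hat M}\rVert_\infty \le H-h-1$ yields the recursion $e_h \le e_{h+1} + \epsilon_0(H-h-1)$ with $e_H = 0$, hence $e_0 \le \epsilon_0 H(H-1)/2 \le H^2\epsilon_0/2$. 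The identical argument applied to the fixed policy $\hat\pi$ (here no maximization over actions is needed, so the single Bellman step uses the action $\hat\pi_h(s)$ directly) bounds $(\mathrm{III})$ by the same $H^2\epsilon_0/2$.

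For $(\mathrm{II})$ I would invoke the performance difference lemma~\citep{kakade2002approximately} inside $\hat M$ with the policy pair $(\hat\pi, \pi^*_{\hat M})$, where $\pi^*_{\hat M}$ is optimal in $\hat M$:
\[
V^*_{\hat M} - V^{\hat\pi}_{\hat M} = \sum_{h=0}^{H-1}\mathbb{E}_{s \sim d^{\hat\pi}_{\hat M}(\cdot,h)}\bigl[V^*_{h,\hat M}(s) - Q^*_{h,\hat M}(s,\hat\pi_h(s))\bigr].
\]
The selection rule of Algorithm~\ref{alg:choosing action} guarantees $Q^*_{h,\hat M}(s,\hat\pi_h(s)) \ge V^*_{h,\hat M}(s) - \raction$ for every $(s,h)$: the optimal action of $\hat M$ always lies in the feasible set $\{a : Q^*_{h,\hat M}(s,a) \ge V^*_{h,\hat M}(s) - \raction\}$, so the minimizer $\hat\pi_h(s)$ is well defined and meets the same threshold. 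Each summand is therefore at most $\raction$, giving $(\mathrm{II}) \le \raction H$. Combining the three bounds yields $V^*_M - V^{\hat\pi}_M \le H^2\epsilon_0 + \raction H \le 2H^2\epsilon_0 + \raction H$, which is exactly the claimed inequality (in fact with a slightly better constant).

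The step I expect to require the most care is securing the $H^2$ rather than $H^3$ dependence on $\epsilon_0$. The tempting shortcut—applying the performance difference lemma directly to $(\hat\pi,\pi^*_M)$ in $M$ and transferring each per-level gap from $\hat M$ to $M$—incurs an $O(H^2\epsilon_0)$ model-transfer error at each of the $H$ levels and loses an extra factor of $H$. The three-term decomposition above is precisely what avoids this: the model mismatch is charged only through the two simulation-lemma terms $(\mathrm{I})$ and $(\mathrm{III})$, each a single $O(H^2\epsilon_0)$ quantity, while the action-selection slack is isolated in $(\mathrm{II})$ and never interacts with the model error. I would thus keep the $\raction$-slack and the model mismatch in separate terms rather than estimating them simultaneously, and verify carefully that the two Bellman-step recursions bottom out correctly at $h=H-1$ (where the shared, known reward forces $e_{H-1}=g_{H-1}=0$).
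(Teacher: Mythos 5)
Your proposal is correct and follows essentially the same route as the paper: the identical three-term decomposition through $\hat M$, a simulation-lemma bound of order $H^2\epsilon_0$ for the two model-mismatch terms (the paper's Lemmas~\ref{lemma M1M2} and~\ref{lemma M1M2pi}, which use the cruder bound $\lVert V\rVert_\infty \le H$ and hence get $H^2\epsilon_0$ per term where you get $H^2\epsilon_0/2$), and a per-level slack of $\raction$ for the planning term (the paper's Lemma~\ref{lemma:raction} does this by direct backward induction, which is just the performance difference lemma unrolled). The only differences are constant-factor and presentational.
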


Our second lemma shows that if $\raction$ is chosen to be far from $\gap_{h, M}(s, a) = V^*_{h, M}(s) - Q^*_{h, M}(s, a)$ for all $(s, a) \in S \times A$ and $h \in [H]$, then the returned policy $\hat{\pi}$ depends only on $M$ and $\raction$.
Moreover, for two choices \(  \raction^1 \) and \( \raction^2 \)  of the tolerance parameter $\raction$, the returned policy will be the same if \(  \raction^1 \) and \( \raction^2 \) always lie on the same side of $\gap_{h, M}(s, a)$ for all $(s, a) \in S \times A$ and $h \in [H]$. Full proof of the lemma and corollary can be found in Section~\ref{sec:planning_proof}.

\begin{lemma}\label{lem:num_policy}
Suppose \(M\) and \(\hat{M}\) are  \(\epsilon_0\)-related.
For two tolerance parameters \(  \raction^1 \) and \( \raction^2 \), if 
\begin{itemize}
    \item \( \raction^1, \raction^2 \notin \bigcup_{g \in  \gap_{M}} \ball{g}{2H^2\epsilon_0} \) where $\gap_{M}$ is as defined in~\eqref{equ:gapm};
    \item for any \( g \in \gap_M \), either \( g < \raction^1 < \raction^2 \) or \( \raction^1 < \raction^2 < g \), 
\end{itemize}
then the returned policy \(\hat{\pi}\)  depends only on  \(M\) and  \(\raction\), and for both tolerance parameters \( \raction^1 \) and \( \raction^2 \), the returned policy \( \hat{\pi} \) would be identical for the same underlying MDP $M$. 
    
\end{lemma}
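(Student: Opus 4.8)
The plan is to reduce the whole statement to a single robust comparison between the tolerance $\raction$ and the true gaps $\gap_{h,M}(s,a)$, exploiting that Algorithm~\ref{alg:choosing action} selects at each $(s,h)$ the lexicographically smallest action $a$ with $Q^*_{h,\hat M}(s,a)\ge V^*_{h,\hat M}(s)-\raction$, i.e.\ with $\gap_{h,\hat M}(s,a):=V^*_{h,\hat M}(s)-Q^*_{h,\hat M}(s,a)\le \raction$. The first ingredient is a simulation/perturbation bound: since $M$ and $\hat M$ are $\epsilon_0$-related (same rewards, transitions differing by at most $\epsilon_0$ in $\ell_1$), one has $|V^*_{h,\hat M}(s)-V^*_{h,M}(s)|\le H^2\epsilon_0$ and $|Q^*_{h,\hat M}(s,a)-Q^*_{h,M}(s,a)|\le H^2\epsilon_0$ for all $(s,a,h)$. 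This follows from the standard downward induction on $h$ starting at the terminal level (where the $Q$-functions coincide), expanding the per-level Bellman difference and applying H\"older's inequality with $V^*\in[0,H]$ to bound each level's contribution by $\epsilon_0 H$; the same bound already underlies Lemma~\ref{lemma:actionall}. Subtracting the $V$- and $Q$-estimates yields the gap perturbation bound
\[
|\gap_{h,\hat M}(s,a)-\gap_{h,M}(s,a)|\le 2H^2\epsilon_0\qquad\text{for all }(s,a)\in S\times A,\ h\in[H].
\]

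Next I would fix $(s,h)$ and an action $a$ and analyze the eligibility predicate $[\gap_{h,\hat M}(s,a)\le\raction]$. The first hypothesis gives $|\raction-\gap_{h,M}(s,a)|>2H^2\epsilon_0$, so exactly one of two cases holds: if $\raction>\gap_{h,M}(s,a)+2H^2\epsilon_0$ then the gap bound forces $\gap_{h,\hat M}(s,a)<\raction$ and $a$ is eligible, whereas if $\raction<\gap_{h,M}(s,a)-2H^2\epsilon_0$ then $\gap_{h,\hat M}(s,a)>\raction$ and $a$ is ineligible. Hence, for \emph{every} $\epsilon_0$-related $\hat M$, action $a$ is eligible at $(s,h)$ iff $\raction>\gap_{h,M}(s,a)$, a condition involving only $M$ and $\raction$. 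Since an optimal action of $M$ at $(s,h)$ has $\gap_{h,M}=0$ and the inclusion $0\in\gap_M$ together with the first hypothesis forces $\raction>2H^2\epsilon_0$ (for a nonnegative tolerance), this optimal action is eligible, so the eligible set is nonempty and its lexicographic minimum is well defined and determined by $M$ and $\raction$. As the policy at every $(s,h)$ is exactly this minimum, $\hat\pi$ depends only on $M$ and $\raction$, which is the first claim.

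For the two thresholds, the second hypothesis asserts that for every $g\in\gap_M$ the pair lies on a common side, i.e.\ either $g<\raction^1<\raction^2$ or $\raction^1<\raction^2<g$. Specializing to $g=\gap_{h,M}(s,a)$ for each $(s,a,h)$ shows that the predicate $[\raction>\gap_{h,M}(s,a)]$ has the same truth value under $\raction^1$ and $\raction^2$. By the previous paragraph the eligible action set at each $(s,h)$ is therefore identical for the two thresholds, so the lexicographically smallest eligible action — and thus the entire returned policy — coincides.

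I expect the main obstacle to lie in the perturbation bound rather than the comparison logic: matching the constant to the $2H^2\epsilon_0$ ball radius requires careful bookkeeping of the per-level $\ell_1$-versus-$\ell_\infty$ H\"older step and the downward induction, and one must use that $\epsilon_0$-related MDPs share rewards so that only transition perturbations contribute. Once this bound and the ``common-side'' reformulation of the two hypotheses are in place, the remaining arguments are elementary case analyses.
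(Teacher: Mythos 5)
Your proposal is correct and follows essentially the same route as the paper: both rest on the perturbation bound $|\gap_{h,\hat M}(s,a)-\gap_{h,M}(s,a)|\le 2H^2\epsilon_0$ from the $\epsilon_0$-relatedness (Lemma~\ref{lemma M1M2}), then use the ball-avoidance and common-side hypotheses to show that the comparison $Q^*_{h,\hat M}(s,a)\ge V^*_{h,\hat M}(s)-\raction$ is resolved identically to the corresponding comparison against the true gaps, so the lexicographically first eligible action is determined by $M$ and $\raction$ alone. Your explicit check that the eligible set is nonempty (via the optimal action having gap $0$) is a small extra care the paper omits but does not change the argument.
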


As a corollary of Lemma~\ref{lemma:actionall} and Lemma~\ref{lem:num_policy}, we show how to design a list-replicable RL algorithm in the generative model setting by invoking Algorithm~\ref{alg:choosing action} with a randomly chosen parameter $\raction$. 
\begin{corollary}
\label{cor:5.3}
In the generative model setting, there is an algorithm with sample complexity polynomial in $|S|$, $|A|$, $1 / \epsilon$ and $1 / \delta$, such that with probability at least $1 - \delta$, the returned policy is $\epsilon$-optimal and always lies in a list $\Pi(M)$ where $\Pi(M)$ is a list of policies that depend only on the unknown underlying MDP $M$ with $|\Pi(M)| = O(|S| |A| H)$. 
\end{corollary}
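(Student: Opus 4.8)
The plan is to combine a single sampling step in the generative model with the two properties of the robust planner established above. First, for each $(s, a) \in S \times A$ and $h \in [H]$, I would draw $N$ independent samples from $P_h(s, a)$ and let $\hat{P}_h(\cdot \mid s, a)$ be the empirical transition distribution, assembling them into an empirical MDP $\hat{M}$. By a standard $L_1$-concentration bound for empirical distributions over $|S|$ outcomes, taking $N = \tilde{O}\big(|S|/\epsilon_0^2 \cdot \log(1/\delta_0)\big)$ with $\delta_0 = \delta/(2|S||A|H)$ guarantees, after a union bound over all $|S||A|H$ triples, that the event $E_1$ that $M$ and $\hat{M}$ are $\epsilon_0$-related (see~\eqref{equ:epsilon_related}) holds with probability at least $1 - \delta/2$. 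I would then draw $\raction \sim \mathrm{Unif}(0, R)$ independently of the samples and output $\hat{\pi} = $ Algorithm~\ref{alg:choosing action}$(\hat{M}, \raction)$.

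The parameters $\epsilon_0$ and $R$ must be calibrated to meet two competing requirements. On one hand, Lemma~\ref{lemma:actionall} gives $V^{\hat\pi}_M \ge V^*_M - 2H^2\epsilon_0 - \raction H \ge V^*_M - 2H^2\epsilon_0 - RH$ on $E_1$, so $\epsilon$-optimality demands $2H^2\epsilon_0 + RH \le \epsilon$. On the other hand, the bad set $\badaction := \bigcup_{g \in \gap_M} \ball{g}{2H^2\epsilon_0}$ has Lebesgue measure at most $|\gap_M| \cdot 4H^2\epsilon_0 \le (|S||A|H + 1)\cdot 4H^2\epsilon_0$, so to ensure $\Pr[\raction \in \badaction] \le \delta/2$ I need $R$ at least of order $(|S||A|H) H^2 \epsilon_0 / \delta$. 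Reconciling these constraints forces $\epsilon_0$ to be polynomially small, of order $\epsilon\delta/\mathrm{poly}(|S|,|A|,H)$, with $R$ of order $\epsilon/H$; with this choice the per-triple sample size $N$, and hence the total sample complexity $|S||A|H\cdot N$, is polynomial in $|S|, |A|, H, 1/\epsilon, 1/\delta$. Since $\raction$ is drawn independently of the samples and $\badaction$ is a deterministic function of $M$, the event $E_2 := \{\raction \notin \badaction\}$ is independent of $E_1$; a union bound gives $\Pr[E_1 \cap E_2] \ge 1 - \delta$.

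It remains to bound the list size and to check that the list depends only on $M$. Here the key is Lemma~\ref{lem:num_policy}: sorting the at most $|S||A|H + 1$ distinct values of $\gap_M$ partitions $(0, R)$ into at most $|S||A|H + 1$ subintervals (cells), and the lemma guarantees that whenever $E_1$ holds and $\raction \notin \badaction$, the returned policy is determined solely by which cell contains $\raction$ — in particular it does not depend on the realized $\hat{M}$. I would therefore define $\Pi(M)$ to be the collection of policies obtained, one per cell; since both the gap values and $R$ depend only on $M$, so does $\Pi(M)$, and $|\Pi(M)| = O(|S||A|H)$. On $E_1 \cap E_2$ the output $\hat{\pi}$ lies in $\Pi(M)$ and is $\epsilon$-optimal by Lemma~\ref{lemma:actionall}, which proves the corollary.

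The main point is conceptual rather than computational: it is the first clause of Lemma~\ref{lem:num_policy}, asserting that the policy assigned to a given cell is identical across \emph{all} $\epsilon_0$-related empirical models, that makes $\Pi(M)$ a function of $M$ alone and is the true source of the replicability guarantee. The only genuinely delicate step is the parameter calibration: the requirement that $R$ be large enough to swamp $\badaction$ with probability $1 - \delta/2$ is precisely what yields the (expected, cf.~\citet{DPVV2023}) polynomial rather than logarithmic dependence on $1/\delta$ in the sample complexity, while leaving the list size $O(|S||A|H)$ independent of $\delta$.
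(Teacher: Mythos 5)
Your proposal is correct and follows essentially the same route as the paper's proof: build an empirical MDP from generative samples so that $M$ and $\hat{M}$ are $\epsilon_0$-related, draw $\raction$ uniformly and invoke Algorithm~\ref{alg:choosing action}, then apply Lemma~\ref{lemma:actionall} for $\epsilon$-optimality and Lemma~\ref{lem:num_policy} for the $O(|S||A|H)$ list bound. Your parameter calibration is in fact spelled out more carefully than the paper's (which fixes $\raction\sim\mathrm{Unif}(0,\epsilon/(5H))$ and $\epsilon_0=\delta\epsilon/(20H^3)$ without tracking the $|S||A|H$ factor in the bad-set measure), but the argument is the same.
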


\vspace{-1em}
 \section{Strongly $k$-list Replicable RL Algorithm}\label{sec:full}

 In this section, we present our strongly $k$-list replicable algorithm (Algorithm~\ref{alg:full}).
As mentioned in Section~\ref{sec:tech}, Algorithm~\ref{alg:full} employs a layer-by-layer approach.
In Algorithm~\ref{alg:full}, for each $h \in [H]$, $\eunreach{h}$ is the set of states estimated to be unreachable at level $h$, and we initialize $\eunreach{0} = S \setminus \{s_0\}$ where $s_0$ is the fixed initial state. 
For each iteration $h$, we assume that $\eunreach{h}$ has been calculated, and for all $s \notin \eunreach{h}$, we assume that a roll-in policy $\hat{\pi}^{s, h}$  has been determined (except for $h = 0$, since any policy would suffice for reaching the initial state). 
Now we describe how to proceed to the next iteration $h + 1$.

For each $s \notin \eunreach{h}$ and  $a \in A$, we build a policy $\hat{\pi}^{s, h, a}$ based on $\hat{\pi}^{s, h}$, and execute $\hat{\pi}^{s, h, a}$  to collect samples and calculate $\hat{P}_h(s, a)$ as our estimate of $P_h(s, a)$.
Based on $\{\hat{P}_{h'}(s, a)\}_{h' \le h}$ and $\{\eunreach{h'}\}_{h' \le h}$, 
we build an MDP $\tilde{M}^{h + 1}$ (cf.~\eqref{equ:tilde_P}). 
For each $h' \le h$ and $s \in S$,  if $s \notin \eunreach{h'}$
the transition model of $s$ in $\tilde{M}^{h + 1}$ at level $h'$  would be the same as $\hat{P}_{h'}(s, \cdot)$.
If $s \in \eunreach{h'}$,  we always transit $s$ to an absorbing state $\absorbs$ in $\tilde{M}^{h + 1}$ at level $h'$. 
Given $\tilde{M}^{h + 1}$, for each $s \in S$, we calculate $d^*_{\tilde{M}^{h + 1}}(s, h + 1)$ as our estimate of $d^*(s, h + 1)$, 
and we include $s$ in $\eunreach{h + 1}$ if 
$d^*_{\tilde{M}^{h + 1}}(s, h + 1) \le \rtrunc$. Here, $\rtrunc$ is a reaching probability truncation threshold drawn from the uniform distribution. 
For each $s \notin \eunreach{h + 1}$, we further find a roll-in policy $\hat{\pi}^{s, h + 1}$ by invoking Algorithm~\ref{alg:choosing action} on $\tilde{M}^{h + 1}$ with a modified reward function $R^{s, h + 1}_{h'}(s', a) = \mathbbm{1}[h' = h + 1, s' = s]$ and tolerance parameter $\raction$, where $\raction$ is also drawn from the uniform distribution. 
After finishing all these steps, we proceed to the next iteration.

Finally, after finishing all iterations, we invoke Algorithm~\ref{alg:choosing action} again with MDP $\tilde{M}^{H - 1}$ and the same tolerance parameter $\raction$, and return the output of Algorithm~\ref{alg:choosing action} as the final output. 
The formal guarantee of Algorithm~\ref{alg:full} is stated in the following theorem. Its proof can be found in Section~\ref{sec:full_proof}. 

\begin{theorem}\label{thm:full}
For any unknown MDP instance $M$, 
there is a list $\mathrm{Trace}(M)$ with size at most $k=O(|S|^3|A|H^3)$ that depends only on $M$,
and with probability at least $1 - \delta$,  the policy $\pi$ returned by Algorithm~\ref{alg:full} is $\epsilon$-optimal, and $((\pi_0, \pi_1, \ldots ),  \pi)  \in \mathrm{Trace}(M)$, where  $(\pi_0, \pi_1, \ldots)$ is the sequence of policies executed by Algorithm~\ref{alg:full} when interacting with $M$ .
%and $\pi$ is the policy returned by Algorithm~\ref{alg:full} when interacting with $M$.
\end{theorem}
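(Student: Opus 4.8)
The plan is to split the randomness of Algorithm~\ref{alg:full} into the two threshold draws $\rtrunc,\raction$ and the stochastic samples, and to show that conditioned on two high-probability \emph{good events} the entire interaction (executed trace plus output) becomes a deterministic function of the \emph{cell} into which $(\rtrunc,\raction)$ falls. The two good events are: (i) a statistical event $\mathcal{E}_{\mathrm{stat}}$ under which every empirical transition $\hat P_h(s,a)$ built for a reachable $s$ is within $\epsilon_0$ in $\ell_1$ of the true $P_h(s,a)$, so that each $\tilde M^{h+1}$ is $\epsilon_0$-related to an idealized truncated MDP; and (ii) a threshold event $\mathcal{E}_{\mathrm{thr}}$ under which $\rtrunc$ avoids a neighborhood of width equal to the reaching-probability statistical scale $\epsilon'$ around every critical threshold $\minr{s}{h}$, and $\raction$ avoids every $2H^2\epsilon_0$-ball around the gap values of all planning sub-instances. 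Drawing $\rtrunc$ and $\raction$ from $\mathrm{Unif}(0,\cdot)$ with width $\mathrm{poly}(|S|,|A|,H)/\delta$ times the relevant statistical scale makes $\Pr[\mathcal{E}_{\mathrm{thr}}]\ge 1-\delta/2$ exactly as in the robust-planner analysis, while taking $\mathrm{poly}(|S|,|A|,H,1/\epsilon,1/\delta)$ samples per $(s,h,a)$ makes $\Pr[\mathcal{E}_{\mathrm{stat}}]\ge 1-\delta/2$ by standard concentration.

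Next I would set up the idealized process. For a fixed threshold $r$, let $U(r)=(U_0(r),\dots,U_{H-1}(r))$ be the exact removed-state sequence of Definition~\ref{def:unreach}, obtained by truncating the \emph{true} MDP $M$ level by level using true reaching probabilities, and let $M^{h}$ denote the corresponding idealized truncated MDP. The heart of the argument is the inductive claim that, on $\mathcal{E}_{\mathrm{stat}}\cap\mathcal{E}_{\mathrm{thr}}$, for every level we have $\eunreach{h}=\unreach{h}{\rtrunc}$ and the roll-in policies $\hat\pi^{s,h}$ coincide with those Algorithm~\ref{alg:choosing action} would output on $M^{h}$ with tolerance $\raction$. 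Assuming this for levels $\le h$, the reachable roll-ins reach their targets in $M$ with probability bounded below (using the reaching-probability structure of Section~\ref{sec:reach}), so the collected samples certify that $\tilde M^{h+1}$ is $\epsilon_0$-related to $M^{h+1}$; a sensitivity bound for $d^*$ under $\epsilon_0$-perturbations of the transitions then shows $d^*_{\tilde M^{h+1}}(s,h+1)$ lies within the statistical scale of $d^*_{M^{h+1}}(s,h+1)$, and since $\rtrunc$ is $\epsilon'$-separated from $\minr{s}{h+1}$ the thresholding step reproduces $\unreach{h+1}{\rtrunc}$ exactly (Corollary~\ref{coro:critical}); finally Lemma~\ref{lem:num_policy}, applied with underlying MDP $M^{h+1}$ and the reaching reward, pins down $\hat\pi^{s,h+1}$ as a function of $\rtrunc$'s cell and $\raction$'s cell alone.

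With the inductive claim in hand the counting is purely combinatorial. By the monotonicity property $r_1<r_2\Rightarrow \unreach{h}{r_1}\subseteq\unreach{h}{r_2}$ (Lemma~\ref{lem:inclusion}) and the pigeonhole principle, the map $r\mapsto U(r)$ takes at most $|S|H+1$ distinct values, so $\rtrunc$ has $O(|S|H)$ relevant cells. Inside each such cell the idealized MDPs $M^0,\dots,M^{H-1}$ are fixed, and the trace is assembled from at most $O(|S|H)$ reaching-policy sub-instances (one $\hat\pi^{s,h}$ per reachable $(s,h)$) plus the final plan, each contributing at most $|S||A|H+1$ gap values; hence $\raction$ has $O(|S|^2|A|H^2)$ relevant cells within a fixed $\rtrunc$-cell. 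Multiplying, the number of distinct $(\rtrunc,\raction)$-cells, and therefore the number of distinct traces $((\pi_0,\pi_1,\dots),\pi)$, is $O(|S|^3|A|H^3)$, which I take to be $\mathrm{Trace}(M)$. For $\epsilon$-optimality I would apply Lemma~\ref{lemma:actionall} with underlying MDP $M^{H-1}$ to obtain $V^{\hat\pi}_{M^{H-1}}\ge V^{*}_{M^{H-1}}-2H^2\epsilon_0-\raction H$, then bound the truncation losses $|V^{*}_M-V^{*}_{M^{H-1}}|$ and $|V^{\hat\pi}_M-V^{\hat\pi}_{M^{H-1}}|$ by $O(|S|H^2\rtrunc)$ since each removed $(s,h)$ carries occupancy at most $\rtrunc$ plus statistical slack; choosing $\epsilon_0$ and the uniform widths so that $2H^2\epsilon_0+\raction H+O(|S|H^2\rtrunc)\le\epsilon$ closes the accuracy bound.

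The main obstacle is the inductive step tying the \emph{estimated} level-$(h{+}1)$ truncation to the idealized $\unreach{h+1}{\rtrunc}$: states removed at earlier levels reshape the reachability landscape at level $h+1$, the roll-in policies that gather the level-$h$ data are themselves $\raction$-dependent outputs of Algorithm~\ref{alg:choosing action} on the previous estimated model, and these errors could in principle compound across the $H$ levels. Controlling this requires that the $\epsilon_0$-relatedness of $\tilde M^{h+1}$ to $M^{h+1}$ and the $\epsilon'$-separation of $\rtrunc$ from the critical thresholds together dominate all accumulated statistical slack simultaneously for every $(s,h)$, which is precisely what the structural monotonicity and critical-threshold results of Section~\ref{sec:reach} are designed to supply.
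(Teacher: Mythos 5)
Your proposal follows essentially the same route as the paper's proof: the same two good events, the same level-by-level induction showing $\eunreach{h}=\unreach{h}{\rtrunc}$ together with $\epsilon_0$-accurate transition estimates (the paper's Lemmas~\ref{lem:estimate_P}--\ref{lem:main}), the same monotonicity-plus-pigeonhole count of $|S|H+1$ cells for $\rtrunc$ and $O(|S|^2|A|H^2)$ gap-induced cells for $\raction$ (Corollary~\ref{coro:num_U}, Definition~\ref{def:gapr_badaction}, Lemma~\ref{lem:fullalg}), and the same three-term optimality decomposition via Lemmas~\ref{lemma:MMr} and~\ref{lemma:actionall}. The argument is correct and matches the paper's structure.
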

\vspace{-0.8em}

\begin{algorithm}

\caption{Strongly $k$-list Replicable RL Algorithm}

\label{alg:full}
\begin{algorithmic}[1]
\STATE \textbf{Input:} error tolerance \(\epsilon\), failure probability \(\delta\)
\STATE \textbf{Output:} near-optimal policy $\pi$ 
\STATE Initialize  \(C_1 =  \frac{8AS^2H^2}{\delta}\),  \(\epsilon_0 = \frac{\epsilon \delta}{1440S^3H^7A}\), \(\epsilon_1 = 5C_1H^2\epsilon_0\), \(\eta_0 = 3\epsilon_1H\), \(W = \frac{S^2 \log(8HS^2A / \delta)}{\epsilon_0^2\eta_0}\)
\STATE Generate random numbers $\raction \sim \mathrm{Unif}(\epsilon_1,2\epsilon_1), \rtrunc \sim \mathrm{Unif}(3\eta_0, 6\eta_0)$
\STATE Initialize $\eunreach{0} = S \setminus \{s_0\}$
\FOR{$h \in [H - 1]$}
   \FOR{$(s, a) \in (S \setminus \eunreach{h}) \times A$}
   \STATE Define policy $\hat{\pi}^{s, h, a}$, where for each $h' \in [H]$, 
   $
   \hat{\pi}^{s, h, a}_{h'}(s') = \begin{cases}
      a & h' \ge h\\
   \hat{\pi}^{s, h}_{h'}(s') & h' < h \\
   \end{cases}
   $
   \STATE Collect $W$ trajectories 
   $
   \{(s_0^{(w)}, a_0^{(w)},  \ldots, s_{H - 1}^{(w)}, a_{H - 1}^{(w))}\}_{w = 1}^W
   $
    by executing $\hat{\pi}^{s, h, a}$ for $W$ times
   \STATE  For each $s' \in S$, set $
   \hat{P}_h(s' \mid s, a)  = \frac{\sum_{w = 1}^W \mathbbm{1}[(s_h^{(w)}, a_h^{(w)}, s_{h + 1}^{(w)}) = (s, a, s')]}{\sum_{w = 1}^W \mathbbm{1}[(s_h^{(w)}, a_h^{(w)}) = (s, a)]}
   $
   \ENDFOR
   \STATE Define MDP $\tilde{M}^{h + 1} = (S \cup \{\absorbs\}, A, \tildeP{h + 1}, R, H, s_0)$, where for each $h' \in [H]$,
   \begin{equation}\label{equ:tilde_P}
   \tildeP{h + 1}_{h'}(s' \mid s, a) = \begin{cases}
   \hat{P}_{h'}(s' \mid s, a) & h' \le h, s \notin \eunreach{h'} \cup \{\absorbs\}\text{ and } s' \neq \absorbs \\
   0 & h' \le h, s \notin \eunreach{h'} \cup \{\absorbs\} \text{ and } s' = \absorbs\\
   \mathbbm{1}[s' = \absorbs] & h' > h \text{ or } s \in \eunreach{h'} \cup \{\absorbs\}
   \end{cases}.
   \end{equation}
   \STATE Set $\eunreach{h + 1} = \{s \in S \mid d^*_{\tilde{M}^{h + 1}}(s, h + 1) \le \rtrunc\}$
   \FOR{$s \in S \setminus \eunreach{h + 1}$} \label{line:full_plan}
   \STATE Define  MDP $\tilde{M}^{s, h + 1} = (S \cup \{\absorbs\}, A, \tildeP{h + 1}, R^{s, h + 1}, H, s_0)$, where $\tildeP{h + 1}$ is as defined in~\eqref{equ:tilde_P} and $R^{s, h + 1}_{h'}(s', a) = \mathbbm{1}[h' = h + 1, s' = s]$  %for all $h' \in [H]$ and $(s', a) \in (S \cup \{\absorbs\}) \times A$
   \STATE   Invoke Algorithm~\ref{alg:choosing action} with input $\tilde{M}^{s, h + 1}$ and $\raction$, and set $\hat{\pi}^{s, h + 1}$ to be the returned policy    
   \ENDFOR

\ENDFOR
   \STATE  Invoke Algorithm~\ref{alg:choosing action} with input $\tilde{M}^{H - 1}$ and $\raction$, and set $\pi$ to be the returned policy    
   \STATE \textbf{return} $\pi$ 
%\STATE \(\hat{\pi} \leftarrow \text{Algorithm\_1}(MDP, H, r_1, \hat{P}, \hat{r})\)\
%
%\STATE \textbf{RETURN} \(\hat{\pi}\)
\end{algorithmic}
\end{algorithm}

 \vspace{-1 em}
 \section{Experiments}\label{sec:exp}
 In this section, we show that our new planning strategy can be incorporated into empirical RL frameworks to enhance their stability. 
  In our experiments, we use three different environments in Gymnasium~\citep{towers2024gymnasium}:  Cartpole-v1, Acrobot-v1 and MountainCar-v0.
 For each environment, we use a different empirical RL algorithms: DQN~\citep{mnih2015human}, Double DQN~\citep{van2016deep} and tabular Q-learning based on discretization. 
 We combine our robust planner in Section~\ref{sec:planning} with the above empirical RL algorithm by replacing the planning algorithm with Algorithm~\ref{alg:choosing action}.
 Unlike our theoretical analysis, we treat the tolerance parameter $\raction$ as a hyperparameter and experiment with different choices of $\raction$. Note that when $\raction = 0$, Algorithm~\ref{alg:choosing action} is equivalent to picking actions that maximize the estimated $Q$-value as in the original empirical RL algorithms (DQN, Double DQN and tabular Q-learning). 
 The results are presented in Figure~\ref{fig:exp}. Here we repeat each experiment by $25$ times. 
The $x$-axis is the number of training episodes, the $y$-axis is the average award of the trained policy, $\pm$ standard deviation across 25 runs. More details can be found in Appendix~\ref{sec:exp_full}.
 
 Our experiments show that by choosing a larger tolerance parameter $\raction$, the performance of the algorithm becomes more stable at the cost of worse accuracy. Therefore, by choosing a suitable hyperparameter $\raction$, we could achieve a balance between stability and accuracy.

\begin{figure*}[t]
\vspace{-0.8em} % 减少与上文间距
\centering
\subfigure[Cartpole (DQN)]{
    \includegraphics[width=0.3\textwidth]{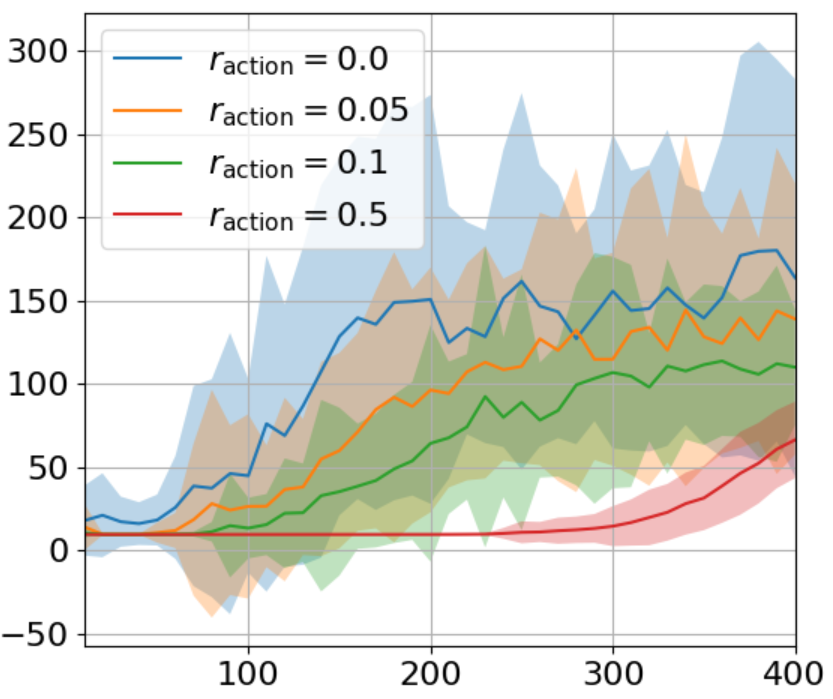}
    \label{fig:cp}
}
\subfigure[Acrobot (Double DQN)]{
    \includegraphics[width=0.3\textwidth]{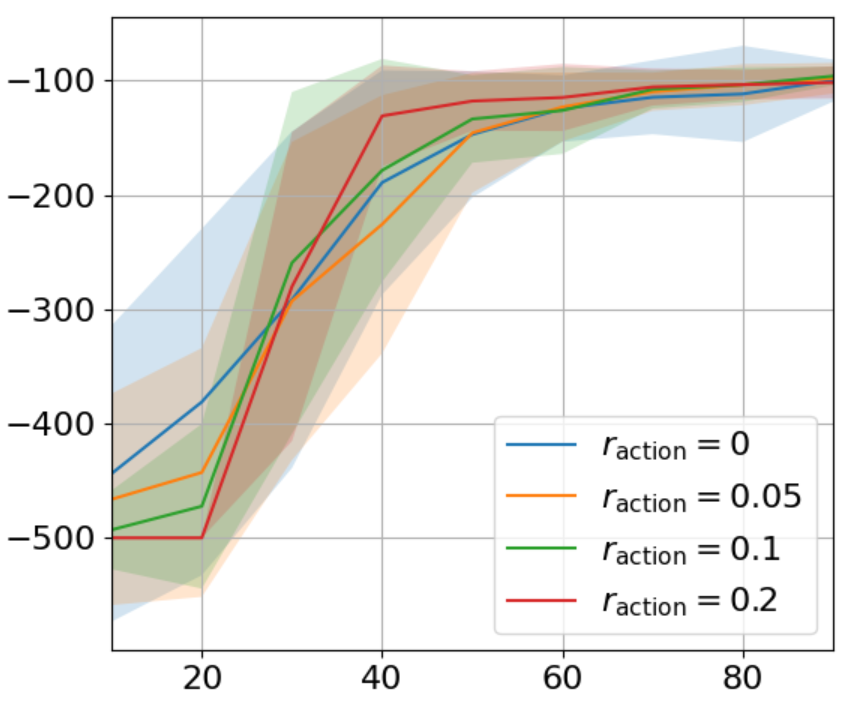}
    \label{fig:ab}
}
\subfigure[MountainCar (Tabular)]{
    \includegraphics[width=0.3\textwidth]{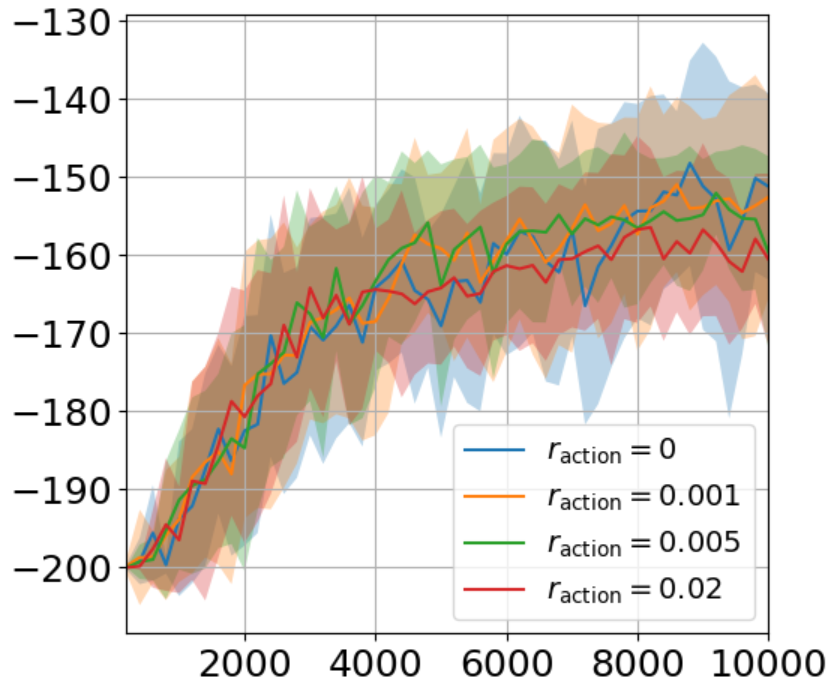}
    \label{fig:mc}
}
\caption{Different threhold}
 % 减少图和 caption 的间距
\label{fig:exp}
% 减少 caption 和下文间距
\end{figure*}

We further use our new planning strategy in more challenging Atari environments, such as NameThisGame. Using the BTR algorithm (~\citep{clark2024btr}) as the baseline, we find that simply augmenting it with the robust planner leads to a substantial improvement. In particular, the performance on NameThisGame increases by more than $10\%$, demonstrating that even this lightweight modification can yield significant gains in practice. The results are presented in Figure~\ref{fig:placeholder}. 
\begin{figure}
    \centering
    \includegraphics[width=0.5\linewidth]{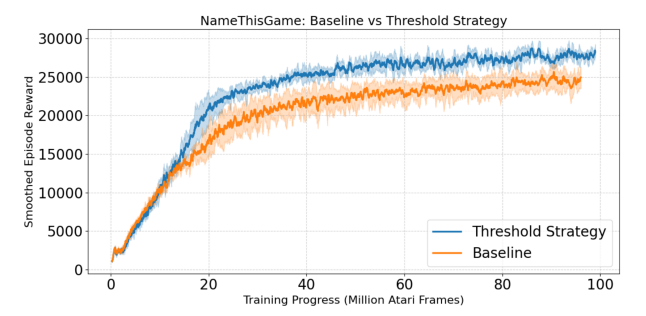}
    \caption{Namethisgame ( BTR )}
    \label{fig:placeholder}
\end{figure}
\vspace{-1 em}

 \section{Conclusion}
We conclude the paper by several interesting directions for future work.
Theoretically, our results show that even under a seemingly stringent definition of replicability (strong list replicability), efficient RL is still possible in the tabular setting. An interesting future direction is to develop replicable RL algorithms under more practical definitions of replicability and/or with function approximation schemes using our new techniques. %Moreover, it would be interesting to understand whether practical implications of our strong notion of replicability, i.e., can the list of 
Empirically, it would be interesting to incorporate our robust planner with other practical RL algorithms to see whether their stability could be improved. Currently, our robust planner can only work with discrete action spaces, and it remains to develop new techniques to overcome this limitation.

\clearpage

\bibliography{bibliography}
\bibliographystyle{plainnat}

\clearpage
\startcontents[app]  
\appendix

\addcontentsline{toc}{section}{Appendix}
\begingroup
\setstretch{1.3}
 % 标题随便改成中文“附录目录”也可以
 % 如果想让它出现在总目录里
\printcontents[app]{l}{1}{}   %

\section{Overline of the Proofs}
\label{app:overline}

\subsection{Definations}

\paragraph{PAC RL and sample complexity.}
We work in the standard Probably Approximately Correct (PAC) framework for episodic reinforcement learning. Let $M$ be a finite-horizon Markov decision process with state space $\mathcal{S}$, action space $\mathcal{A}$, horizon $H$, and a fixed initial-state distribution. Consider a (possibly randomized) learning algorithm $\mathsf{Alg}$ that interacts with $M$ (either via a generative model or by running episodes). Denote by $\pi_{\mathsf{Alg},M}$ the final policy output by $\mathsf{Alg}$, and by $V_M^\pi$ the value of a policy $\pi$ in $M$.

\medskip
\noindent\textbf{PAC RL.}
Given accuracy $\epsilon>0$ and confidence $\delta\in(0,1)$, we say that $\mathsf{Alg}$ is an $(\epsilon,\delta)$-PAC RL algorithm for a class of MDPs $\mathcal{M}$ if, for every $M\in\mathcal{M}$,
\[
  \mathbb{P}\bigl( V_M^{\pi_{\mathsf{Alg},M}} \ge V_M^\star - \epsilon \bigr) \ge 1-\delta ,
\]
where the probability is over all randomness of $\mathsf{Alg}$ and the environment, and $V_M^\star$ is the value of an optimal policy in $M$.

\medskip
\noindent\textbf{Sample complexity.}
The \emph{sample complexity} of $\mathsf{Alg}$ in this PAC RL setting is the worst-case (over $M\in\mathcal{M}$) expected number of environment samples used by $\mathsf{Alg}$ before it outputs its final policy and stops. In the episodic setting this is the total number of state--action--next-state transitions (equivalently, time steps across all episodes); in the generative-model setting this is the total number of generative queries. We are interested in algorithms whose sample complexity is polynomial in $|\mathcal{S}|$, $|\mathcal{A}|$, $H$, $1/\epsilon$, and $1/\delta$.

\subsection{Appendix Roadmap} 
We begin with a concise guide to the appendix materials. 

Appendix~\ref{app:overline} provides an outline of the appendix, high-level proof blueprints for {strong} and {weak} list replicability, and several schematic figures for intuition. 

{Appendices~\ref{app:exp1}} and~\ref{sec:exp_full} contain experiments: 
\begin{itemize}
    \item{Appendix~\ref{app:exp1}} presents a direct toy experiment in the {generative model} with \(|A| = 2\) that compares the robust planner with the greedy planner by {measuring the size} of returned policies;
    \item {Appendix~\ref{sec:exp_full}} documents the implementation details for the experiments reported in the main text. 
\end{itemize}

{Appendix~\ref{app:perturb}} gathers perturbation tools used across proofs, split into two parts: (i) when two MDPs have close transition kernels, their value functions are close; and (ii) after {truncation}, the resulting value functions remain close to those of the original MDP.

\medskip

{Appendices~\ref{sec:planning_proof}--~\ref{sec:full_proof}} develop the theory for {strong list replicability}.
\begin{itemize}
    \item {Appendix~\ref{sec:planning_proof}} analyzes the robust planner: it proves a small sub-optimality gap, establishes the mapping between the tolerance parameter \(r_{\text{action}}\) and the selected actions, and derives the generative-model list-size result.
    \item{Appendix~\ref{sec:reach}}  proves structural properties used by the strong result—most notably, that the number of distinct {truncated MDPs} (as a function of the reachability threshold) is finite and instance-dependent.
    \item {Appendix~\ref{sec:full_proof}} then combines the above ingredients into the complete proof of {strong list replicability}.
\end{itemize}

\medskip

{Appendix~\ref{appendix:weak}} presents the algorithm and proof for {weak list replicability}, which is technically simpler than the strong case.

{Appendix~\ref{sec:hardness}} establishes the {hardness (lower-bound)} result on list complexity.

\subsection{Proof outline of robust planner}

This part introduces the following scenario: when we have obtained estimates of all transition probabilities with small errors (\(M\) and \(\hat{M}\) are  \(\epsilon_0\)-related as defined in Equation~\ref{equ:epsilon_related}) , the returned policy satisfies both list replicability (Lemma~\ref{lem:num_policy}) and approximate optimality  (Lemma~\ref{lemma:actionall}) .

Lemma~\ref{lemma:actionall}: We obtain approximate optimality through the following decomposition:

\begin{align*}
\underbrace{V^{*}_{M} - V^{\hat{\pi}}_{M}}_{\mathrm{Lemma}~\ref{lemma:actionall}} &= \underbrace{V_{M}^* - V_{\hat{M}}^*}_{\mathrm{Lemma}~\ref{lemma M1M2}} + \underbrace{V^{*}_{\hat{M}} - V^{\hat{\pi}}_{\hat{M}} }_{\mathrm{Lemma}~\ref{lemma:raction}}+ \underbrace{V_{\hat{M}}^{\hat{\pi}}- V_{M}^{\hat{\pi}}}_{\mathrm{Lemma}~\ref{lemma M1M2pi}} \\
&\leq 2H^2 \epsilon_0 + \raction H.
\end{align*}

Lemma~\ref{lem:num_policy}:

We use $\hat{Q}_h(s, a)- \hat{V}_h(s)$ as an estimate of  $\gap_h(s, a) = V^*_h(s) - Q^*_h(s, a)$ .

\begin{align*}
    |\hat{Q}_h(s, a)- \hat{V}_h(s)-\gap_h(s, a)|&\le |\hat{Q}_h(s, a) - Q^*_{h, M}(s, a)| +| \hat{V}_h(s)- V^*_h(s) | \\
    &= \underbrace{|Q^*_{h, \hat{M}}(s, a)- Q^*_{h, M}(s, a)|}_{\mathrm{Lemma}~\ref{lemma M1M2}}+\underbrace{| V_{h,M}^*(s) - V_{h,\hat{M}}^*(s) |}_{\mathrm{Lemma}~\ref{lemma M1M2}} \\
    &\le 2H^2\epsilon_0
\end{align*}

Note that there are  \(|S||A|H\)  elements in the set \(\gap_M =  \{V^*_{h, M}(s) - Q^*_{h, M}(s, a) \mid (s, a) \in S \times A, h \in [H]\}\) which is defined in Equation~\ref{equ:gapm} .

From the figure above, we observe that for the \(\raction^1\) and \(\raction^2\) not in the shaded regions  \(  \bigcup_{g \in  \gap_{M}} \ball{g}{2H^2\epsilon_0}\) , if they lie in the same blank region between the two shaded regions, the policies \(\hat{\pi}\) they return are identical. 

When \(\epsilon_0\) is sufficiently small, the proportion of the shaded area, as well as the failure probability, becomes sufficiently small. 

Corollary~\ref{cor:5.3}:  Naturally, for the generative model, the length of the list is  \(|S||A|H+1\) . 
\begin{figure}
    \centering
    \includegraphics[width=0.8\linewidth]{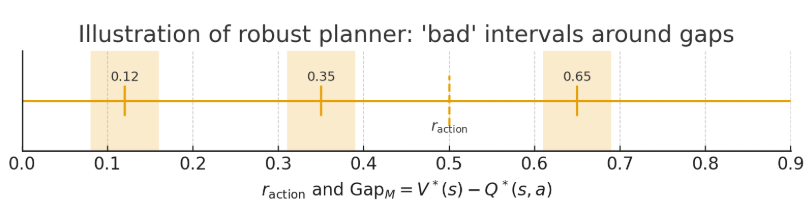}
    \caption{Robust planner}
    \label{fig:placeholder}
\end{figure}

\subsection{Proof outline of weakly replicable RL  }

For weak replicability, introduced in Algorithm~\ref{alg:weak}, we first estimate the reachability probabilities using a black-box algorithm, and then remove the states with low reachability probabilities.

 We use \(\hat{d}(s, h)\) defined in Algorithm~\ref{alg:weak} to estimate \(d_M^*(s,h)|\). When the sample size is sufficiently large, their values are very close:

\begin{align*}
\underbrace{|\hat{d}(s, h) - d_M^*(s,h)|}_{\mathrm{Lemma}~\ref{lemma B happens}} &= \underbrace{|d^{\hat{\pi}^{s, h}}_M(s,h) - \hat{d}(s, h) |}_{\text{chernoff  bound}} + \underbrace{|d_M^*(s,h) - d^{\hat{\pi}^{s, h}}_M(s,h)|}_{\text{properties of the Algorithm } \mathbb{A}  } \\
&\leq 2\epsilon_0.
\end{align*}

\begin{figure}
    \centering
    \includegraphics[width=0.8\linewidth]{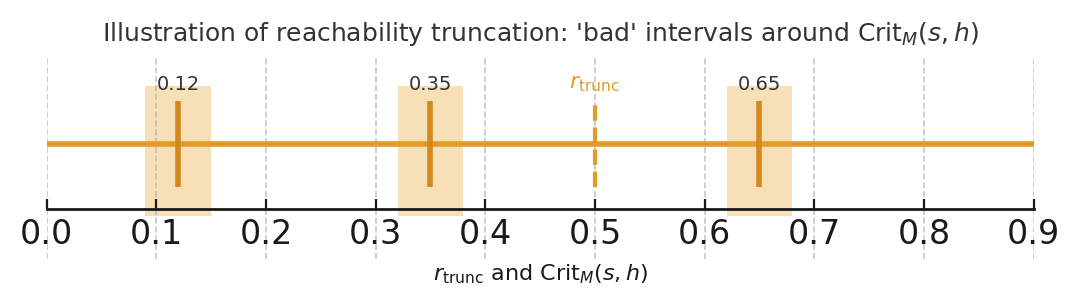}
    \caption{rtrunc illustration}
    \label{fig:placeholder}
\end{figure}
Lemma~\ref{lemma hhh}: Following the above approach, we define the shaded regions similarly for $\rtrunc$: 
\[
\badtrunc' = \bigcup_{(s, h) \in S \times [H]} \ball{ d_M^*(s,h)}{2\epsilon_0}, 
\]

There are  \(|S|H\)  elements in the set \(\{d_M^*(s,h)\}\)  , also note that the $\rtrunc$ values lying in the same blank region correspond to the same truncated MDP; thus, there are a total of $|S|H+1$  truncated MDPs $\truncMm{r}$. 

Based on the proof of the robust planner above (Lemma~\ref{lem:num_policy}), each truncated MDP  $\truncMm{r}$ corresponds to at most \(|S||A|H+1\) policies; thus, the total list length for weak replicability is $(|S|H+1)(|S||A|H+1)$

Lemma~\ref{ggg}:  The returned policy \(\pi\) is \(\epsilon\)-optimal.

\begin{align*}
    V^{*}_{M}-V^{\pi}_M&=   \underbrace{V^{*}_{M}-V^{*}_{\truncM{\rtrunc}}}_{\mathrm{Lemma}~\ref{lemma:MMr}}+\underbrace{V^{*}_{\truncM{\rtrunc}}- V^{\pi}_{\truncM{\rtrunc}}}_{\mathrm{Lemma}~\ref{lemma:actionall}}+\underbrace{V^{\pi}_{\truncM{\rtrunc}} - V^{\pi}_M}_{\mathrm{Lemma}~\ref{lemma:MMr}}\\
    &= H^2 |S| \rtrunc+ 2H^2 \epsilon_0 + \raction H +0\\
    &\le \epsilon
\end{align*}

\subsection{Proof outline of strong replicable RL  }

The key difference of strong list replicability lies in that we do not eliminate all the states to be removed at once; instead, we estimate the reachability probabilities using replicable policies \textbf{layer by layer} to remove the states. (Algorithm~\ref{alg:full})

Due to the dependency between the states removed across layers, the shaded regions we defined earlier are also interdependent; therefore, we must rely on structured information to control the number of truncated MDPs. (This is shown in Appendix~\ref{sec:reach})

Specifically, this property manifests as a form of \textbf{monotonicity}: the more states are removed in a given layer, the smaller the estimated reachability probabilities for the next layer, thereby leading to the removal of more states in the subsequent layer.  Thus, each state corresponds to a critical \(\rtrunc\) that determines whether the state is removed, this is defined in Defination~\ref{def:minr}:

\qquad\qquad For each $(s, h) \in S \times [H]$, define $
 \minr{s}{h} = \inf \{r \in [0, 1] \mid s \in \unreach{h}{r}\}
 $.

Therefore, it is easy to know that there are at most $|S|H+1$ truncated MDPs. 

We note that for each truncated MDP, when selecting policies for arbitrary states via layer-wise estimation, the policies lie within the list of length $|S||A|H+1$ (Lemma~\ref{lem:num_policy}). Since we perform this operation for all $|S|H$ states, the length of the returned trajectory list for each truncated MDP is $|S|H(|S||A|H+1)$. 

Combining with there are at most $|S|H+1$ truncated MDPs, the strong list size is \(O(|S|^3|A|H^3)\).

Note that we use \(d^*_{\tilde{M}^{h}}(s, h + 1)\) to estimate \(d^*_{\truncM{\rtrunc}}(s, h + 1)\) 
then for any $s \in S$, $|d^*_{\truncM{\rtrunc}}(s, h + 1) - d^*_{\tilde{M}^{h}}(s, h + 1)| \le H^2\epsilon_0$  (Lemma~\ref{lem:occupancy_estimate}).

So we just need \(\eta_0\) to be big enough and the failure probability will be small.

The same as weak replicability, we have the returned policy \(\pi\) is \(\epsilon\)-optimal.

\begin{align*}
    V^{*}_{M}-V^{\pi}_M&=   \underbrace{V^{*}_{M}-V^{*}_{\truncM{\rtrunc}}}_{\mathrm{Lemma}~\ref{lemma:MMr}}+\underbrace{V^{*}_{\truncM{\rtrunc}}- V^{\pi}_{\truncM{\rtrunc}}}_{\mathrm{Lemma}~\ref{lemma:actionall}}+\underbrace{V^{\pi}_{\truncM{\rtrunc}} - V^{\pi}_M}_{\mathrm{Lemma}~\ref{lemma:MMr}}\\
    &= H^2 |S| \rtrunc+ 2H^2 \epsilon_0 + \raction H +0\\
    &\le \epsilon
\end{align*}

\section{ Experiments Demonstrating List‑Replicability}
\label{app:exp1}

\subsection{Minimal Chain‑MDP }
We conduct preliminary numerical experiments to validate our theoretical predictions.

It directly validates our key claim for the {robust planner}  (Algorithm~\ref{alg:choosing action}): replacing strict argmax planning with the tolerance and lexicographic rule collapses the set of policies observed across independent runs from “many” (often exponential in the horizon on near‑tie instances) to a {small list}, consistent with our theory for the {generative model} .

\subsubsection{Setup}

We consider the following the Chain MDP with horizon \( H = 8 \); at each level \( h \in \{0, \ldots, H - 1\} \) there is a single state and two actions \( a \in \{0, 1\} \). Choosing \( a \) either advances to the next level (success) or transitions to an absorbing failure state (no reward). Only success at the last level yields reward 1. We make the two actions nearly tied:
\[
p_{h,0} = 0.5 + \Delta, \quad p_{h,1} = 0.5 - \Delta, \quad \Delta = 0.02.
\]

This is the standard near-tie chain where small estimation noise can flip action choices at many levels, yielding up to \( 2^H \) distinct greedy policies, exactly the pathology highlighted in Section~\ref{sec:tech}.

For each level–action pair \( (h, a) \), we draw \( n = 40 \) i.i.d. next-state samples from the simulator, form an empirical MDP \( \widehat{M} \), and compute \( \widehat{Q}, \widehat{V} \) by backward DP. We notice this is exactly the generative model case.

We compared the following two planners.
\begin{itemize}
    \item Greedy: \( \pi_h = \arg\max_a \widehat{Q}_h(\cdot, a) \).
    \item Robust planner (Alg.\ 1): with a fixed tolerance \( r_{\text{action}} \), select the first action in a fixed lexicographic order (action 0 before 1) among those satisfying
    \[
    \widehat{Q}_h(\cdot, a) \geq \max_{a'} \widehat{Q}_h(\cdot, a') - r_{\text{action}}.
    \]
\end{itemize}
When \( r_{\text{action}} = 0 \), this reduces exactly to greedy.

 Over \( R = 500 \) independent runs with fresh samples, we count the number of distinct final deterministic policies produced by each planner, denoted distinct policies. This is the empirical analogue of the weak list size.

\subsubsection{Result}
Figure~\ref{fig:policy} shows that when using the greedy algorithm, policies are more dispersed, whereas when using the robust planner, policies are more concentrated, demonstrating stronger replicability and stability.

Figure~\ref{fig:policy2} shows that the list size monotonically decreases with threshold. 

\begin{figure}
    \centering
    \includegraphics[width=0.8\linewidth]{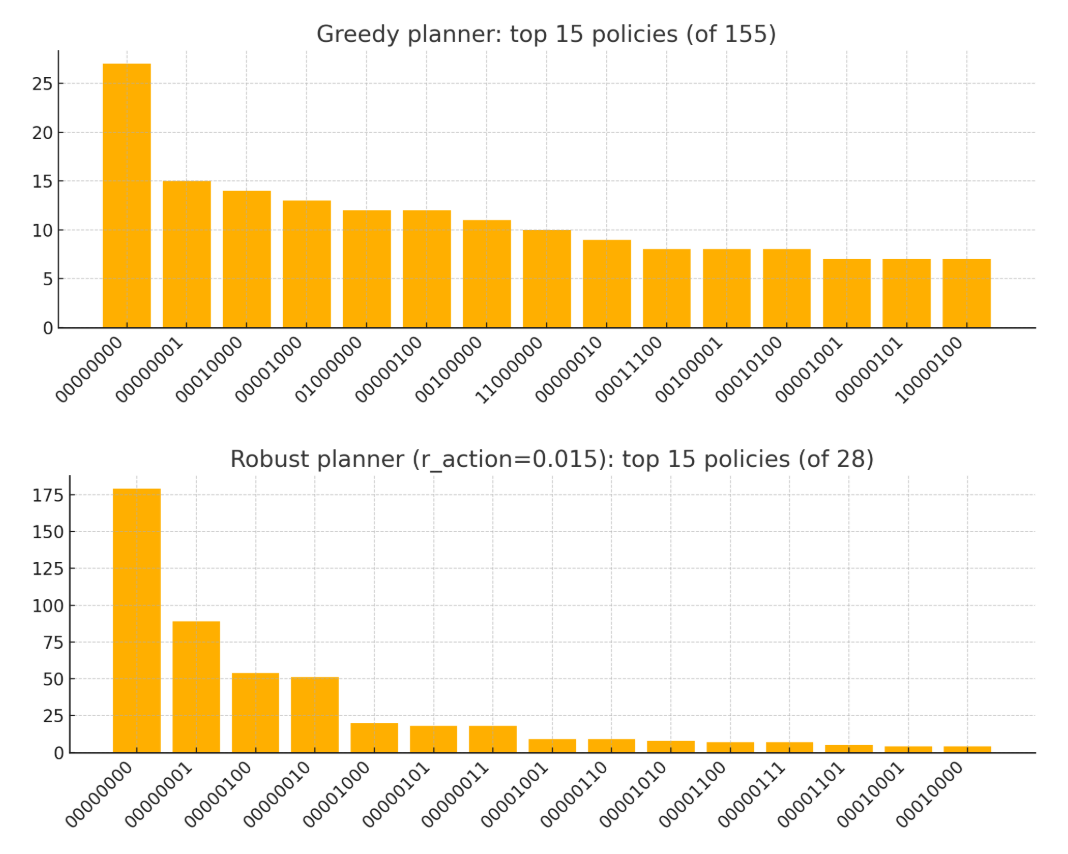}
    \caption{Policy}
    \label{fig:policy}
\end{figure}

\begin{figure}
    \centering
    \includegraphics[width=0.75\linewidth]{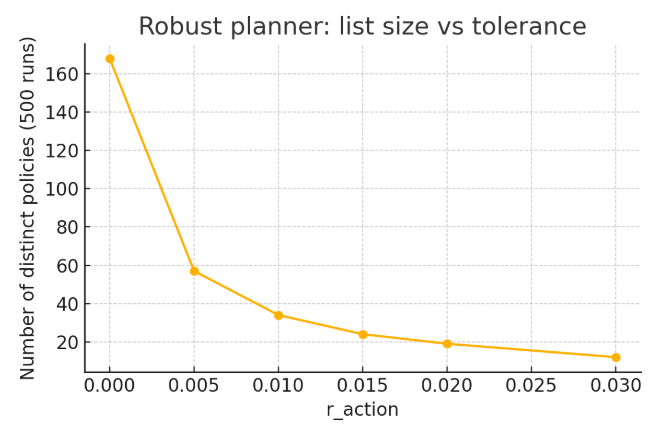}
    \caption{Numbers of Policies over $r_{action}$}
    \label{fig:policy2}
\end{figure}

\subsubsection{Analyze}

\paragraph{(1)} We observed from Figure~\ref{fig:policy2} that the list size monotonically decreases with threshold. The line plot shows that when \( r_{\text{action}} \) increases from 0 to 0.03, the number of distinct policies drops from 168 to 12, almost monotonically. This is completely consistent with the core criterion of Lemma~\ref{lem:num_policy}. 
\paragraph{(2)}We observed that {Greedy (\( r_{\text{action}} = 0 \)) is extremely unstable, which matches the exponential policy count of the chain counter example} . The line plot shows 168 policies at \( r_{\text{action}} = 0 \) (over 500 runs), while theoretically, the greedy policy in the chain MDP can have up to \( \approx 2^H \) outputs under multi-level tiny gaps. The observation is entirely isomorphic to the chain example in Section~\ref{sec:tech} of the paper: strict \( \arg\max Q \) amplifies tiny statistical fluctuations at each level layer by layer, leading to discontinuous jumps across exponentially many policies across runs.

\paragraph{(3)} {Robust Planner Turns Exponential into Polynomial: 
    Under the generative model setting, Corollary~\ref{cor:5.3} proves that if \( r_{\text{action}} \) is chosen randomly and avoids bad gaps, the number of possible output policies is at most \( |S||A|H + 1 \). Our chain environment satisfies \( |S| = H \), \( |A| = 2 \), so the upper bound is \( 2H^2 + 1 \). For \( H = 8 \), the upper bound is 129; our list size (12–57) for \( r_{\text{action}} \in [0.005, 0.03] \) is significantly below the worst-case upper bound. This is consistent with the theoretical expectation that the upper bound is for the worst case, and specific instances are often smaller.

\subsection{An GridWorld Experiment}

Given that the experimental setup described earlier is overly simplistic, we have conducted analogous experiments in the more complex discrete GridWorld environment. Since the analytical process is analogous to that presented previously, we only elaborate on the experimental setup and report the corresponding results herein.

\subsubsection{Setup}

\begin{itemize}
    \item \textbf{Environment}: An \( N \times N \) grid (default \( 5 \times 5 \)), with the start state \( (0, 0) \) and the terminal state \( (N-1, N-1) \). The action set is \( \{ \text{R}, \text{U} \} \).
    \item \textbf{Transition}: Executing \(\text{R}/\text{U}\) succeeds in moving forward with probability \( p_{\text{true}}(s, a) \); otherwise, the agent enters a failure absorbing state. Reaching the terminal state yields a reward of 1 and terminates the episode. To create nearly tied action values, a checkerboard-style minor advantage is introduced:
    \[
    p_{\text{true}}(s, \text{R}) = 0.5 \pm \delta, \quad p_{\text{true}}(s, \text{U}) = 0.5 \mp \delta \ (\text{opposite signs for adjacent grids})
    \]
    \item \textbf{Learning/Planning}: Generative sampling is used to estimate \(\hat{p}(s, a)\) (with \( n_{\text{per pair}} \) samples per state-action pair), followed by dynamic programming to obtain \(\hat{Q}\).
    \begin{itemize}
        \item \textbf{Ordinary}: Greedily select actions via \(\mathrm{argmax} \hat{Q}\) for each grid.
        \item \textbf{Robust}: Select actions lexicographically (\(\text{R} < \text{U}\)) within \( \max_a \hat{Q}(s, a) - r_{\text{action}} \) (a simplified implementation of Algorithm 1).
    \end{itemize}
    \item \textbf{Metrics}:
    \begin{enumerate}
        \item \textbf{Policy}: Count the number of distinct output policies across the entire table.
        \item \textbf{Trajectory-level (Strong List)}: Follow the learned policy from the start state to the terminal state, count the number of distinct action sequences, and report the minimum \( k \) required to cover 90\% of runs.
    \end{enumerate}
\end{itemize}

\subsubsection{Result}

\textbf{Result 1: List Size Shrinks Significantly with Increasing \( r_{\text{action}} \) (Policy-level)}
We extend \( r_{\text{action}} \) to \( [0, 0.001, 0.002, 0.0035, 0.005, 0.01, 0.02] \).

\noindent Number of distinct output policies (over 500 runs):
\begin{table}[h]
\centering
\begin{tabular}{@{}cccccccc@{}}
\toprule
\( r_{\text{action}} \) & 0.0000 & 0.0010 & 0.0020 & 0.0035 & 0.0050 &0.01& 0.02\\
\midrule
List Size               & 465    & 442    & 415    & 362    & 290&160  &56  \\
\bottomrule
\end{tabular}
\end{table}

A monotonic and rapid decrease is also observable in the figure: when \( r \) increases from 0 to 0.01, the list size drops from $\sim$465 to $\sim$160; further increasing to 0.02, only 56 policies remain. (Upper line chart: GridWorld \( 5 \times 5 \): list size vs larger \( r_{\text{action}} \))
\begin{figure}
    \centering
    \includegraphics[width=0.7\linewidth]{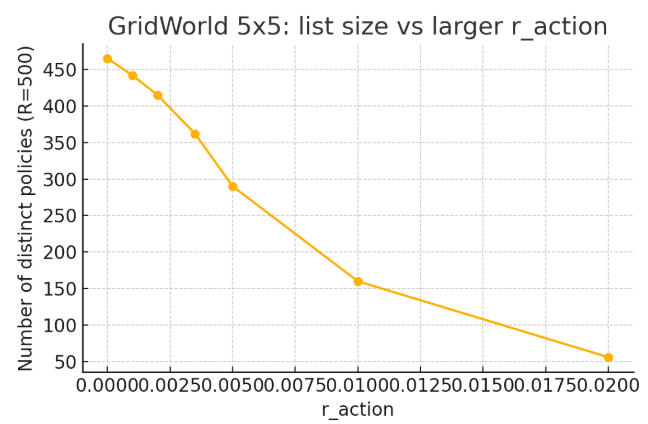}
    \caption{}
    \label{fig:placeholder}
\end{figure}

\textbf{Result 2: Trace Collapses to Very Few Trajectories under Large \( r \)}
For \( r_{\text{action}} = 0.02 \), the number of distinct action sequences from start to terminal state and the minimum \( k \) required to cover 90\% of runs are as follows:
\begin{itemize}
    \item Greedy: 64 distinct trajectories, \( k_{90} = 40 \), and Top-1 coverage is only 9.2\%.
    \item Robust: 5 distinct trajectories, \( k_{90} = 2 \), and Top-1 coverage is 89.0\%.
\end{itemize}

\begin{figure}
    \centering
    \includegraphics[width=0.5\linewidth]{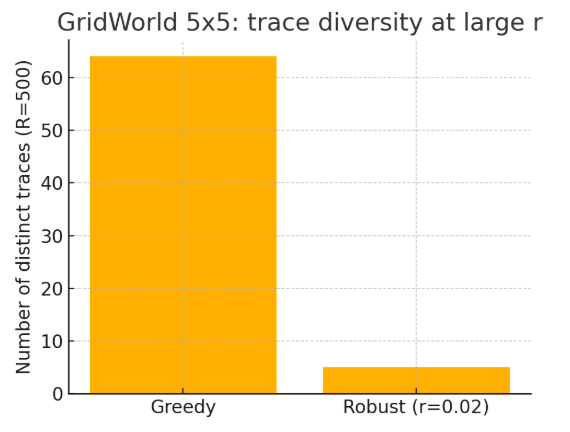}
    \caption{Trace}
    \label{fig:placeholder}
\end{figure}

\begin{figure}
    \centering
    \includegraphics[width=0.5\linewidth]{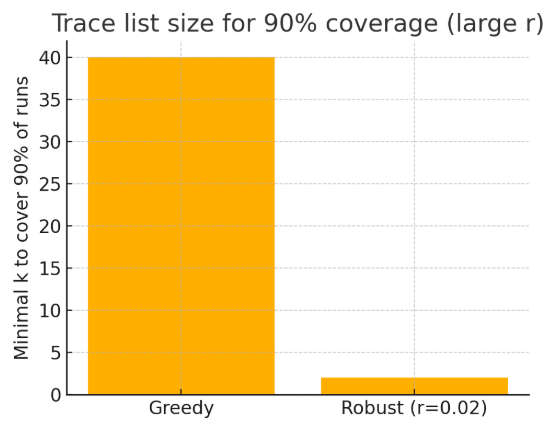}
    \caption{Trace 2}
    \label{fig:placeholder}
\end{figure}

\section{Missing Proofs in Section~\ref{sec:planning}}\label{sec:planning_proof}

\begin{lemma}\label{lemma:raction}
Suppose that two  MDPs \(M\) and \(\hat{M}\) are  \(\epsilon_0\)-related. For the policy $\hat{\pi}$ returned by Algorithm~\ref{alg:choosing action}, it holds that
$$
0 \le V^{*}_{\hat{M}}- V^{\hat{\pi}}_{\hat{M}} \leq \raction H.
$$

\end{lemma}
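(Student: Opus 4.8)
The lower bound $0 \le V^{*}_{\hat{M}} - V^{\hat{\pi}}_{\hat{M}}$ is immediate, since $V^{*}_{\hat{M}}$ is by definition the value of an optimal policy in $\hat{M}$ and therefore dominates the value of any policy, in particular $\hat{\pi}$. I note that the $\epsilon_0$-relatedness hypothesis plays no role in this statement: every quantity appearing in the conclusion is internal to $\hat{M}$, and the hypothesis is inherited only for uniformity with the neighboring lemmas. The single structural fact I will use is that Algorithm~\ref{alg:choosing action} produces a policy that is \emph{near-greedy} with respect to the optimal $Q$-function of $\hat{M}$: by the defining rule $\hat{\pi}_h(s) = \min\{a \in A \mid Q^*_{h, \hat{M}}(s, a) \ge V^*_{h, \hat{M}}(s) - \raction\}$, the chosen action $a = \hat{\pi}_h(s)$ automatically satisfies $Q^*_{h, \hat{M}}(s, a) \ge V^*_{h, \hat{M}}(s) - \raction$ for every $(s, h) \in S \times [H]$.

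My plan is to establish the sharper per-state bound $V^*_{h, \hat{M}}(s) - V^{\hat{\pi}}_{h, \hat{M}}(s) \le (H - h)\raction$ for all $s \in S$ and $h \in [H]$ by backward induction on $h$, and then specialize to $h = 0$, $s = s_0$. The base case is the convention $V^*_{H, \hat{M}} = V^{\hat{\pi}}_{H, \hat{M}} = 0$. For the inductive step, write $a = \hat{\pi}_h(s)$, use $V^{\hat{\pi}}_{h, \hat{M}}(s) = Q^{\hat{\pi}}_{h, \hat{M}}(s, a)$, and decompose
\begin{align*}
V^*_{h, \hat{M}}(s) - V^{\hat{\pi}}_{h, \hat{M}}(s)
&= \bigl(V^*_{h, \hat{M}}(s) - Q^*_{h, \hat{M}}(s, a)\bigr) + \bigl(Q^*_{h, \hat{M}}(s, a) - Q^{\hat{\pi}}_{h, \hat{M}}(s, a)\bigr).
\end{align*}
The first bracket is at most $\raction$ by the near-greedy property. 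Since the immediate reward and the transition kernel at $(s, a)$ are the same for both $Q$-functions (both are computed in $\hat{M}$), the second bracket equals $\mathbb{E}_{s' \sim \hat{P}_h(s, a)}\bigl[V^*_{h+1, \hat{M}}(s') - V^{\hat{\pi}}_{h+1, \hat{M}}(s')\bigr]$, which is at most $(H - h - 1)\raction$ by the induction hypothesis. Summing the two contributions gives $(H - h)\raction$ and closes the induction; evaluating at $h = 0$, $s = s_0$ yields $V^*_{\hat{M}} - V^{\hat{\pi}}_{\hat{M}} \le H\raction$.

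Alternatively, the bound follows in one line from the performance difference lemma~\citep{kakade2002approximately}: expanding $V^{\hat{\pi}}_{\hat{M}} - V^{*}_{\hat{M}} = \sum_{h=0}^{H-1}\sum_{s} d^{\hat{\pi}}_{\hat{M}}(s, h)\bigl(Q^*_{h, \hat{M}}(s, \hat{\pi}_h(s)) - V^*_{h, \hat{M}}(s)\bigr)$ and lower-bounding each factor $Q^*_{h, \hat{M}}(s, \hat{\pi}_h(s)) - V^*_{h, \hat{M}}(s) \ge -\raction$ via the near-greedy property (together with $\sum_s d^{\hat{\pi}}_{\hat{M}}(s, h) = 1$) gives $V^{\hat{\pi}}_{\hat{M}} - V^{*}_{\hat{M}} \ge -H\raction$. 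I do not anticipate a genuine obstacle here, as this is a standard near-greedy suboptimality estimate. The only step requiring care is the clean split of each per-level gap into the greediness slack (bounded by $\raction$) and the propagated continuation-value difference (bounded by induction), along with correct bookkeeping of the horizon indices so that the accumulated slack telescopes to exactly $H\raction$ at $h = 0$.
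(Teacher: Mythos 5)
Your proof is correct and follows essentially the same route as the paper's: the identical decomposition $V^*_{h,\hat{M}}(s) - V^{\hat{\pi}}_{h,\hat{M}}(s) = \bigl(V^*_{h,\hat{M}}(s) - Q^*_{h,\hat{M}}(s,\hat{\pi}_h(s))\bigr) + \bigl(Q^*_{h,\hat{M}}(s,\hat{\pi}_h(s)) - Q^{\hat{\pi}}_{h,\hat{M}}(s,\hat{\pi}_h(s))\bigr)$, the same use of the near-greedy selection rule to bound the first term by $\raction$, and the same backward induction to propagate the second term. Your per-state bound $(H-h)\raction$ is just a slightly more explicit form of the paper's $\max_s$ recursion, and your parenthetical performance-difference-lemma alternative matches the route the paper itself cites for the companion lemma.
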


\begin{proof}
The lower bound, i.e., \( 0 \leq V^{*}_{\hat{M}} - V^{\hat{\pi}}_{\hat{M}} \), is immediate from the definition of \( V^*_{\hat{M}} \).

We now prove the upper bound by induction on the time step \( h \).

For \( 0 \leq h \leq H-1 \), we have
\begin{align*}
V^{*}_{h,\hat{M}}(s) - V^{\hat{\pi}}_{h,\hat{M}}(s)
&= V^{*}_{h,\hat{M}}(s) - Q^{*}_{h,\hat{M}}(s,\hat{\pi}_h(s)) + Q^{*}_{h,\hat{M}}(s,\hat{\pi}_h(s)) - Q^{\hat{\pi}}_{h,\hat{M}}(s,\hat{\pi}_h(s)) \\
&\overset{(1)}{\leq} \raction + \sum_{s'}{\hat{P}_h(s'|s,\hat{\pi}_h(s)) \cdot V^{*}_{h+1,\hat{M}}(s')} - \sum_{s'}{\hat{P}_h(s'|s,\hat{\pi}_h(s)) \cdot V^{\hat{\pi}}_{h+1,\hat{M}}(s')} \\
&= \raction + \sum_{s'}{\hat{P}_h(s'|s,\hat{\pi}_h(s)) \cdot \left(V^{*}_{h+1,\hat{M}}(s') - V^{\hat{\pi}}_{h+1,\hat{M}}(s')\right)} \\
&\leq \raction + \max_s{\left(V^{*}_{h+1,\hat{M}}(s) - V^{\hat{\pi}}_{h+1,\hat{M}}(s)\right)}.
\end{align*}

Inequality (1) follows from the definition of $\hat{\pi}$, which guarantees that
\[
V^{*}_{h,\hat{M}}(s) - Q^{*}_{h,\hat{M}}(s,\hat{\pi}_h(s)) \leq \raction.
\]

When \( h = H \), we have \( V^{*}_{H,\hat{M}}(s) = V^{\hat{\pi}}_{H,\hat{M}}(s) = 0 \).
By induction, we have
\[
V^{*}_{\hat{M}} - V^{\hat{\pi}}_{\hat{M}} \leq \raction H.
\]
This completes the proof.
\end{proof}

\begin{proof}[Proof of Lemma~\ref{lemma:actionall}]
%The lower bound, i.e., \( 0 \leq V^{*}_{M} - V^{\hat{\pi}}_{M} \), follows immediately from the definition of \( V^*_{M}(s_0) \) as the optimal value function, which is always at least as large as the value function of any specific policy.

From Lemma~\ref{lemma:raction}, we have:
\[
V^{*}_{\hat{M}} - V^{\hat{\pi}}_{\hat{M}}\leq \raction H.
\]

By Lemma~\ref{lemma M1M2}, it follows that:
\[
\left| V_{M}^* - V_{\hat{M}}^* \right| \leq H^2  \epsilon_0.
\]

Similarly, from Lemma~\ref{lemma M1M2pi}, we obtain:
\[
\left| V_{M}^{\hat{\pi}} - V_{\hat{M}}^{\hat{\pi}} \right| \leq H^2  \epsilon_0.
\]

By combining these inequalities, we have
\begin{align*}
V^{*}_{M} - V^{\hat{\pi}}_{M} &= V_{M}^* - V_{\hat{M}}^* + V^{*}_{\hat{M}} - V^{\hat{\pi}}_{\hat{M}} + V_{\hat{M}}^{\hat{\pi}}- V_{M}^{\hat{\pi}} \\
&\leq 2H^2 \epsilon_0 + \raction H.
\end{align*}
\end{proof}

\begin{proof}[Proof of Lemma~\ref{lem:num_policy}]

By Lemma~\ref{lemma M1M2},

For any \((h,s,a) \in [H-1]\times S \times A\)
\[
\left| V_{h,M}^*(s) - V_{h,\hat{M}}^*(s) \right| \leq H^2  \epsilon_0,
\]
\[
\left| Q_{h,M}^*(s,a) - Q_{h,\hat{M}}^*(s,a) \right| \leq H^2  \epsilon_0.
\]
Hence, 
\begin{align*}
& \left| \left( V_{h,\hat{M}}^*(s) - Q^*_{h,\hat{M}}(s,a) \right) - \left( V_{h,M}^*(s) - Q^*_{h,M}(s,a) \right) \right| \\
& \leq \left| V_{h,M}^*(s) - V_{h,\hat{M}}^*(s) \right| + \left| Q_{h,M}^*(s,a) - Q_{h,\hat{M}}^*(s,a) \right| \\
& \leq 2 H^2 \epsilon_0.
\end{align*}

For any \( g \in \gap_M \), where \( g = V^*_h(s) - Q^*_h(s, a) \), if \( g < \raction^1 < \raction^2 \), then, because \( \raction^1 \notin \bigcup_{g \in  \gap_{M}} \ball{g}{2H^2\epsilon_0}\) and \( \raction^2 \notin \bigcup_{g \in  \gap_{M}} \ball{g}{2H^2\epsilon_0} \), we have
\[
\left( V_{h,M}^*(s) - Q^*_{h,M}(s,a) \right) + 2H^2 \epsilon_0  < \raction^1 < \raction^2.
\]
Using the previous bound, we conclude that
\[
V_{h,\hat{M}}^*(s) - Q^*_{h,\hat{M}}(s,a) < \raction^1 < \raction^2.
\]

Similarly, if \( \raction^1 < \raction^2 < g \), we also have:
\[
\raction^1 < \raction^2 < V_{h,\hat{M}}^*(s) - Q^*_{h,\hat{M}}(s,a).
\]

Therefore, for both tolerance parameters \( \raction^1 \) and \( \raction^2 \), the chosen action $\hat{\pi}_h(s)$ remains the same for all $(s, h) \in S \times [H]$. As a result, the policy \( \hat{\pi} \)  depends only on  \(M\) and  \(\raction\). Moreover, for both tolerance parameters \( \raction^1 \) and \( \raction^2 \), the policy \( \hat{\pi} \) returned would be identical.
\end{proof}

\begin{corollary}
In the generative model setting, there is an algorithm with sample complexity polynomial in $|S|$, $|A|$, $1 / \epsilon$ and $1 / \delta$, such that with probability at least $1 - \delta$, the returned policy is $\epsilon$-optimal and always lies in a list $\Pi(M)$ where $\Pi(M)$ is a list of policies that depend only on the unknown underlying MDP $M$ with $|\Pi(M)| = O(|S| |A| H)$. 
\end{corollary}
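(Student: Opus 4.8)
\textbf{Proof proposal for Corollary~\ref{cor:5.3}.}
The plan is to combine the two guarantees already established for the robust planner (Lemma~\ref{lemma:actionall} for suboptimality and Lemma~\ref{lem:num_policy} for list size) with a standard generative-model concentration argument, choosing the tolerance parameter $\raction$ at random so that it avoids the ``bad set'' around the gaps with high probability. First I would describe the algorithm: for every $(s,a,h) \in S \times A \times [H]$, draw $N$ i.i.d.\ samples from $P_h(s,a)$ to form the empirical transition kernel $\hat{P}_h(s,a)$, assemble the empirical MDP $\hat{M}$, then draw a single tolerance parameter $\raction \sim \mathrm{Unif}(0, c\epsilon/H)$ for an appropriate constant $c$, and finally run Algorithm~\ref{alg:choosing action} on $\hat{M}$ with parameter $\raction$ to return $\hat{\pi}$.

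The key steps, in order, are as follows. First, a concentration step: by a standard $\ell_1$ concentration bound for empirical distributions (e.g.\ the bounded-differences or Chernoff/union bound), taking $N = \mathrm{poly}(|S|,|A|,H,1/\epsilon_0,\log(|S||A|H/\delta))$ samples per pair guarantees that with probability at least $1-\delta/2$, the empirical kernel satisfies $\sum_{s'}|\hat{P}_h(s'\mid s,a) - P_h(s'\mid s,a)| \le \epsilon_0$ simultaneously for all $(s,a,h)$, i.e.\ $M$ and $\hat{M}$ are $\epsilon_0$-related. Second, an accuracy step: conditioned on $\epsilon_0$-relatedness, Lemma~\ref{lemma:actionall} gives $V^{\hat{\pi}}_M \ge V^*_M - 2H^2\epsilon_0 - \raction H$; choosing $\epsilon_0$ small enough that $2H^2\epsilon_0 \le \epsilon/2$ and $\raction \le \epsilon/(2H)$ (which holds by the range of the uniform distribution) makes $\hat{\pi}$ an $\epsilon$-optimal policy. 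Third, a replicability step: define the bad set $\badaction = \bigcup_{g \in \gap_M} \ball{g}{2H^2\epsilon_0}$, which is a union of at most $|S||A|H$ intervals each of length $4H^2\epsilon_0$; by drawing $\raction$ uniformly from an interval of length $\Theta(\epsilon/H)$, the probability $\Pr[\raction \in \badaction] \le \delta/2$ provided $\epsilon_0$ is chosen with an extra factor of $\delta$ in the denominator, i.e.\ $\epsilon_0 = \Theta(\epsilon\delta/(|S||A|H^3))$.

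Fourth, the list-size step: the gap set $\gap_M$ has at most $|S||A|H$ distinct values, which partition the real line into at most $|S||A|H + 1$ open intervals; by Lemma~\ref{lem:num_policy}, any two tolerance parameters lying in the same interval (and outside $\badaction$) yield the identical returned policy. Hence the set of policies $\Pi(M) = \{\hat{\pi} : \raction \notin \badaction\}$ has cardinality $O(|S||A|H)$ and depends only on $M$ (through $\gap_M$ and the true optimal $Q$-values), not on the random samples. Finally, a union bound over the concentration event and the event $\raction \notin \badaction$ shows that with probability at least $1-\delta$ the returned policy is both $\epsilon$-optimal and lies in $\Pi(M)$, and the total sample complexity $N \cdot |S||A|H = \mathrm{poly}(|S|,|A|,H,1/\epsilon,1/\delta)$ is as claimed.

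The main obstacle I anticipate is the careful calibration of the three interacting parameters: $\epsilon_0$ must be small enough simultaneously for the accuracy bound ($2H^2\epsilon_0 \le \epsilon/2$), for the bad-set measure to be controlled ($|S||A|H \cdot 4H^2\epsilon_0 \le (\delta/2)\cdot \text{length of }\raction\text{-range}$), and the $\raction$-range must itself be small enough to keep $\raction H \le \epsilon/2$. Threading all three inequalities together forces $\epsilon_0 = \Theta(\epsilon\delta/\mathrm{poly}(|S|,|A|,H))$, which is precisely the source of the $\mathrm{poly}(1/\delta)$ (rather than $\log(1/\delta)$) sample complexity characteristic of list-replicable algorithms; verifying that these constraints are mutually consistent and yield the stated polynomial sample complexity is the delicate bookkeeping, while the conceptual content is entirely supplied by Lemmas~\ref{lemma:actionall} and~\ref{lem:num_policy}.
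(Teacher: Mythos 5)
Your proposal matches the paper's proof essentially step for step: build the empirical MDP from $N$ generative samples per $(s,a,h)$, draw $\raction$ uniformly from an interval of length $\Theta(\epsilon/H)$, condition on $\epsilon_0$-relatedness and on $\raction$ avoiding $\bigcup_{g\in\gap_M}\ball{g}{2H^2\epsilon_0}$, then invoke Lemma~\ref{lemma:actionall} for accuracy and Lemma~\ref{lem:num_policy} for the list bound $|S||A|H+1$. If anything, your calibration $\epsilon_0 = \Theta(\epsilon\delta/(|S||A|H^3))$ is the more careful one, since the union over the $|S||A|H$ bad intervals does require that factor in the denominator, whereas the paper's stated $\epsilon_0 = \delta\epsilon/(20H^3)$ omits it.
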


\begin{proof}
We collect $N$ samples for each $(s, a) \in S \times A$ and $h \in [H]$ where $N$ is polynomial in $|S|$, $|A|$, $H$, $1 / \epsilon$ and $1 / \delta$, and use the samples to build an empirical transition model $\hat{P}$ to form an MDP $\hat{M}$. We then invoke Algorithm~\ref{alg:choosing action} with MDP $\hat{M}$ and $\raction\sim \mathrm{Unif}(0, \epsilon/(5H))$ and return its output.
% of Algorithm~\ref{alg:choosing action}.
Standard analysis shows tha \(M\) and \(\hat{M}\) are  \(\epsilon_0\)-related with $\epsilon_0 = \delta\epsilon / (20H^3)$ with probability at least $1 - \delta / 2$. Moreover, $\raction \notin \bigcup_{g \in  \gap_{M}} \ball{g}{2H^2\epsilon_0} $ with probability at least $1 - \delta / 2$. We condition on the intersection of the above two events which holds with probability at least $1 - \delta$ by union bound. 
By Lemma~\ref{lemma:actionall}, the returned policy is $\epsilon$-optimal.
By Lemma~\ref{lem:num_policy}, the returned policy lies in a list $\Pi(M)$ with size at most $|S||A|H + 1$ since $|\gap_M| \le |S||A|H$. 
\end{proof}
 \section{Structural Characterizations of Reaching Probabilities in Truncated MDPs}\label{sec:reach}
 
 In this section, we prove several properties of reaching probabilities in MDPs with truncation which will be used later in the analysis Given a reaching probability threshold $r \in [0, 1]$, 
we first define the set of unreachable states $\unreach{h}{r}$ for each $h \in [H]$.
 \begin{definition}\label{def:unreach}
For the underlying MDP $M = (S, A, P, R, H, s_0)$, given a real number $r \in [0, 1]$, we define $\unreach{h}{r} \subseteq S$ inductively for each $h \in [H]$ as follows:
\begin{itemize}
\item $\unreach{0}{r} = \{s \in S \mid \Pr[s_0 = s] \le r\}$;
\item Suppose $\unreach{h'}{r} \subseteq S$ is defined for all $0 \le h' < h$, define
\[
\unreach{h}{r}= \{s \in S \mid \max_{\pi} \Pr[s_h = s, s_0 \notin \unreach{0}{r},  s_1 \notin \unreach{1}{r}, \ldots, s_{h - 1} \notin \unreach{h - 1}{r} \mid M, \pi] \le r\}.
\]
\end{itemize}
We also write $U(r) = (U_0(r), U_1(r), \ldots, U_{H - 1}(r))$.
 \end{definition}
 Intuitively, the set of unreachable states $\unreach{h}{r}$ at level $h \in [H]$ includes all those states that can not be reached with probability larger than a threshold $r$ for any policy $\pi$, where we ignore those unreachable states included in $\unreach{h'}{r}$ for all levels $h' < h$ when calculating the reaching probabilities. Also note that $\unreach{h}{1} = S$. 
 
 The main observation is that $\unreach{h}{r}$ satisfies the following monotonicity property. 
 
 \begin{lemma}\label{lem:inclusion}
Given $0 \le r_1 \le r_2 \le 1$, for any $h \in [H]$, we have $\unreach{h}{r_1} \subseteq \unreach{h}{r_2}$.
 \end{lemma}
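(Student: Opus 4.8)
The plan is to prove the inclusion $\unreach{h}{r_1} \subseteq \unreach{h}{r_2}$ by induction on $h$. The key auxiliary quantity is the ``survival-weighted'' reaching probability that appears in Definition~\ref{def:unreach}, namely
\[
\tnreach{h}{s} := \max_{\pi} \Pr[s_h = s,\ s_0 \notin \unreach{0}{r},\ \ldots,\ s_{h-1} \notin \unreach{h-1}{r} \mid M, \pi],
\]
which depends on $r$ both through the threshold and through the sets $\unreach{h'}{r}$ being avoided. Since $\unreach{h}{r} = \{s \mid \tnreach{h}{s} \le r\}$, monotonicity in the threshold alone would be trivial; the difficulty is that changing $r$ also changes \emph{which} states are being excluded at earlier levels, so I must track how the induction hypothesis $\unreach{h'}{r_1} \subseteq \unreach{h'}{r_2}$ for $h' < h$ feeds into the level-$h$ comparison.

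For the base case $h = 0$, the sets are $\{s \mid \Pr[s_0 = s] \le r\}$, which are nested in $r$ since a larger threshold admits all states admitted by a smaller one. For the inductive step, I assume $\unreach{h'}{r_1} \subseteq \unreach{h'}{r_2}$ for all $h' < h$ and must show every $s \in \unreach{h}{r_1}$ lies in $\unreach{h}{r_2}$. The crucial monotonicity observation is this: enlarging the excluded sets at earlier levels can only \emph{decrease} the survival-weighted reaching probability, because the event being measured, $\{s_{h'} \notin \unreach{h'}{r}\}$ for each $h' < h$, becomes harder to satisfy when $\unreach{h'}{r}$ grows. Concretely, for a fixed policy $\pi$ the indicator $\mathbbm{1}[s_0 \notin \unreach{0}{r_2}, \ldots, s_{h-1} \notin \unreach{h-1}{r_2}]$ is pointwise no larger than the corresponding indicator with the $r_1$-sets, by the inductive inclusion; hence the probability under $r_2$ is at most that under $r_1$ for every $\pi$, and taking the maximum over $\pi$ preserves the inequality. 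Writing $\tnreach[2]{h}{s}$ and $\tnreach[1]{h}{s}$ for the two survival-weighted quantities, this gives $\tnreach[2]{h}{s} \le \tnreach[1]{h}{s}$.

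With this in hand the step closes by combining the two sources of monotonicity: if $s \in \unreach{h}{r_1}$ then $\tnreach[1]{h}{s} \le r_1 \le r_2$, and since $\tnreach[2]{h}{s} \le \tnreach[1]{h}{s} \le r_2$ we conclude $s \in \unreach{h}{r_2}$, completing the induction. The main obstacle I anticipate is making the ``enlarging the excluded sets decreases the reaching probability'' step fully rigorous, since the $\max$ over policies is taken \emph{separately} for each $r$: I must argue the inequality holds policy-by-policy (which it does, by the pointwise indicator comparison) before passing to the supremum, rather than trying to compare the two maximizing policies directly. A clean way to phrase this is to fix an arbitrary $\pi$, establish the pointwise domination of the survival indicators from the inductive hypothesis, integrate to get the inequality of expectations for that $\pi$, and only then take the supremum on both sides.
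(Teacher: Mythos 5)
Your proposal is correct and follows essentially the same route as the paper's proof: induction on $h$, with the inductive step established by fixing an arbitrary policy $\pi$, noting that the survival event with the $r_2$-exclusion sets is contained in the one with the $r_1$-exclusion sets (by the inductive hypothesis), and only then taking the maximum over policies before chaining $r_1 \le r_2$. The care you take about comparing policy-by-policy rather than comparing the two maximizers directly matches the paper's argument exactly.
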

 \begin{proof}
 We prove the above claim by induction on $h$. The claim is clearly true when $h = 0$.
 Suppose the above claim is true for all $0 \le h' <h$, now we prove that $\unreach{h}{r_2} \subseteq \unreach{h}{r_1}$. 
 Considering a fixed state $s \in S$, for any fixed policy $\pi$, we have 
 \begin{align*}
&\Pr[s_h = s, s_0 \notin \unreach{0}{r_1},  s_1 \notin \unreach{1}{r_1}, \ldots, s_{h - 1} \notin \unreach{h - 1}{r_1} \mid M, \pi] \\
\ge & \Pr[s_h = s, s_0 \notin \unreach{0}{r_2},  s_1 \notin \unreach{1}{r_2}, \ldots, s_{h - 1} \notin \unreach{h - 1}{r_2} \mid M, \pi],
 \end{align*}
since $\unreach{h'}{r_1} \subseteq \unreach{h'}{r_2}$ for all $h' < h$ under the induction hypothesis. Therefore,
 \begin{align*}
&\max_{\pi} \Pr[s_h = s, s_0 \notin \unreach{0}{r_1},  s_1 \notin \unreach{1}{r_1}, \ldots, s_{h - 1} \notin \unreach{h - 1}{r_1} \mid M, \pi] \\
\ge & \max_{\pi} \Pr[s_h = s, s_0 \notin \unreach{0}{r_2},  s_1 \notin \unreach{1}{r_2}, \ldots, s_{h - 1} \notin \unreach{h - 1}{r_2} \mid M, \pi]
 \end{align*}
 which implies $\unreach{h}{r_1} \subseteq \unreach{h}{r_2}$.
 \end{proof}
 
 An important corollary of Lemma~\ref{lem:inclusion}, is that the total number of distinct $U(r)$ for all $r \in [0, 1]$ is upper bounded by $|S|H + 1$.
 \begin{corollary}\label{coro:num_U}
 For all $r \in [0, 1]$, there are at most of $|S|H + 1$ unique sequences of sets $U(r)$.  
 \end{corollary}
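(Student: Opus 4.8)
The plan is to exploit the monotonicity established in Lemma~\ref{lem:inclusion} through a pigeonhole argument on a single integer-valued potential. I would define the total cardinality $\Phi(r) = \sum_{h \in [H]} |\unreach{h}{r}|$. Since each $\unreach{h}{r}$ is a subset of $S$ and there are $H$ levels, $\Phi(r)$ is an integer lying in $\{0, 1, \ldots, |S|H\}$, so it takes at most $|S|H + 1$ distinct values. The whole corollary reduces to showing that $\Phi$ never repeats a value on two sequences $U(r)$ that actually differ.

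First I would record that, by Lemma~\ref{lem:inclusion}, the family $\{U(r)\}_{r \in [0,1]}$ forms a chain under componentwise inclusion: whenever $r_1 \le r_2$ we have $\unreach{h}{r_1} \subseteq \unreach{h}{r_2}$ for every $h \in [H]$. In particular $\Phi$ is nondecreasing in $r$, which is the structural fact that makes the potential argument work.

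The key step is to show that distinct sequences are separated by distinct potential values. Suppose $U(r_1) \ne U(r_2)$ as sequences; without loss of generality take $r_1 < r_2$. By monotonicity, $\unreach{h}{r_1} \subseteq \unreach{h}{r_2}$ for all $h$, and since the two sequences differ there is some level $h^\ast$ with $\unreach{h^\ast}{r_1} \subsetneq \unreach{h^\ast}{r_2}$, hence $|\unreach{h^\ast}{r_1}| < |\unreach{h^\ast}{r_2}|$. Combined with the (nonstrict) inclusions at all other levels, this yields $\Phi(r_1) < \Phi(r_2)$, so the two sequences carry different $\Phi$-values.

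Consequently, the map sending each distinct sequence in $\{U(r) : r \in [0,1]\}$ to its well-defined $\Phi$-value is injective into $\{0, 1, \ldots, |S|H\}$, and therefore the number of distinct sequences is at most $|S|H + 1$. I do not expect any genuine obstacle here once Lemma~\ref{lem:inclusion} is in hand; the only point requiring care is confirming that two differing sequences are witnessed by a \emph{strict} inclusion at some level, which is precisely what forces the strict increase of the potential $\Phi$ and hence the injectivity.
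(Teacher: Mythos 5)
Your proof is correct and uses essentially the same argument as the paper: both rely on the integer potential $\sum_{h \in [H]} |\unreach{h}{r}| \in \{0,\ldots,|S|H\}$ together with the monotonicity from Lemma~\ref{lem:inclusion} to conclude that distinct sequences $U(r)$ must have distinct total cardinalities. The paper phrases this as a pigeonhole contradiction while you argue the injectivity directly, but the content is identical.
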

 \begin{proof}
Assume for the sake of contradiction that there are more than $|S|H + 1$ unique sequences of sets $U(r)$. 
Note that $0 \le \sum_{h \in [H]} |U(r)| \le |S|H$ for all $r \in [0, 1]$.
By the pigeonhole principle, there exists $0 \le r_1 < r_2 \le 1$ such that $U(r_1) \neq U(r_2)$ while $\sum_{h \in [H]} |U(r_1)| = \sum_{h \in [H]} |U(r_2)|$.
By Lemma~\ref{lem:inclusion}, for all $h \in [H]$, we have $U_h(r_1) \subseteq U_h(r_2)$ and thus $|U_h(r_1)| \le |U_h(r_2)|$. This implies that $|U_h(r_1)| = |U_h(r_2)|$ for all $h \in [H]$. 
For any $h \in [H]$, we have $U_h(r_1) \subseteq U_h(r_2)$ and $|U_h(r_1)| = |U_h(r_2)|$ which implies $U_h(r_1) = U_h(r_2)$, contradicting the assumption that $U(r_1) \neq U(r_2)$.
 \end{proof}
 
For each $(s, h) \in S \times [H]$, we define $\minr{s}{h}$ to be the infimum of those reaching probability threshold $r \in [0, 1]$ so that $s$ would be unreachable under $r$. 
 
 \begin{definition}\label{def:minr}
 For each $(s, h) \in S \times [H]$, define $
 \minr{s}{h} = \inf \{r \in [0, 1] \mid s \in \unreach{h}{r}\}
 $.
 \end{definition}
 Note that $\{r \in [0, 1] \mid s \in \unreach{h}{r}\}$ is never an empty set since $\unreach{h}{1} = S$. 

Lemma~\ref{lem:inclusion} implies that $ \minr{s}{h}$ is the critical reaching probability threshold for $(s, h)$, formalized as follows. 
\begin{corollary}\label{coro:critical}
For any $(s, h) \in S \times [H]$, we have
\begin{itemize}
\item for any $1 \ge r > \minr{s}{h}$, $s \in \unreach{h}{r}$;
\item for any $0 \le r < \minr{s}{h}$, $s \notin  \unreach{h}{r}$.
\end{itemize}
\end{corollary}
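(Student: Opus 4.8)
The plan is to exploit the monotonicity established in Lemma~\ref{lem:inclusion}, which says that the map $r \mapsto \unreach{h}{r}$ is nondecreasing in $r$ with respect to set inclusion. The key observation is that this makes the membership set $A_{s,h} := \{r \in [0,1] \mid s \in \unreach{h}{r}\}$ an \emph{up-set}: if $r \in A_{s,h}$ and $r \le r' \le 1$, then by Lemma~\ref{lem:inclusion} we have $s \in \unreach{h}{r} \subseteq \unreach{h}{r'}$, so $r' \in A_{s,h}$. Since $\minr{s}{h}$ is by definition $\inf A_{s,h}$, the corollary reduces to the order-theoretic fact that such an up-set in $[0,1]$ contains every point strictly above its infimum and excludes every point strictly below it.

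First I would dispatch the second bullet, which is immediate. Fix $r < \minr{s}{h}$. Because $\minr{s}{h} = \inf A_{s,h}$ is a lower bound for $A_{s,h}$, every element of $A_{s,h}$ is at least $\minr{s}{h} > r$, so $r \notin A_{s,h}$, i.e. $s \notin \unreach{h}{r}$.

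Next I would handle the first bullet using the defining property of the infimum together with monotonicity. Fix $r$ with $\minr{s}{h} < r \le 1$. Since $r$ strictly exceeds $\inf A_{s,h}$, there exists some $r_0 \in A_{s,h}$ with $r_0 < r$. Then $s \in \unreach{h}{r_0}$, and applying Lemma~\ref{lem:inclusion} to the pair $r_0 \le r$ gives $s \in \unreach{h}{r_0} \subseteq \unreach{h}{r}$, hence $s \in \unreach{h}{r}$ as claimed.

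There is essentially no technical obstacle here; the entire content lives in Lemma~\ref{lem:inclusion}, and the corollary is a clean consequence. The only subtlety warranting care is the deliberate omission of the boundary case $r = \minr{s}{h}$: depending on whether the infimum is attained (equivalently, whether a reaching probability can equal the threshold exactly), the state $s$ may or may not belong to $\unreach{h}{\minr{s}{h}}$. The corollary sidesteps this ambiguity with strict inequalities, so I would make sure the argument never silently assumes the infimum is achieved and relies only on the up-set structure.
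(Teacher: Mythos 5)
Your proof is correct and follows exactly the route the paper intends: the paper states this corollary as an immediate consequence of the monotonicity in Lemma~\ref{lem:inclusion} without writing out the details, and your up-set argument (using that $\minr{s}{h}=\inf\{r \mid s\in\unreach{h}{r}\}$ together with $\unreach{h}{r_0}\subseteq\unreach{h}{r}$ for $r_0\le r$) is precisely the missing derivation. Your remark about the unhandled boundary case $r=\minr{s}{h}$ is also apt, since the corollary's strict inequalities are what make the statement valid whether or not the infimum is attained.
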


Given the definition of unreachable states $\unreach{h}{r}$, for each $r \in [0, 1]$, we now formally define the truncated MDP $\truncM{r}$ where we direct the transition probabilities of all unreachable states to an absorbing state $\absorbs$. 

\begin{definition}\label{def:truncM}
For the underlying MDP $M = (S, A, P, R, H, s_0)$, given a real number $r \in [0, 1]$, define 
$\truncM{r} = (S \cup \{\absorbs\}, A, \truncP{r}, R, H, s_0)$, where
\begin{equation}\label{equ:truncM_P}
\truncP{r}_h(s' \mid s, a) = \begin{cases}
P_h(s' \mid s, a) & s \notin  \unreach{h}{r} \cup \{\absorbs\}, s' \neq  \absorbs\\
0 & s \notin  \unreach{h}{r} \cup \{\absorbs\}, s' =  \absorbs\\
\mathbbm{1}[s' = \absorbs] & s \in \unreach{h}{r} \cup \{\absorbs\}\\
\end{cases}.
\end{equation}
\end{definition}

The following lemma builds a connection between the occupancy function in $\truncM{r}$ and the set of unreachable states $\unreach{h}{r}$.

\begin{lemma}\label{lem:occupancy_unreach}
For any $r \in [0, 1]$, for any $(s, h) \in S \times [H]$ 
\[
d_{\truncM{r}}^{*}(s, h) =  \max_\pi \Pr[s_h = s, s_0 \notin \unreach{0}{r},  s_1 \notin \unreach{1}{r}, \ldots, s_{h - 1} \notin \unreach{h - 1}{r} \mid M, \pi],
\]
and therefore $s \in \unreach{h}{r}$ if and only if $d_{\truncM{r}}^*(s, h) \le r$. 
\end{lemma}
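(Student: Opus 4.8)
The plan is to reduce the statement to a per-policy identity and then maximize over policies; the second (``therefore'') claim will then drop out of Definition~\ref{def:unreach} with no further work. First I would fix an arbitrary policy $\pi$ and argue at the level of individual trajectories. The structural fact driving everything is that in $\truncM{r}$ the state $\absorbs$ is absorbing and, by~\eqref{equ:truncM_P}, every state lying in $\unreach{h'}{r}$ is sent deterministically to $\absorbs$ at level $h'$. Hence any trajectory of $\truncM{r}$ that ends at $s_h = s$ with $s \neq \absorbs$ must have avoided all the unreachable sets at earlier levels, i.e.\ $s_{h'} \notin \unreach{h'}{r}$ for every $h' < h$; otherwise it would have entered $\absorbs$ and been trapped there forever. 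Conversely, along any trajectory that stays out of the unreachable sets, the truncated kernel $\truncP{r}$ coincides exactly with $P$, so each such trajectory carries identical probability under $\truncM{r}$ and under $M$. Summing these equal trajectory probabilities gives, for every fixed $\pi$,
\[
\Pr[s_h = s \mid \truncM{r}, \pi] = \Pr[s_h = s,\, s_0 \notin \unreach{0}{r},\, \ldots,\, s_{h-1} \notin \unreach{h-1}{r} \mid M, \pi].
\]

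Next I would pass to the maximum over policies on both sides. The one point needing care is that $\truncM{r}$ carries the extra state $\absorbs$, so its policies are nominally defined on $S \cup \{\absorbs\}$ rather than on $S$. I would observe that the left-hand quantity is insensitive to the actions a policy prescribes at $\absorbs$ or at any state in $\unreach{h'}{r}$, since once such a state is reached the dynamics are frozen and the event $\{s_h = s\}$ with $s \neq \absorbs$ can no longer occur. Thus maximizing over $\truncM{r}$-policies is the same as maximizing over $M$-policies of the right-hand side, which yields the claimed formula for $d^*_{\truncM{r}}(s,h)$. A clean alternative that sidesteps trajectory-level reasoning is to prove the occupancy identity by induction on $h$, propagating it one level at a time through the Bellman recursion for occupancies; this is the route I would write up if the trajectory bookkeeping became cumbersome.

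The final ``therefore'' is then immediate: Definition~\ref{def:unreach} declares $s \in \unreach{h}{r}$ exactly when the right-hand maximum is at most $r$, and by the identity just established this maximum equals $d^*_{\truncM{r}}(s,h)$, so $s \in \unreach{h}{r}$ iff $d^*_{\truncM{r}}(s,h) \le r$. I expect the main obstacle to be administrative rather than conceptual---keeping the correspondence between trajectories (and policies) across the two state spaces honest, in particular the equivalence between avoiding $\absorbs$ in $\truncM{r}$ and avoiding every $\unreach{h'}{r}$ in $M$---and I would isolate that equivalence as the one assertion to verify carefully.
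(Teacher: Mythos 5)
Your proposal is correct and follows essentially the same route as the paper: the paper's proof also establishes the per-policy identity $d^{\pi}_{\truncM{r}}(s,h) = \Pr[s_h = s,\, s_0 \notin \unreach{0}{r}, \ldots, s_{h-1} \notin \unreach{h-1}{r} \mid M, \pi]$ directly ``by the construction of $\truncM{r}$'' and then maximizes over policies and invokes Definition~\ref{def:unreach}. You simply spell out the trajectory-level bookkeeping and the policy-space correspondence that the paper leaves implicit.
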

\begin{proof}
By the construction of $\truncM{r}$,
\[
d_{\truncM{r}}^{\pi}(s, h) =  \Pr[s_h = s, s_0 \notin \unreach{0}{r},  s_1 \notin \unreach{1}{r}, \ldots, s_{h - 1} \notin \unreach{h - 1}{r} \mid M, \pi],
\]
and therefore,
\[
d_{\truncM{r}}^{*}(s, h) =  \max_\pi \Pr[s_h = s, s_0 \notin \unreach{0}{r},  s_1 \notin \unreach{1}{r}, \ldots, s_{h - 1} \notin \unreach{h - 1}{r} \mid M, \pi],
\]
which also implies that $s \in \unreach{h}{r}$ if and only if $d_{\truncM{r}}^*(s, h) \le r$ by Definition~\ref{def:unreach}.
\end{proof}

Combining Lemma~\ref{lem:occupancy_unreach} and Lemma~\ref{lem:inclusion}, we have the following corollary which shows that $d^*_{\truncM{r}}(s, h)$ is monotonically non-increasing as we increase $r$.
\begin{corollary}\label{coro:occupancy_monotone}
For the underlying MDP $M = (S, A, P, R, H, s_0)$, for any $0 \le r_1 \le r_2 \le 1$ and any $(s, h) \in S \times [H]$, we have $d^*_{\truncM{r_1}}(s, h) \ge d^*_{\truncM{r_2}}(s, h)$.
Moreover, $d^*_M(s, h) \ge d^*_{\truncM{r}}(s, h)$ for any $(s, h) \in S \times [H]$ and $r \in [0, 1]$. 
\end{corollary}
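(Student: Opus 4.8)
The plan is to derive both inequalities directly from the variational characterization of the truncated occupancy function provided by Lemma~\ref{lem:occupancy_unreach}, combined with the monotonicity of the unreachable sets in Lemma~\ref{lem:inclusion}. The guiding intuition is that a larger truncation threshold removes (weakly) more states at every earlier level, which imposes (weakly) more avoidance constraints on the event whose probability is being maximized, and adding constraints can only shrink that probability.

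For the first inequality, I would fix $0 \le r_1 \le r_2 \le 1$ and $(s,h) \in S \times [H]$, and apply Lemma~\ref{lem:occupancy_unreach} to write, for $i \in \{1,2\}$,
\[
d^*_{\truncM{r_i}}(s,h) = \max_\pi \Pr\left[s_h = s,\ s_0 \notin \unreach{0}{r_i},\ \ldots,\ s_{h-1} \notin \unreach{h-1}{r_i} \mid M, \pi\right].
\]
By Lemma~\ref{lem:inclusion}, $\unreach{h'}{r_1} \subseteq \unreach{h'}{r_2}$ for every $h' < h$, hence $S \setminus \unreach{h'}{r_2} \subseteq S \setminus \unreach{h'}{r_1}$, so each event $\{s_{h'} \notin \unreach{h'}{r_2}\}$ implies the event $\{s_{h'} \notin \unreach{h'}{r_1}\}$. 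Therefore, for every fixed policy $\pi$, the intersection event associated with $r_2$ is contained in that associated with $r_1$, and the corresponding probability under $r_1$ dominates the one under $r_2$ policy by policy. Taking the maximum over $\pi$ on both sides preserves this inequality: if $\pi^\star$ attains the right-hand maximum, then the left-hand maximum is at least its value at $\pi^\star$, which already dominates the right-hand maximum. This gives $d^*_{\truncM{r_1}}(s,h) \ge d^*_{\truncM{r_2}}(s,h)$.

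For the second inequality, I would again use Lemma~\ref{lem:occupancy_unreach} to express $d^*_{\truncM{r}}(s,h)$ as a maximum over $\pi$ of the probability of $\{s_h = s\}$ intersected with the avoidance constraints, whereas by definition $d^*_M(s,h) = \max_\pi \Pr[s_h = s \mid M, \pi]$ carries no such constraints. Since dropping the avoidance constraints only enlarges the event, the constrained probability is at most $\Pr[s_h = s \mid M, \pi]$ for every $\pi$, and the same maximum-over-policies argument yields $d^*_M(s,h) \ge d^*_{\truncM{r}}(s,h)$ for all $r \in [0,1]$.

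No step presents a genuine obstacle; the only points requiring care are the bookkeeping of the inclusion directions—translating $\unreach{h'}{r_1} \subseteq \unreach{h'}{r_2}$ into the reversed inclusion of the surviving-state events and hence into the correct ordering of probabilities—and the elementary but easily mis-stated fact that a pointwise inequality $f(\pi) \ge g(\pi)$ transfers to $\max_\pi f(\pi) \ge \max_\pi g(\pi)$.
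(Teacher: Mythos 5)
Your proof is correct and follows exactly the route the paper intends: the corollary is stated as an immediate consequence of combining Lemma~\ref{lem:occupancy_unreach} (the variational characterization of $d^*_{\truncM{r}}$) with Lemma~\ref{lem:inclusion} (monotonicity of the unreachable sets), which is precisely your argument. The event-inclusion and max-over-policies steps are handled with the right orientation, so nothing is missing.
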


As illustrated in the following lemma, $d^*_{\truncM{r}}(s, h) \le \minr{s}{h}$ whenever $r > \minr{s}{h}$, and $d^*_{\truncM{r}}(s, h) \ge \minr{s}{h}$ if $r < \minr{s}{h}$.
\begin{lemma}\label{lem:reaching_minr}
For any $r \in [0, 1]$ and $(s, h) \in S \times [H]$,
\begin{itemize}
\item if $r > \minr{s}{h}$, $d^*_{\truncM{r}}(s, h) \le \minr{s}{h}$;
\item if $r < \minr{s}{h}$, $d^*_{\truncM{r}}(s, h) \ge \minr{s}{h}$.
\end{itemize}
\end{lemma}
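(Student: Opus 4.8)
The plan is to combine three facts already established: the critical-threshold characterization in Corollary~\ref{coro:critical}, the membership-to-occupancy equivalence in Lemma~\ref{lem:occupancy_unreach} (namely $s \in \unreach{h}{r}$ iff $d^*_{\truncM{r}}(s, h) \le r$), and the monotonicity of $r \mapsto d^*_{\truncM{r}}(s, h)$ from Corollary~\ref{coro:occupancy_monotone}. The naive first move only controls each bullet up to the threshold $r$ itself rather than the sharper value $\minr{s}{h}$: for instance, when $r > \minr{s}{h}$ we directly get $s \in \unreach{h}{r}$ and hence $d^*_{\truncM{r}}(s, h) \le r$, which is weaker than the desired bound $\le \minr{s}{h}$. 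The real work is therefore a limiting argument that slides an auxiliary threshold down to $\minr{s}{h}$.

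For the first bullet, I would fix $r > \minr{s}{h}$ and introduce any auxiliary threshold $r' \in (\minr{s}{h}, r]$. By Corollary~\ref{coro:critical} we have $s \in \unreach{h}{r'}$, so Lemma~\ref{lem:occupancy_unreach} gives $d^*_{\truncM{r'}}(s, h) \le r'$. Since $r' \le r$, monotonicity (Corollary~\ref{coro:occupancy_monotone}) yields $d^*_{\truncM{r}}(s, h) \le d^*_{\truncM{r'}}(s, h) \le r'$. As this inequality holds for every $r' \in (\minr{s}{h}, r]$ while its left-hand side is a fixed number, taking the infimum over such $r'$ gives $d^*_{\truncM{r}}(s, h) \le \minr{s}{h}$.

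The second bullet is handled symmetrically. Fixing $r < \minr{s}{h}$, I would take any $r' \in [r, \minr{s}{h})$. By Corollary~\ref{coro:critical}, $s \notin \unreach{h}{r'}$, so Lemma~\ref{lem:occupancy_unreach} gives $d^*_{\truncM{r'}}(s, h) > r'$. Since $r \le r'$, monotonicity now gives $d^*_{\truncM{r}}(s, h) \ge d^*_{\truncM{r'}}(s, h) > r'$, and since this holds for all $r' \in [r, \minr{s}{h})$, taking the supremum over such $r'$ yields $d^*_{\truncM{r}}(s, h) \ge \minr{s}{h}$.

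The one point that deserves care is the limiting step, and I expect it to be the only genuine obstacle. Because $d^*_{\truncM{r}}(s, h)$ is a single fixed quantity that does not depend on the auxiliary threshold $r'$, the family of inequalities $d^*_{\truncM{r}}(s, h) \le r'$ (respectively $> r'$) valid for all $r'$ on one side of $\minr{s}{h}$ passes to the limit $r' \to \minr{s}{h}$ purely by taking an infimum (respectively supremum) of the right-hand side; no continuity of the map $r \mapsto d^*_{\truncM{r}}(s, h)$ is required, which is fortunate since such continuity is not available to us. This is exactly why the monotonicity from Corollary~\ref{coro:occupancy_monotone}, rather than any single membership fact, is the load-bearing ingredient.
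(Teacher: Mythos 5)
Your proof is correct and uses exactly the same three ingredients as the paper (Corollary~\ref{coro:critical}, Lemma~\ref{lem:occupancy_unreach}, and the monotonicity in Corollary~\ref{coro:occupancy_monotone}) applied to an auxiliary threshold $r'$ squeezed against $\minr{s}{h}$; the only difference is that you argue directly by taking an infimum/supremum over all such $r'$, whereas the paper assumes $d^*_{\truncM{r}}(s,h) > \minr{s}{h}$ and picks a single witness $r'$ in the resulting gap to reach a contradiction. These are logically equivalent formulations of the same argument.
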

\begin{proof}
We only consider the case $r > \minr{s}{h}$ in the proof, and the case $r < \minr{s}{h}$ can be handled using exactly the same argument. 

Since $r > \minr{s}{h}$, by Corollary~\ref{coro:critical}, we have $s \in \unreach{h}{r}$, which implies $d^*_{\truncM{r}}(s, h) \le r$ by Lemma~\ref{lem:occupancy_unreach}.
Assume for the sake of contradiction that $d^*_{\truncM{r}}(s, h) > \minr{s}{h}$. Let $r'$ be an arbitrary real number satisfying $\minr{s}{h} < r' < d^*_{\truncM{r}}(s, h) \le r$.
 By Corollary~\ref{coro:occupancy_monotone}, we have $d^*_{\truncM{r'}}(s, h) \ge d^*_{\truncM{r}}(s, h) > r'$, which implies $s \notin \unreach{h}{r'}$ by Lemma~\ref{lem:occupancy_unreach}.
 On the other hand, since $r' > \minr{s}{h}$, we must have $s \in \unreach{h}{r'}$ by Corollary~\ref{coro:critical} which leads to a contradiction. 
\end{proof}

For each $(s, h) \in S \times [H]$ and $r \in [0, 1]$, we also define an auxiliary MDP $\truncM{r, s, h}$ based on $\truncM{r}$, which will be later used in the analysis of our algorithm. 
\begin{definition}\label{def:truncMrsh}
For each $(s, h) \in S \times [H]$ and $r \in [0, 1]$, define $\truncM{r, s, h}$ to be the MDP that 
has the same state space, action space, horizon length and initial state as $\truncM{r}$.
The reward function of $\truncM{r, s, h}$ is $R^{s, h}_{h'}(s', a) = \mathbbm{1}[h' = h, s' = s]$ for all $h' \in [H]$ and $(s', a) \in (S \cup \{\absorbs\}) \times  A$,
and the transition model of $\truncM{r, s, h}$ is
\begin{equation}\label{equ:truncMh_P}
\truncP{r, h}_{h'}(s'' \mid s', a) = \begin{cases}
\truncP{r}_{h'}(s'' \mid s', a) & h' < h\\
\mathbbm{1}[s'' = \absorbs] & h' \ge h\\
\end{cases}, 
\end{equation}
where $\truncP{r}$ is the transition model of $\truncM{r}$ define in~\eqref{equ:truncM_P}. 
\end{definition}

A direct observation is that for any $(s, h) \in S \times [H]$ and $r \in [0, 1]$, for any policy $\pi$, $d^{\pi}_{\truncM{r}}(s, h) = V^{\pi}_{\truncM{r, s, h}}$, which also implies $d^{*}_{\truncM{r}}(s, h) = V^{*}_{\truncM{r, s, h}}$.

%In particular, Lemma~\ref{lem:reaching_minr} implies that for any $r \in [0, 1]$, the distance between $r$ and $d^*_{\truncM{r}}(s, h)$ is always lower bounded by $|r - \minr{s}{h}|$.
%
%\begin{corollary}
%For the underlying MDP $M = (S, A, P, R, H, \mu)$ , for any $(s, h) \in S \times [H]$ and $r \in [0, 1]$, we have $|r - d^*_{\truncM{r}}(s, h)| \ge |r - \minr{s}{h}|$.
%\end{corollary}
%\begin{proof}
%If $r = \minr{s}{h}$ then the above claim is clearly true since $|r - d^*_{\truncM{r}}(s, h)| \ge 0$. 
%If $r > \minr{s}{h}$, by Lemma~\ref{lem:reaching_minr} we have $r > \minr{s}{h} \ge d^*_{\truncM{r}}(s, h)$, and thus $|r - d^*_{\truncM{r}}(s, h)| \ge |r - \minr{s}{h}|$.
%Similarly, if $r < \minr{s}{h}$, we have $r <  \minr{s}{h} \le d^*_{\truncM{r}}(s, h)$ which also implies the desired result. 
%\end{proof}

 \section{Missing Proofs in Section~\ref{sec:full}}\label{sec:full_proof}
 In this section, we give the formal proof of Theorem~\ref{thm:full} based on the tools developed in Section~\ref{sec:reach}. 
\begin{lemma}\label{lem:estimate_P}
Consider a pair of fixed choices of $\rtrunc$ and $\raction$ in Algorithm~\ref{alg:full}.
For a fixed $h \in [H - 1]$,
if for all $s \in S \setminus \eunreach{h}$ we have $d^{\hat{\pi}^{s, h}}_M \ge \eta_0$ whenever $h > 0$, then with probability $1 - \frac{\delta}{2H}$, for all $(s, a) \in (S \setminus \eunreach{h}) \times A$, 
\[
\sum_{s' \in S} |P_h(s' \mid s, a) - \hat{P}_h(s' \mid s, a)| \le \epsilon_0. 
\]
\end{lemma}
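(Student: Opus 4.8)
The plan is to reduce the statement to a standard empirical-distribution concentration bound for a single conditional distribution $P_h(\cdot\mid s,a)$, after first controlling how many of the $W$ trajectories actually visit $(s,a)$ at level $h$. Fix a pair $(s,a)\in(S\setminus\eunreach{h})\times A$ and let $N_{s,a}=\sum_{w=1}^{W}\mathbbm{1}[(s_h^{(w)},a_h^{(w)})=(s,a)]$ be the number of trajectories reaching $(s,a)$. Since $\hat{\pi}^{s,h,a}$ agrees with the roll-in policy $\hat{\pi}^{s,h}$ at all levels $h'<h$ and deterministically plays $a$ once level $h$ is reached, the event $\{s_h^{(w)}=s\}$ already forces $a_h^{(w)}=a$; hence $N_{s,a}$ equals the number of trajectories reaching $s$ at level $h$, and $\mathbb{E}[N_{s,a}]=W\cdot d^{\hat{\pi}^{s,h}}_M(s,h)$. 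By hypothesis this is at least $W\eta_0$ when $h>0$, while for $h=0$ we have $S\setminus\eunreach{0}=\{s_0\}$ with $d^{\hat{\pi}^{s_0,0,a}}_M(s_0,0)=1$, so $N_{s_0,a}=W$ deterministically.

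First I would lower-bound $N_{s,a}$: by a multiplicative Chernoff bound, $\Pr[N_{s,a}<\tfrac12 W\eta_0]\le \exp(-W\eta_0/8)$, which is negligible for our choice of $W$. Next, conditioned on $N_{s,a}=n$ with $n\ge\tfrac12 W\eta_0$, the next-state samples collected for $(s,a)$ are $n$ i.i.d.\ draws from $P_h(\cdot\mid s,a)$, and $\hat{P}_h(\cdot\mid s,a)$ is exactly their empirical distribution. For each fixed $s'\in S$, Hoeffding's inequality gives $\Pr[\,|\hat{P}_h(s'\mid s,a)-P_h(s'\mid s,a)|>\epsilon_0/|S|\,\mid\, N_{s,a}=n\,]\le 2\exp(-2n\epsilon_0^2/|S|^2)$. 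Plugging in $n\ge\tfrac12 W\eta_0=\tfrac{|S|^2\log(8H|S|^2|A|/\delta)}{2\epsilon_0^2}$ makes the exponent at least $\log(8H|S|^2|A|/\delta)$, so this conditional probability is at most $\delta/(4H|S|^2|A|)$.

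Then I would assemble the pieces. A union bound over the $|S|$ coordinates $s'$ shows that, on the event $N_{s,a}\ge\tfrac12 W\eta_0$, with probability at least $1-\delta/(4H|S||A|)$ every coordinate error is at most $\epsilon_0/|S|$, whence $\sum_{s'\in S}|P_h(s'\mid s,a)-\hat{P}_h(s'\mid s,a)|\le|S|\cdot\epsilon_0/|S|=\epsilon_0$. Combining with the Chernoff failure and taking a final union bound over all $(s,a)\in(S\setminus\eunreach{h})\times A$ (at most $|S||A|$ pairs) yields total failure probability at most $\delta/(4H)+(\text{negligible})\le\delta/(2H)$, as claimed. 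The $|S|^2$ factor in $W$ together with the constant $8$ inside the logarithm is precisely what balances the per-coordinate Hoeffding bound against the two nested union bounds.

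The main obstacle is handling the randomness of the sample count $N_{s,a}$ correctly: the number of usable samples for $(s,a)$ is itself a random variable correlated with which states are visited, so the concentration argument must be run conditionally on $N_{s,a}=n$ (where the next states are genuinely i.i.d.\ from $P_h(\cdot\mid s,a)$) and only then combined with the Chernoff lower bound on $N_{s,a}$. The only place the MDP structure enters is the identity $\mathbb{E}[N_{s,a}]=W\cdot d^{\hat{\pi}^{s,h}}_M(s,h)$ together with the hypothesis $d^{\hat{\pi}^{s,h}}_M(s,h)\ge\eta_0$, which guarantees enough effective samples; everything downstream is a routine i.i.d.\ concentration computation.
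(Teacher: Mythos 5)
Your proposal is correct and follows essentially the same route as the paper's proof: first a Chernoff/multiplicative concentration bound guaranteeing at least $W\eta_0/2$ effective visits to $(s,a)$ at level $h$, then a per-coordinate Hoeffding bound of $\epsilon_0/|S|$ conditioned on that sample count, followed by union bounds over $s'$ and over $(s,a)$. Your explicit handling of the conditioning on the random sample count $N_{s,a}$ and of the $h=0$ case is slightly more careful than the paper's write-up, but the argument is the same.
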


\begin{proof}
We divide the proof into two parts.
First, we demonstrate that we have a sufficient number of effective samples.
Second, we show that the estimation error is small.

For a given $(s, a) \in (S \setminus \eunreach{h}) \times A$, we first prove that with probability at least \( 1 - \frac{\delta}{4H|S||A|} \), the number of effective samples is greater than \(\frac{W \eta_0}{2}\), where the number of effective samples is defined as 
\[
W_{\text{effective}} = \sum_{w = 1}^W \mathbbm{1}[(s_h^{(w)}, a_h^{(w)}) = (s, a)].
\]

Given that \(d^{\hat{\pi}^{s, h}}_M \ge \eta_0\), we have
\[
\frac{\mathbb{E}[W_{\text{effective}}]}{W} = \frac{W \cdot d^{\hat{\pi}^{s, h}}_M}{W} = d^{\hat{\pi}^{s, h}}_M \ge \eta_0,
\]
and therefore by Chernoff bound, 
\[
 \mathbb{P}\left(W_{\text{effective}} < \frac{\eta_0}{2} W\right) \leq \mathbb{P}\left(d^{\hat{\pi}^{s, h}}_M - \frac{W_{\text{effective}}}{W} > \frac{\eta_0}{2}\right) 
< 2e^{-2\left(\frac{\eta_0}{2}\right)^2 W} < \frac{\delta}{4H|S||A|}.
\]

Thus, with probability at least \( 1 - \frac{\delta}{4H|S||A|} \), the number of effective samples is at least \(\frac{W \eta_0}{2}\).

Next, we show that if the number of effective samples is greater than \(\frac{W \eta_0}{2}\), then with probability at least \( 1 - \frac{\delta}{4H|S||A|} \), 
\[
\sum_{s' \in S} |P_h(s' \mid s, a) - \hat{P}_h(s' \mid s, a)| \le \epsilon_0.
\]

To establish this, we first prove that for any specific \(s'\), with probability at least \( 1 - \frac{\delta}{4H|S|^2|A|} \), we have
\[
|P_h(s' \mid s, a) - \hat{P}_h(s' \mid s, a)| \le \frac{\epsilon_0}{|S|}.
\]

Using the Chernoff bound,
\begin{align*}
\mathbb{P}\left(|P_h(s' \mid s, a) - \hat{P}_h(s' \mid s, a)| \ge \frac{\epsilon_0}{|S|}\right) 
&< 2e^{-2\left(\frac{\epsilon_0}{S}\right)^2 W_{\text{effective}}} < \frac{\delta}{4H|S|^2|A|}.
\end{align*}

Therefore, by the union bound, with probability at least \( 1 - \frac{\delta}{4H|S||A|} \), we have for all \(s' \in S\),
\[
|P_h(s' \mid s, a) - \hat{P}_h(s' \mid s, a)| \le \frac{\epsilon_0}{|S|}.
\]

Summing over all \(s'\) gives
\[
\sum_{s' \in S} |P_h(s' \mid s, a) - \hat{P}_h(s' \mid s, a)| \le \epsilon_0.
\]

Combining these results, we conclude that for a specific \((s,a)\), with probability at least \( 1 - \frac{\delta}{2H|S||A|} \),
\[
\sum_{s' \in S} |P_h(s' \mid s, a) - \hat{P}_h(s' \mid s, a)| \le \epsilon_0.
\]

Thus, for a fixed \(h \in [H - 1]\), if for all \(s \in S \setminus \eunreach{h}\) we have \(d^{\hat{\pi}^{s, h}}_M \ge \eta_0\) whenever \(h > 0\), then with probability \(1 - \frac{\delta}{2H}\), for all \((s, a) \in (S \setminus \eunreach{h}) \times A\),
\[
\sum_{s' \in S} |P_h(s' \mid s, a) - \hat{P}_h(s' \mid s, a)| \le \epsilon_0.
\]
\end{proof}

\begin{lemma}\label{lem:occupancy_estimate}
Consider a pair of fixed choices of $\rtrunc < 1$ and $\raction$ in Algorithm~\ref{alg:full}.
For any $h \in [H - 1]$, if for all $h' \le h$, we have
\begin{itemize}
\item $\eunreach{h'} = \unreach{h'}{\rtrunc}$;
\item $\sum_{s'} |\hat{P}_{h'}(s' \mid s, a) - P_{h'}(s' \mid s, a)| \le \epsilon_0$ for all $(s, a) \in (S \setminus \eunreach{h'}) \times A$,
\end{itemize}
then for any $s \in S$, $|d^*_{\truncM{\rtrunc}}(s, h + 1) - d^*_{\tilde{M}^{h}}(s, h + 1)| \le H^2\epsilon_0$.
\end{lemma}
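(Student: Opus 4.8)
The plan is to recast both occupancy measures as optimal values of auxiliary reachability MDPs carrying indicator rewards, so that the estimate reduces to the transition-perturbation bound already available for $\epsilon_0$-related MDPs. Recall the identity noted immediately after Definition~\ref{def:truncMrsh}: for the true truncated model, $d^*_{\truncM{\rtrunc}}(s, h+1) = V^*_{\truncM{\rtrunc, s, h+1}}$, where $\truncM{\rtrunc, s, h+1}$ carries reward $\mathbbm{1}[h' = h+1, s' = s]$ and deterministically sends every state to $\absorbs$ at all levels $\ge h+1$. The estimated counterpart of this construction is exactly the auxiliary MDP $\tilde{M}^{s, h+1}$ built in Algorithm~\ref{alg:full} (the loop at line~\ref{line:full_plan}): it shares the transition kernel $\tildeP{h+1}$ of~\eqref{equ:tilde_P} with $\tilde{M}^{h+1}$ and carries the same indicator reward, so $V^*_{\tilde{M}^{s, h+1}} = d^*_{\tilde{M}^{h+1}}(s, h+1)$. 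It therefore suffices to bound $|V^*_{\truncM{\rtrunc, s, h+1}} - V^*_{\tilde{M}^{s, h+1}}|$.

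The core step is to verify that $\truncM{\rtrunc, s, h+1}$ and $\tilde{M}^{s, h+1}$ are $\epsilon_0$-related in the sense of~\eqref{equ:epsilon_related}. They share the augmented state space $S \cup \{\absorbs\}$, the action space, the initial state, and the indicator reward, so only the transition kernels remain to be compared level by level. At every level $h' \ge h+1$ both kernels deterministically map all states to $\absorbs$, giving total-variation difference $0$. For a level $h' \le h$ I would invoke the first hypothesis, $\eunreach{h'} = \unreach{h'}{\rtrunc}$: since the two truncated sets coincide, comparing Definition~\ref{def:truncM} with~\eqref{equ:tilde_P} shows that for every $s' \in \eunreach{h'} \cup \{\absorbs\}$ both kernels equal $\mathbbm{1}[\cdot = \absorbs]$ (difference $0$), whereas for $s' \notin \eunreach{h'} \cup \{\absorbs\}$ they equal $P_{h'}(\cdot \mid s', a)$ and $\hat{P}_{h'}(\cdot \mid s', a)$ respectively, which the second hypothesis bounds within $\epsilon_0$ in total variation. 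Thus the per-state-action total-variation gap is at most $\epsilon_0$ at every level, so the two MDPs are $\epsilon_0$-related.

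Finally, I would apply the perturbation estimate Lemma~\ref{lemma M1M2} to this $\epsilon_0$-related pair, obtaining $|V^*_{\truncM{\rtrunc, s, h+1}} - V^*_{\tilde{M}^{s, h+1}}| \le H^2 \epsilon_0$; combined with the two $d^* = V^*$ identities this gives $|d^*_{\truncM{\rtrunc}}(s, h+1) - d^*_{\tilde{M}^{h+1}}(s, h+1)| \le H^2 \epsilon_0$, the claimed bound. The only delicate point, and the one I would emphasize, is the truncation-structure matching in the second paragraph: the exactness of $\eunreach{h'} = \unreach{h'}{\rtrunc}$ is indispensable, because a single mismatched state would be redirected to $\absorbs$ in one model but not the other, inflating the total-variation gap at that level from $\epsilon_0$ up to $1$ and breaking $\epsilon_0$-relatedness. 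Everything else is direct substitution into the already-established perturbation bound, so I anticipate no further obstacle.
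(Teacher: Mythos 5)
Your proposal is correct and follows essentially the same route as the paper's proof: both reduce the occupancy gap to $|V^*_{\truncM{\rtrunc,s,h+1}} - V^*_{\tilde{M}^{s,h+1}}|$ via the $d^* = V^*$ identities, verify the two auxiliary MDPs are $\epsilon_0$-related level by level using the hypothesis $\eunreach{h'} = \unreach{h'}{\rtrunc}$, and conclude with Lemma~\ref{lemma M1M2}. Your explicit remark on why the exact matching of the truncated sets is indispensable is a helpful elaboration of a step the paper states without comment.
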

\begin{proof}
Consider a fixed level $h \in [H - 1]$ and state $s \in S$. 
%We create two auxiliary MDPs $M_1$ and $M_2$ based on $\truncM{\rtrunc}$ and $\tilde{M}_h$.
%The only difference between $M_1$ and  $\truncM{\rtrunc}$ is the reward function, and for $M_1$, the reward function is defined to be $R^{s, h + 1}_{h'}(s', a) = \mathbbm{1}[h' = h + 1, s' = s]$.
%Similarly, the only difference between  $M_2$ and $\tilde{M}_h$ is also the reward function, and the reward function of $M_2$ is also $R^{s, h + 1}_{h'}(s', a) = \mathbbm{1}[h' = h + 1, s' = s]$.
Note that $d^{*}_{\truncM{\rtrunc}}(s, h + 1) = V^{*}_{\truncM{\rtrunc, s, h + 1}}$ and $d^{*}_{\tilde{M}^h}(s, h + 1) = V^{*}_{\tilde{M}^{s, h + 1}}$.

Note that $\truncM{\rtrunc, s, h + 1}$ and $\tilde{M}^{s, h + 1}$ share the same state space, action space, reward function and initial state.
Moreover, we have $\eunreach{h'} = \unreach{h'}{\rtrunc}$ for all $h' \le h$ and $\sum_{s'} |\hat{P}_{h'}(s' \mid s, a) - P_{h'}(s' \mid s, a)| \le \epsilon_0$ for all $h' \le h$ and $(s, a) \in  (S \setminus \eunreach{h'})  \times A$. 
Let $\truncP{\rtrunc, h + 1}$ be the transition model of $\truncM{\rtrunc, s,h + 1}$ defined in~\eqref{equ:truncMh_P}, and $\tildeP{h + 1}$ be the transition model of $\tilde{M}^{s, h + 1}$ defined in~\eqref{equ:tilde_P}. 
For all $h' \in [H]$, for any $(s, a) \in (S \cup \{\absorbs\}) \times A$, we have
\[
\sum_{s' \in S \cup \{\absorbs\}} |\truncP{\rtrunc, h + 1}_{h'}(s' \mid s, a) - \tildeP{h + 1}_{h'}(s' \mid s, a)| \le \epsilon_0. 
\]
By Lemma~\ref{lemma M1M2}, 
%\ruosong{Unfortunately here we can't directly use any lemma since the transition operator of $M_1$ and $M_2$ may differ significantly when $h' \ge h$, so let's just say using the same analysis}
we have $|V^{*}_{\truncM{\rtrunc, s, h + 1}} - V^{*}_{\tilde{M}^{s, h + 1}}| \le H^2\epsilon_0$, which implies the desired result. 
\end{proof}

\begin{lemma}\label{lem:next_policy}
Consider a pair of fixed choices of $\rtrunc \in (\eta_1, 2\eta_1)$ and $\raction$ in Algorithm~\ref{alg:full}.
For any $h \in [H - 1]$, if for all $h' \le h$, we have
\begin{itemize}
\item $\eunreach{h'} = \unreach{h'}{\rtrunc}$;
\item $\sum_{s'} |\hat{P}_{h'}(s' \mid s, a) - P_{h'}(s' \mid s, a)| \le \epsilon_0$ for all $(s, a) \in  (S \setminus \eunreach{h'})  \times A$,
\end{itemize}
then for any $s \in (S \setminus \eunreach{h + 1})$, $d^{\hat{\pi}^{s, h + 1}}_M(s, h + 1) \ge \eta_0$.
\end{lemma}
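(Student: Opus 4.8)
The plan is to lower-bound the true reaching probability $d^{\hat{\pi}^{s,h+1}}_M(s,h+1)$ by chaining it to the optimal truncated occupancy $d^*_{\truncM{\rtrunc}}(s,h+1)$ through three models---the planning model $\tilde{M}^{s,h+1}$, the idealized truncated model $\truncM{\rtrunc,s,h+1}$, and the true MDP $M$---and then to close the remaining gap using the constants fixed in Algorithm~\ref{alg:full}. Throughout I will use the identity recorded at the end of Section~\ref{sec:reach}, namely $d^{\pi}_{\truncM{\rtrunc}}(s,h+1) = V^{\pi}_{\truncM{\rtrunc,s,h+1}}$ for every $\pi$, which turns reaching probabilities into values of reward-modified MDPs so that the robust-planner guarantee applies.

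First I would lower-bound the idealized optimal occupancy. Since $s \in S \setminus \eunreach{h+1}$, the construction of $\eunreach{h+1}$ in Algorithm~\ref{alg:full} gives $d^*_{\tilde{M}^{h+1}}(s,h+1) > \rtrunc$, and Lemma~\ref{lem:occupancy_estimate} (whose two hypotheses for $h' \le h$ are exactly those assumed here) then yields $d^*_{\truncM{\rtrunc}}(s,h+1) \ge d^*_{\tilde{M}^{h+1}}(s,h+1) - H^2\epsilon_0 > \rtrunc - H^2\epsilon_0$; combined with $\rtrunc > \eta_1 = 3\eta_0$ this gives $d^*_{\truncM{\rtrunc}}(s,h+1) > 3\eta_0 - H^2\epsilon_0$.

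Next I would invoke the robust-planner guarantee. As in the proof of Lemma~\ref{lem:occupancy_estimate}, the two hypotheses imply that $\truncM{\rtrunc,s,h+1}$ and the planning model $\tilde{M}^{s,h+1}$ are $\epsilon_0$-related, so Lemma~\ref{lemma:actionall} applied with $M = \truncM{\rtrunc,s,h+1}$ and $\hat{M} = \tilde{M}^{s,h+1}$ gives $d^{\hat{\pi}^{s,h+1}}_{\truncM{\rtrunc}}(s,h+1) \ge d^*_{\truncM{\rtrunc}}(s,h+1) - 2H^2\epsilon_0 - \raction H$. Finally, since truncation only redirects probability mass to $\absorbs$, the per-policy occupancy expression in the proof of Lemma~\ref{lem:occupancy_unreach} gives $d^{\hat{\pi}^{s,h+1}}_M(s,h+1) \ge d^{\hat{\pi}^{s,h+1}}_{\truncM{\rtrunc}}(s,h+1)$. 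Chaining the three inequalities yields $d^{\hat{\pi}^{s,h+1}}_M(s,h+1) > 3\eta_0 - 3H^2\epsilon_0 - \raction H$, and it remains to verify $3\eta_0 - 3H^2\epsilon_0 - \raction H \ge \eta_0$; using $\eta_0 = 3\epsilon_1 H$, $\epsilon_1 = 5C_1 H^2\epsilon_0$ with $C_1 \ge 1$, and $\raction < 2\epsilon_1$, one checks $3H^2\epsilon_0 + \raction H \le \tfrac{13}{5}\epsilon_1 H < 6\epsilon_1 H = 2\eta_0$, which closes the argument.

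The main obstacle is the three-model bridge in the third paragraph: the policy $\hat{\pi}^{s,h+1}$ is optimized on the \emph{estimated, truncated} model $\tilde{M}^{s,h+1}$, yet we must certify its reaching probability in the \emph{true, untruncated} model $M$. The two structural facts that make this go through are that (i) the planning value approximates the idealized truncated value up to $O(H^2\epsilon_0 + \raction H)$ (robust planner plus $\epsilon_0$-relatedness), and (ii) truncation can only \emph{decrease} reaching probabilities, so the idealized truncated occupancy is a valid lower bound for the true one; the remaining step is routine parameter bookkeeping with the constants of Algorithm~\ref{alg:full}.
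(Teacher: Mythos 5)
Your proposal is correct and follows essentially the same route as the paper's proof: lower-bound $d^*_{\truncM{\rtrunc}}(s,h+1)$ via the algorithm's threshold test and Lemma~\ref{lem:occupancy_estimate}, apply the robust-planner guarantee (Lemma~\ref{lemma:actionall}) on the reward-modified truncated MDP, and pass from the truncated to the true occupancy using the fact that truncation only decreases reaching probabilities. Your parameter bookkeeping matches the paper's bound $2H^2\epsilon_0 + \raction H \le \eta_0$, and your citation of the per-policy occupancy identity from Lemma~\ref{lem:occupancy_unreach} for the final step is, if anything, slightly more precise than the paper's appeal to Corollary~\ref{coro:occupancy_monotone}.
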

\begin{proof}
Consider a fixed level $h \in [H - 1]$ and $s \in (S \setminus \eunreach{h + 1})$. 
Since $s \in (S \setminus \eunreach{h + 1})$, we have \[
d^*_{\tilde{M}^{h}}(s, h + 1) > \rtrunc.
\]
By Lemma~\ref{lem:occupancy_estimate},
\[
d^*_{\truncM{\rtrunc}}(s, h + 1) \ge \rtrunc - H^2\epsilon_0 \ge \eta_1 -\eta_0 .
\]
%
%As in the proof of Lemma~\ref{lem:occupancy_estimate}, we create two auxiliary MDPs $M_1$ and $M_2$ based on $\truncM{\rtrunc}$ and $\tilde{M}_h$, so that for any policy $\pi$, $d^{\pi}_{\truncM{\rtrunc}}(s, h + 1) = V^{\pi}_{M_1}(s_0)$ and $d^{\pi}_{\tilde{M}_h}(s, h + 1) = V^{\pi}_{M_2}(s_0)$. 
%The only difference between $M_1$ and  $\truncM{\rtrunc}$ is the reward function, and for $M_1$, the reward function is defined to be $R^{s, h + 1}_{h'}(s', a) = \mathbbm{1}[h' = h + 1, s' = s]$.
%Similarly, the only difference between  $M_2$ and $\tilde{M}_h$ is also the reward function, and the reward function of $M_2$ is also $R^{s, h + 1}_{h'}(s', a) = \mathbbm{1}[h' = h + 1, s' = s]$.
%Moreover, $M_2$ is exactly the input MDP when invoking Algorithm~\ref{?} in Algorithm~\ref{alg:full}. 

Notice that \(2H^2 \epsilon_0 + r_{\text{action}} H \le 2H^2 \epsilon_0 + 2\epsilon_1 H \le 3\epsilon_1H \le \eta_0 \).
By the same analysis as in Lemma~\ref{lem:occupancy_estimate}, 
for the returned policy $\hat{\pi}^{s, h + 1}$, by Lemma~\ref{lemma:actionall},
\[
V^{\hat{\pi}^{s, h + 1}}_{\truncM{\rtrunc, s, h+ 1}} \ge V^*_{\truncM{\rtrunc, s, h + 1}} - \eta_0= d^*_{\truncM{\rtrunc}}(s, h + 1) - \eta_0 \ge \eta_1 - 2\eta_0 \ge \eta_0,
\] and therefore $d^{\hat{\pi}^{s, h + 1}}_{\truncM{\rtrunc}}(s, h + 1) \ge \eta_0$.
By Lemma~\ref{coro:occupancy_monotone}, this implies $d^{\hat{\pi}^{s, h + 1}}_M(s, h + 1) \ge \eta_0$. 
\end{proof}

\begin{definition}\label{def:bad_rtrunc}
Define 
\[
\badtrunc = \bigcup_{(s, h) \in S \times [H]} \ball{\minr{s}{h}}{H^2\epsilon_0}, 
\]
where $\minr{s}{h}$ is as defined in Definition~\ref{def:minr}. 
\end{definition}

\begin{lemma}\label{lem:next_eunreach}
Consider a pair of fixed choices of $\rtrunc \in (\eta_1, 2\eta_1)$ and $\raction$ in Algorithm~\ref{alg:full} such that $\rtrunc \notin \badtrunc$.
%Consider a fixed choice of $\rtrunc < 1$ in Algorithm~\ref{alg:full} such that $\rtrunc \notin \badtrunc$.
For any $h \in [H - 1]$, if for all $h' \le h$, we have
\begin{itemize}
\item $\eunreach{h'} = \unreach{h'}{\rtrunc}$;
\item $\sum_{s'} |\hat{P}_{h'}(s' \mid s, a) - P_{h'}(s' \mid s, a)| \le \epsilon_0$ for all $(s, a) \in  (S \setminus \eunreach{h'})  \times A$,
\end{itemize}
then $\eunreach{h + 1} =  \unreach{h + 1}{\rtrunc}$.
\end{lemma}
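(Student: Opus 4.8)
The plan is to establish the set equality $\eunreach{h+1} = \unreach{h+1}{\rtrunc}$ by showing, for each fixed state $s \in S$, that $s$ belongs to the left-hand set if and only if it belongs to the right-hand set. Recall from the definition of $\eunreach{h+1}$ in Algorithm~\ref{alg:full} that $s \in \eunreach{h+1}$ exactly when the \emph{estimated} occupancy satisfies $d^*_{\tilde{M}^{h+1}}(s, h+1) \le \rtrunc$, whereas by Lemma~\ref{lem:occupancy_unreach} we have $s \in \unreach{h+1}{\rtrunc}$ exactly when the \emph{true} truncated occupancy satisfies $d^*_{\truncM{\rtrunc}}(s, h+1) \le \rtrunc$. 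Thus the entire lemma reduces to comparing these two thresholding conditions at the common cutoff $\rtrunc$, and the argument is a robustness statement: the estimated and true occupancies are close, and $\rtrunc$ is bounded away from the only places where this closeness could matter.

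First I would invoke Lemma~\ref{lem:occupancy_estimate}, whose hypotheses (the inductive agreement $\eunreach{h'} = \unreach{h'}{\rtrunc}$ and the per-step transition accuracy $\sum_{s'} |\hat{P}_{h'}(s'\mid s,a) - P_{h'}(s'\mid s,a)| \le \epsilon_0$ for all $h' \le h$) are \emph{identical} to the hypotheses assumed here, so they transfer verbatim; this yields $|d^*_{\truncM{\rtrunc}}(s, h+1) - d^*_{\tilde{M}^{h+1}}(s, h+1)| \le H^2\epsilon_0$. Next I would use the assumption $\rtrunc \notin \badtrunc$, which by Definition~\ref{def:bad_rtrunc} forces $|\rtrunc - \minr{s}{h+1}| > H^2\epsilon_0$, and split into two cases. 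If $\rtrunc > \minr{s}{h+1} + H^2\epsilon_0$, then Corollary~\ref{coro:critical} gives $s \in \unreach{h+1}{\rtrunc}$, while Lemma~\ref{lem:reaching_minr} gives $d^*_{\truncM{\rtrunc}}(s, h+1) \le \minr{s}{h+1} < \rtrunc - H^2\epsilon_0$; combined with the estimate this forces $d^*_{\tilde{M}^{h+1}}(s, h+1) < \rtrunc$, so $s \in \eunreach{h+1}$. Symmetrically, if $\rtrunc < \minr{s}{h+1} - H^2\epsilon_0$, then $s \notin \unreach{h+1}{\rtrunc}$ and $d^*_{\truncM{\rtrunc}}(s, h+1) \ge \minr{s}{h+1} > \rtrunc + H^2\epsilon_0$, hence $d^*_{\tilde{M}^{h+1}}(s, h+1) > \rtrunc$ and $s \notin \eunreach{h+1}$. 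In both cases the two membership indicators coincide, and since $\rtrunc \notin \badtrunc$ rules out the intermediate regime for every $s$, this establishes the desired equality.

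Most of the conceptual difficulty has already been discharged into the supporting lemmas of Section~\ref{sec:reach}, so the remaining obstacle is primarily careful bookkeeping: verifying that the additive slack $H^2\epsilon_0$ produced by Lemma~\ref{lem:occupancy_estimate} is exactly cancelled by the separation $|\rtrunc - \minr{s}{h+1}| > H^2\epsilon_0$ enforced by avoiding $\badtrunc$, so that neither the estimation error nor the gap between $d^*_{\truncM{\rtrunc}}(s,h+1)$ and $\minr{s}{h+1}$ can push membership across the threshold. The one genuinely delicate point is the interaction of the strict versus non-strict inequalities appearing in the definitions of $\eunreach{h+1}$, $\unreach{h+1}{\rtrunc}$, and Corollary~\ref{coro:critical} at the exact value $r = \minr{s}{h+1}$; the strict positive gap $H^2\epsilon_0 > 0$ guaranteed by $\rtrunc \notin \badtrunc$ is precisely what prevents a boundary tie and ensures the two cases jointly exhaust all of $S$.
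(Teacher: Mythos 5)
Your proposal is correct and follows essentially the same route as the paper's proof: both reduce the set equality to a per-state comparison of the thresholding conditions, apply Lemma~\ref{lem:occupancy_estimate} to bound $|d^*_{\truncM{\rtrunc}}(s,h+1) - d^*_{\tilde{M}^{h+1}}(s,h+1)|$ by $H^2\epsilon_0$, and then combine $\rtrunc \notin \badtrunc$ with Corollary~\ref{coro:critical} and Lemma~\ref{lem:reaching_minr} to show the estimation error cannot flip membership across the cutoff. Your explicit two-case split on the sign of $\rtrunc - \minr{s}{h+1}$ is just a slightly more symmetric presentation of the paper's ``and the reverse direction follows by the same argument.''
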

\begin{proof}
By Lemma~\ref{lem:occupancy_estimate}, for any $s \in S$ we have
\[
|d^*_{\truncM{\rtrunc}}(s, h + 1) - d^*_{\tilde{M}^{h}}(s, h + 1)| \le H^2\epsilon_0.
\]
Therefore, for any $s \in \unreach{h + 1}{\rtrunc}$, we have
\[
d^*_{\tilde{M}^{h}}(s, h + 1) \le d^*_{\truncM{\rtrunc}}(s, h + 1) + H^2\epsilon_0.
\]
By Corollary~\ref{coro:critical}, we have $\rtrunc \ge \minr{s}{h + 1}$.
Moreover, since $\rtrunc \notin \badtrunc$, it holds that 
\[\rtrunc \notin [\minr{s}{h + 1} - H^2\epsilon_0, \minr{s}{h + 1} + H^2\epsilon_0],\]
which further implies that \[
\rtrunc >  \minr{s}{h + 1} + H^2\epsilon_0. 
\]
Combining the above inequality with Lemma~\ref{lem:reaching_minr}, we have
\[
\rtrunc >  \minr{s}{h + 1} + H^2\epsilon_0 \ge  d^*_{\truncM{\rtrunc}}(s, h + 1) + H^2\epsilon_0 \ge d^*_{\tilde{M}^{h}}(s, h + 1),
\]
which implies $s \in \eunreach{h + 1}$.

For those $s \notin \unreach{h + 1}{\rtrunc}$, it can be shown that $s  \notin \eunreach{h + 1}$ using the same argument. Therefore, $\eunreach{h + 1} =  \unreach{h + 1}{\rtrunc}$.

\end{proof}

\begin{lemma}\label{lem:main}
Consider a pair of fixed choices of $\rtrunc \in (\eta_1, 2\eta_1)$ and $\raction$ in Algorithm~\ref{alg:full} such that $\rtrunc \notin \badtrunc$.
With probability at least $1 - \delta / 2$, we have
\begin{itemize}
\item $\eunreach{h} = \unreach{h}{\rtrunc}$ for all $h \in [H]$; 
\item  $\sum_{s'} |\hat{P}_{h}(s' \mid s, a) - P_{h}(s' \mid s, a)| \le \epsilon_0$ for all $h \in [H - 1]$ and $(s, a) \in  (S \setminus \eunreach{h'})  \times A$.
\end{itemize}
\end{lemma}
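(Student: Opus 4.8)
The plan is to prove the two conclusions simultaneously by induction on the level $h$, after first isolating a single high-probability ``good estimation'' event on which the induction becomes purely deterministic. For each $h \in \{0, 1, \ldots, H-2\}$ (the levels at which Algorithm~\ref{alg:full} actually estimates a transition kernel), let $G_h$ denote the event that $\sum_{s'} |\hat{P}_h(s' \mid s, a) - P_h(s' \mid s, a)| \le \epsilon_0$ holds for every $(s,a) \in (S \setminus \eunreach{h}) \times A$, and let $B_h$ denote the event that the reachability precondition of Lemma~\ref{lem:estimate_P} holds at level $h$ (vacuously when $h = 0$, and $d^{\hat{\pi}^{s, h}}_M(s, h) \ge \eta_0$ for all $s \in S \setminus \eunreach{h}$ when $h \ge 1$) yet $G_h$ nonetheless fails. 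I would then work on the event $\mathcal{E} = \bigcap_{h} \overline{B_h}$, on which, at every estimation level, either the reachability precondition fails or the estimate is accurate.

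For the deterministic induction on $\mathcal{E}$ I would carry the hypothesis $C(h)$: \emph{$\eunreach{h'} = \unreach{h'}{\rtrunc}$ for all $h' \le h$, and $G_{h'}$ holds for all $h' \le \min(h, H-2)$.} The base case $C(0)$ is immediate: since $\rtrunc < 1$ we have $\Pr[s_0 = s_0] = 1 > \rtrunc$ and $\Pr[s_0 = s] = 0 \le \rtrunc$ for $s \ne s_0$, so Definition~\ref{def:unreach} gives $\unreach{0}{\rtrunc} = S \setminus \{s_0\} = \eunreach{0}$; and because the level-$0$ precondition holds vacuously, $\overline{B_0}$ forces $G_0$. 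For the inductive step $C(h) \Rightarrow C(h+1)$ with $h \le H-2$, the accumulated facts $\eunreach{h'} = \unreach{h'}{\rtrunc}$ and $G_{h'}$ for all $h' \le h$ are exactly the premises of Lemma~\ref{lem:next_eunreach} and Lemma~\ref{lem:next_policy} at level $h$ (the standing assumptions $\rtrunc \in (\eta_1, 2\eta_1)$ and $\rtrunc \notin \badtrunc$ are inherited from the statement). Lemma~\ref{lem:next_eunreach} yields $\eunreach{h+1} = \unreach{h+1}{\rtrunc}$, extending the first clause; and when $h+1 \le H-2$, Lemma~\ref{lem:next_policy} establishes the reachability precondition at level $h+1$, so that $\overline{B_{h+1}}$ forces $G_{h+1}$, extending the second clause. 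Instantiating $C(H-1)$ reproduces precisely the two statements of the lemma.

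It remains to lower-bound $\Pr[\mathcal{E}]$. Here I would invoke Lemma~\ref{lem:estimate_P} conditionally: the samples used to form $\hat{P}_h$ are drawn freshly after the roll-in policies $\hat{\pi}^{s,h}$ (and the set $\eunreach{h}$) are fixed, so conditioning on the history $\mathcal{F}_h$ up to that point, the reachability precondition is $\mathcal{F}_h$-measurable and Lemma~\ref{lem:estimate_P} gives $\Pr[G_h \text{ fails} \mid \mathcal{F}_h] \le \delta/(2H)$ on the event that the precondition holds. Hence $\Pr[B_h] = \mathbb{E}[\mathbbm{1}[\mathrm{precondition}_h] \cdot \Pr[G_h^c \mid \mathcal{F}_h]] \le \delta/(2H)$, and a union bound over the $H-1$ estimation levels gives $\Pr[\mathcal{E}] \ge 1 - (H-1)\delta/(2H) \ge 1 - \delta/2$, which completes the proof.

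The step I expect to require the most care is exactly this last conditioning argument. The difficulty is that the hypothesis of Lemma~\ref{lem:estimate_P} is itself a random, history-dependent event, since the roll-in policies and the truncation sets depend on all earlier samples, so one cannot simply multiply unconditional per-level failure probabilities. Phrasing each $B_h$ through the filtration $\mathcal{F}_h$, and verifying that the reachability precondition is measurable with respect to it while the estimation randomness at level $h$ is fresh, is what makes the per-level bounds compatible with a clean union bound and what allows the deterministic induction and the probabilistic accounting to be cleanly separated.
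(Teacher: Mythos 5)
Your proof is correct and follows essentially the same route as the paper's: a level-by-level induction whose base case is $\eunreach{0}=\unreach{0}{\rtrunc}=S\setminus\{s_0\}$ and whose inductive step chains Lemma~\ref{lem:next_eunreach}, Lemma~\ref{lem:next_policy}, and Lemma~\ref{lem:estimate_P}. The only difference is cosmetic bookkeeping — you isolate a single good event via a union bound over the conditional bad events $B_h$, while the paper multiplies conditional probabilities $(1-\delta/(2H))^{H-1}$ via the chain rule — and both yield the same $1-\delta/2$ bound.
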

\begin{proof}
For each $h \in [H]$, let $\mathcal{E}_h$ be the event that
\begin{itemize}
\item $\eunreach{h} = \unreach{h}{\rtrunc}$; 
\item if $h > 0$, $d^{\hat{\pi}^{s, h}}_M(s, h) \ge \eta_0$ for all $s \in S \setminus \eunreach{h}$; 
\item if $h > 0$, $\sum_{s' \in S} |\hat{P}_{h - 1}(s' \mid s, a) - P_{h - 1}(s' \mid s, a)| \le \epsilon_0$ for all $(s, a) \in  (S \setminus \eunreach{h - 1})  \times A$.
\end{itemize}
Note that $\mathcal{E}_0$ holds deterministically, since we always have $\rtrunc < 1$ which implies $\unreach{0}{\rtrunc} = S \setminus \{s_0\}$. 
For each $h < H$, conditioned on $\bigcap_{h' \le h}\mathcal{E}_{h'}$, by Lemma~\ref{lem:next_eunreach} and Lemma~\ref{lem:next_policy}, we have $\eunreach{h + 1} = \unreach{h + 1}{\rtrunc}$, and for all $s \in S \setminus \eunreach{h + 1}$, $d^{\hat{\pi}^{s, h + 1}}_M(s, h + 1) \ge \eta_0$. Moreover, by Lemma~\ref{lem:estimate_P}, with probability at least $1 - \delta / (2H)$, 
\[
\sum_{s' \in S} |\hat{P}_{h}(s' \mid s, a) - P_{h}(s' \mid s, a)| \le \epsilon_0
\] for all $(s, a) \in  (S \setminus \eunreach{h})  \times A$. Therefore, conditioned on $\bigcap_{h' \le h}\mathcal{E}_{h'}$, $\mathcal{E}_{h + 1}$ holds with probability at least $1 - \delta / (2H)$. By the chain rule, $P\left(\bigcap_{h \in [H]}\mathcal{E}_{h}\right) \ge (1 - \delta / (2H))^{H - 1} \ge 1 - \delta / 2$.

\end{proof}

\begin{definition}\label{def:gapr_badaction}
For a real number $r \in [0, 1]$, define 
\[
\gap(r)= \left(\bigcup_{h \in [H], s \in S \setminus \unreach{h}{r}}  \gap_{\truncM{r, s, h}} \right) \cup \gap_{\truncM{r}}.
\]
%where for some $1 \le h < H$ and $s\in S \setminus \unreach{h}{r}$, $\truncM{r, s, h}$ is the MDP that 
%shares the same state space, action space, transition model, horizon length and initial state with $\truncM{r}$ (see Definition~\ref{def:truncM}), and the reward function of $\truncM{r, s, h}$ is $R^{s, h}_{h'}(s', a) = \mathbbm{1}[h' = h, s' = s]$. 
Moreover, define
\[
\badaction(r) = \bigcup_{g \in \gap(r)} \ball{g}{2H^2\epsilon_0} .
\]
\end{definition}\
Clearly, for any $r \in [0, 1]$, $|\gap(r)| \le 2|S|^2H^2|A|$.
Moreover, since $\truncM{r}$ and $\truncM{r, s, h}$ depends only on $U(r)$ (cf.~Definition~\ref{def:truncM} and Definition~\ref{def:truncMrsh}), for $r_1, r_2 \in [0, 1]$ with $U(r_ 1) = U(r_2)$, we would have $\gap(r_1) = \gap(r_2)$ and $\badaction(r_1) = \badaction(r_2)$. 
%Combining these with Lemma~\ref{}

\begin{lemma}

\label{lem:fullalg}
Given $\rtrunc^1, \rtrunc^2 \in (\eta_1, 2\eta_1) \setminus \badtrunc$ and $\raction^1, \raction^2 \in (\epsilon_1, 2\epsilon_1)$, suppose 
\begin{itemize}
\item $U(\rtrunc^1) = U(\rtrunc^2)$; 
\item $\raction^1 \notin  \badaction(\rtrunc^1)$, and $\raction^2 \notin  \badaction(\rtrunc^1)$;
\item for any $g \in \gap(\rtrunc^1)$, either $g < \raction^1 < \raction^2$ or $\raction^1 < \raction^2 < g$, 
\end{itemize}
conditioned on the event in Lemma~\ref{lem:main}, 
 in Algorithm~\ref{alg:full} , 
 the returned policy $\pi$ and 
 $\hat{\pi}^{s, h + 1, a}$will be identical for all $h \in [H - 1]$, $(s, a) \in \left(S \setminus \eunreach{h + 1}\right) \times A$, for all $(\raction, \rtrunc) \in \{\raction^1, \raction^2\} \times \{\rtrunc^1, \rtrunc^2\}$. 
\end{lemma}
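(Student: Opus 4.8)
The plan is to couple all four runs of Algorithm~\ref{alg:full} on a common source of environment randomness, so that the only difference between them is the parameter pair $(\raction,\rtrunc)$, and then to prove by induction on the level $h$ that, on the event of Lemma~\ref{lem:main}, the entire internal state of the algorithm---the removed sets $\eunreach{h}$, the empirical kernels $\hat{P}_{h-1}$, and the roll-in policies $\hat{\pi}^{s,h}$---coincides across all four choices $(\raction,\rtrunc)\in\{\raction^1,\raction^2\}\times\{\rtrunc^1,\rtrunc^2\}$. Once this is in hand, the sampling policies $\hat{\pi}^{s,h+1,a}$ coincide as well, since by their definition in Algorithm~\ref{alg:full} they are a deterministic function of $\hat{\pi}^{s,h+1}$ and $a$; and the returned policy $\pi$ coincides because it is planned on $\tilde{M}^{H-1}$, which will have been shown identical.

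For the base case $h=0$ the set $\eunreach{0}=S\setminus\{s_0\}$ is parameter-free, and the sampling policies $\hat{\pi}^{s_0,0,a}$ (which simply play $a$ everywhere) are parameter-free, so the coupled samples and hence $\hat{P}_0$ agree. For the inductive step, suppose the claim holds through level $h$. Because the sampling policies at level $h$ agree and the randomness is coupled, executing them produces identical trajectories and hence an identical estimate $\hat{P}_h(s,a)$ for every $(s,a)\in(S\setminus\eunreach{h})\times A$. Consequently the MDP $\tilde{M}^{h+1}$ of~\eqref{equ:tilde_P}, which depends only on $\{\hat{P}_{h'}\}_{h'\le h}$ and $\{\eunreach{h'}\}_{h'\le h}$, is literally the same object in all four runs. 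It then remains to match $\eunreach{h+1}$ and the roll-in policies $\hat{\pi}^{s,h+1}$.

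The set $\eunreach{h+1}=\{s:d^*_{\tilde{M}^{h+1}}(s,h+1)\le\rtrunc\}$ depends on $\rtrunc$ through the threshold; but on the event of Lemma~\ref{lem:main} we have $\eunreach{h+1}=\unreach{h+1}{\rtrunc}$ for each $\rtrunc\in\{\rtrunc^1,\rtrunc^2\}$, and the hypothesis $U(\rtrunc^1)=U(\rtrunc^2)$ forces these to agree (while $\eunreach{h+1}$ does not depend on $\raction$). Hence the planning MDP $\tilde{M}^{s,h+1}$ is identical across all four runs, so for a fixed $\raction$ the robust planner returns the same $\hat{\pi}^{s,h+1}$ regardless of $\rtrunc$. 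To match the two values of $\raction$, I invoke Lemma~\ref{lem:num_policy} with underlying MDP $\truncM{\rtrunc,s,h+1}$: the required $\epsilon_0$-relatedness of $\tilde{M}^{s,h+1}$ and $\truncM{\rtrunc,s,h+1}$ is exactly the kernel bound established inside the proof of Lemma~\ref{lem:occupancy_estimate} (using $\eunreach{h'}=\unreach{h'}{\rtrunc}$ and the accuracy of $\hat{P}_{h'}$ from the good event), and since $\gap_{\truncM{\rtrunc,s,h+1}}\subseteq\gap(\rtrunc)$ while $U(\rtrunc^1)=U(\rtrunc^2)$ yields $\gap(\rtrunc^1)=\gap(\rtrunc^2)$ and $\badaction(\rtrunc^1)=\badaction(\rtrunc^2)$, the two hypotheses of Lemma~\ref{lem:num_policy}---that $\raction^1,\raction^2$ avoid $\bigcup_{g\in\gap_{\truncM{\rtrunc,s,h+1}}}\ball{g}{2H^2\epsilon_0}$ and lie on the same side of every such $g$---are inherited from the assumptions of the present lemma. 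Thus $\hat{\pi}^{s,h+1}$ agrees for $\raction^1$ and $\raction^2$ as well, closing the induction. The same application of Lemma~\ref{lem:num_policy}, now with underlying MDP $\truncM{\rtrunc}$ (whose gaps form the second term of $\gap(\rtrunc)$ in Definition~\ref{def:gapr_badaction}) and planning MDP $\tilde{M}^{H-1}$, shows the returned $\pi$ is identical across the four runs.

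The main obstacle is the dependency tangle at the $\eunreach{h+1}$ step: the removed set is defined by a raw threshold comparison against $\rtrunc$, so a priori it could differ between $\rtrunc^1$ and $\rtrunc^2$ even when $U(\rtrunc^1)=U(\rtrunc^2)$, and any such discrepancy would alter $\tilde{M}^{h+2}$ and cascade through all later levels. The resolution is that conditioning on Lemma~\ref{lem:main} ``snaps'' each run's $\eunreach{h+1}$ onto the combinatorial object $\unreach{h+1}{\rtrunc}$, which the monotonicity results of Section~\ref{sec:reach}---via $U(\rtrunc^1)=U(\rtrunc^2)$---pin down to a single set; this is precisely what lets the same-MDP statement of Lemma~\ref{lem:num_policy} be applied uniformly across the two truncation thresholds. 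A secondary subtlety is making the coupling honest: one must index the shared randomness so that identical sampling policies consume identical bits, which is legitimate because the inductive hypothesis already establishes agreement of the sampling policies before any fresh samples are drawn at level $h$.
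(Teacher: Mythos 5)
Your proof is correct in substance, and at the decisive step it coincides with the paper's: you identify $\truncM{\rtrunc^1,s,h+1}=\truncM{\rtrunc^2,s,h+1}$ from $U(\rtrunc^1)=U(\rtrunc^2)$, verify that $\tilde{M}^{s,h+1}$ is $\epsilon_0$-related to this common truncated MDP on the good event, and invoke Lemma~\ref{lem:num_policy} with the inclusion $\gap_{\truncM{\rtrunc,s,h+1}}\subseteq\gap(\rtrunc)$. Where you differ is the surrounding machinery: you wrap everything in a coupling of the four runs plus an induction showing the empirical kernels $\hat{P}_{h'}$ literally coincide, whereas the paper needs neither. The reason the paper can skip this is the first clause of Lemma~\ref{lem:num_policy}: the planner's output depends only on the \emph{underlying} MDP and $\raction$, not on which $\epsilon_0$-related empirical model it is fed. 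Once Lemma~\ref{lem:main} pins down $\eunreach{h'}=\unreach{h'}{\rtrunc}$ for all levels simultaneously, every planner call at every level is already determined by $(\truncM{\rtrunc,s,h+1},\raction)$, so there is nothing to propagate inductively and no need for the realized samples to agree across runs. One caution worth internalizing: if you had relied on the coupling \emph{alone} (using only the "same planning MDP $\Rightarrow$ same output" step for the $\rtrunc$ direction, without the model-independence clause), the conclusion would hold only on the coupled probability space and would not suffice for Theorem~\ref{thm:full}, where $\mathrm{Trace}(M)$ must be independent of the realized samples across genuinely independent executions. Your proof escapes this because Lemma~\ref{lem:num_policy}'s conclusion happens to carry the sample-independence you need, but the argument would be cleaner, and its logical dependence clearer, if you dropped the coupling and stated that independence explicitly, as the paper does.
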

\begin{proof}
Consider a fixed $h \in [H - 1]$ and $(s, a) \in \left(S \setminus \eunreach{h + 1}\right) \times A$.
Since $U(\rtrunc^1) = U(\rtrunc^2)$, 
we write 
\begin{itemize}
\item $U(\rtrunc) = U(\rtrunc^1) = U(\rtrunc^2)$; 
\item $\badaction(\rtrunc) = \badaction(\rtrunc^1) = \badaction(\rtrunc^2)$; 
\item $\gap(\rtrunc) = \gap(\rtrunc^1) = \gap(\rtrunc^2)$; and
\item $\truncM{\rtrunc, s,h + 1} = \truncM{\rtrunc^1, s,h + 1} = \truncM{\rtrunc^2, s,h + 1}$
\end{itemize}
in the remaining part of the proof. 

Let $\truncP{\rtrunc}$ be the transition model of $\truncM{\rtrunc, s,h + 1}$ defined in~\eqref{equ:truncM_P}, and $\tildeP{h + 1}$ be the transition model of $\tilde{M}^{s, h + 1}$ defined in~\eqref{equ:tilde_P}.
Note that conditioned on the event in Lemma~\ref{lem:main}, $\eunreach{h + 1} = \unreach{h + 1}{\rtrunc}$, and therefore, for all $h' \in [H]$, for any $(s, a) \in (S \cup \{\absorbs\}) \times A$, we have
\[
\sum_{s' \in S \cup \{\absorbs\}} |\truncP{\rtrunc, h + 1}_{h'}(s' \mid s, a) - \tildeP{h + 1}_{h'}(s' \mid s, a)| \le \epsilon_0. 
\]

%In particular, for \( M^{r_{trunc}} \), we also have the above expression  holds .

%Since for any $g \in \gap(\rtrunc^1)$, either $g < \raction^1 < \raction^2$ or $\raction^1 < \raction^2 < g$, 
By Definition~\ref{def:gapr_badaction}, for any $g \in \gap_{\truncM{\rtrunc, s,h + 1}} $ , we have 
\begin{itemize}
\item $\raction^1, \raction^2 \notin \ball{g}{2H^2\epsilon_0}$; 
\item  either $g < \raction^1 < \raction^2$ or $\raction^1 < \raction^2 < g$, 
\end{itemize}
which implies $\hat{\pi}^{s, h + 1}$ in Algorithm~\ref{alg:full} will be identical for all $(\raction, \rtrunc) \in \{\raction^1, \raction^2\} \times \{\rtrunc^1, \rtrunc^2\}$ by Lemma~\ref{lem:num_policy}. 
This also implies that $\hat{\pi}^{s, h + 1, a}$ will be identical for all $(\raction, \rtrunc) \in \{\raction^1, \raction^2\} \times \{\rtrunc^1, \rtrunc^2\}$. 
Similarly, the desired property holds also for the returned policy $\pi$. 
\end{proof}

\begin{proof}[Proof of Theorem~\ref{thm:full}]
Note that \[\Pr[\rtrunc \notin \badtrunc] \ge 1 - \delta / 4.\]
For any fixed choice of $\rtrunc$, 
\[
\Pr[\raction \notin \badaction(\rtrunc)] \ge 1 - \delta / 4.
\]
Combining these with Lemma~\ref{lem:main}, with probability at least $1 - \delta$, we have
\begin{itemize}
\item $\rtrunc \notin \badtrunc$;
\item $\raction \notin \badaction(\rtrunc)$; 
\item  $\eunreach{h} = \unreach{h}{\rtrunc}$ for all $h \in [H]$; 
\item  $\sum_{s'} |\hat{P}_{h}(s' \mid s, a) - P_{h}(s' \mid s, a)| \le \epsilon_0$ for all $h \in [H - 1]$ and $(s, a) \in  (S \setminus \eunreach{h'})  \times A$.
\end{itemize}
We condition on the above event in the remaining part of the proof. 

Conditioned on the above event, for the returned policy $\pi$, we have
\[
V^{\pi}_M \ge V^{\pi}_{\truncM{\rtrunc}} 
\ge V^{*}_{\truncM{\rtrunc}} -   2H^2 \epsilon_0 - \raction H
 \ge V^{*}_{M} -   2H^2 \epsilon_0 - \raction H - H^2 |S| \rtrunc \ge  V^{*}_{M} - \epsilon,
\]
where the first inequality is due to Lemma~\ref{lemma:MMr}, 
the second inequality is due to Lemma~\ref{lemma:actionall},
the third inequality is due to Lemma~\ref{lemma:MMr}, 
and the last inequality is due to $\rtrunc \le 2\eta_1$ and $\raction \le 2\epsilon_1$. 
Therefore, the returned policy $\pi$ is $\epsilon$-optimal. 

By Lemma~\ref{coro:num_U}, there are at most of $SH + 1$ unique sequences of sets $U(r)$.  
Moreover, for each $r$, $|\gap(r)| \le 2|S|^2H^2|A|$. 
By Lemma~\ref{lem:main}, the sequence of policies executed by Algorithm~\ref{alg:full} and the policy returned by Algorithm~\ref{alg:full} lie in a list $\mathrm{Trace}(M)$ with size $|\mathrm{Trace}(M)| \le (SH + 1) (2|S|^2H^2|A| + 1)$. 
\end{proof}
\section{Weakly $k$-list Replicable RL Algorithm}\label{appendix:weak}
 
In this section, we present our RL algorithm with weakly $k$-list replicability guarantees. 
See Algorithm~\ref{alg:weak} for the formal description of the algorithm. 
In Algorithm~\ref{alg:weak}, it is assumed that we have access to a black-box algorithm $\mathbb{A}(\epsilon_0, \delta_0)$, so that after interacting with the underlying MDP, with probability at least $1 - \delta_0$, $\mathbb{A}$ returns an $\epsilon_0$-optimal policy.

In Algorithm~\ref{alg:weak}, for each $(s, h) \in S \times H$, we first invoke $\mathbb{A}$ on the underlying MDP with modified reward function $R^{s, h}_{h'}(s', a) = \mathbbm{1}[h' = h, s' = s]$  for all $h' \in [H]$ and $(s', a) \in S \times A$. 
The returned policy $\hat{\pi}^{s, h}$ is supposed to reach state $s$ at level $h$ with probability close to $d^*(s, h)$, and therefore we use $\hat{\pi}^{s, h}$ to collect samples and calculate $\hat{d}(s, h)$ which is our estimate of $d^*(s, h)$.
For each action $a \in A$, we also construct a policy $\hat{\pi}^{s, h, a}$ based on $\hat{\pi}^{s, h}$ to collect samples for $(s, a) \in S \times A$ at level $h \in[H]$, and we calculate $\hat{P}_h(s, a)$ which is our estimate of $P_h(s, a)$ based the obtained samples.

For those $(s, h) \in S \times [H]$ with $\hat{d}(s, h) \le \rtrunc$, we remove state $s$ from level $h$ by including $s$ in $\hat{T}_h$. Here $\rtrunc$ is a randomly chosen reaching probability threshold drawn from the uniform distribution. 

Finally, based on $\hat{P}$ and $\hat{T}$, we build an MDP $\hat{M}$ which is our estimate of the underlying MDP $M$.   
For each $(s, h)$, if $s \in \hat{T}_h$, then we always transit $s$ to an absorbing state $\absorbs$. Otherwise, we directly use our estimated transition model $\hat{P}_h(s, a)$.
We then invoke Algorithm~\ref{alg:choosing action} with MDP $\hat{M}$ and tolerance parameter $\raction$, where $\raction$ is also drawn from the uniform distribution .

The formal guarantee of Algorithm~\ref{alg:weak} is summarized in the following theorem.

\begin{theorem}\label{thm:weak}
Suppose $\mathbb{A}$ is an algorithm such that with probability at least $1 - \delta_0$, $\mathbb{A}$ returns an $\epsilon_0$-optimal policy. 
Then with probability at least $1 - \delta$, Algorithm~\ref{alg:weak} return a policy $\pi$, such that 
\begin{itemize}
\item $\pi$ is $\epsilon$-optimal; 
\item $\pi \in \Pi(M)$, where $\Pi(M)$ is a list of policies that depend only on the unknown underlying MDP $M$ with size $|\Pi(M)| \le (H|S||A|+1)(H|S|+1)$. 
    \end{itemize}
\end{theorem}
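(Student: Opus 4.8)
The plan is to compare the empirically truncated MDP $\hat M$ built in Algorithm~\ref{alg:weak} against an idealized truncated MDP obtained by thresholding the \emph{true} occupancies $d^*_M(s,h)$, and then to read off both conclusions from the robust-planning guarantees (Lemma~\ref{lemma:actionall} and Lemma~\ref{lem:num_policy}) applied to this idealized MDP. Concretely, for a threshold $r\in[0,1]$ let $T_h(r)=\{s\in S: d^*_M(s,h)\le r\}$ and let $\truncMm{r}$ be the MDP that agrees with $M$ on every surviving state and sends every state of $T_h(r)$ at level $h$ to the zero-reward absorbing state $\absorbs$. The entire argument is conditioned on a clean event, under which I will show $\hat M$ is $\epsilon_0$-related to $\truncMm{\rtrunc}$, and the two bullets then follow by a value decomposition and a counting argument respectively.

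First I would define and control the clean event. For each $(s,h)$ the black-box $\mathbb{A}$, run with reward $R^{s,h}$, returns $\hat\pi^{s,h}$ with $d^{\hat\pi^{s,h}}_M(s,h)=V^{\hat\pi^{s,h}}_M\ge d^*_M(s,h)-\epsilon_0$ with probability $1-\delta_0$; a Chernoff bound gives $|\hat d(s,h)-d^{\hat\pi^{s,h}}_M(s,h)|\le\epsilon_0$, so on the clean event $|\hat d(s,h)-d^*_M(s,h)|\le 2\epsilon_0$ for all $(s,h)$. I would also include in the clean event that $\sum_{s'}|\hat P_h(s'\mid s,a)-P_h(s'\mid s,a)|\le\epsilon_0$ for every surviving $(s,a)$. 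This is the one place the truncation threshold interacts with the sampling budget: a surviving state satisfies $\hat d(s,h)>\rtrunc$, hence $d^{\hat\pi^{s,h}}_M(s,h)\ge d^*_M(s,h)-\epsilon_0\ge \rtrunc-3\epsilon_0$, which lower-bounds the number of effective samples collected by $\hat\pi^{s,h,a}$ and lets a Chernoff/union bound deliver the transition accuracy exactly as in Lemma~\ref{lem:estimate_P}. A union bound over the $O(|S|H)$ calls to $\mathbb{A}$ and the $O(|S||A|H)$ estimation events, together with $\Pr[\rtrunc\in\badtrunc']\le\delta/4$ and $\Pr[\raction\in\badaction]\le\delta/4$ (both using the uniform draws and a sufficiently small $\epsilon_0$), keeps the total failure probability below $\delta$.

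Next I would show that on the clean event, whenever $\rtrunc$ avoids the bad set $\badtrunc'=\bigcup_{(s,h)}\ball{d^*_M(s,h)}{2\epsilon_0}$, the empirical truncation matches the idealized one, $\hat T_h=T_h(\rtrunc)$ for all $h$: since $|\hat d(s,h)-d^*_M(s,h)|\le 2\epsilon_0$ and $\rtrunc$ is $2\epsilon_0$-separated from every $d^*_M(s,h)$, the tests $\hat d(s,h)\le\rtrunc$ and $d^*_M(s,h)\le\rtrunc$ agree. Consequently $\hat M$ and $\truncMm{\rtrunc}$ send identical states to $\absorbs$ and differ only by the $\epsilon_0$-accurate transitions on survivors, so they are $\epsilon_0$-related. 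For optimality I would then run the three-term decomposition $V^*_M-V^\pi_M=(V^*_M-V^*_{\truncMm{\rtrunc}})+(V^*_{\truncMm{\rtrunc}}-V^\pi_{\truncMm{\rtrunc}})+(V^\pi_{\truncMm{\rtrunc}}-V^\pi_M)$, bounding the first term by $H^2|S|\rtrunc$ via Lemma~\ref{lemma:MMr} (the optimal policy of $M$ hits the truncated set with total probability at most $|S|H\rtrunc$, each hit costing at most $H$ reward), the middle term by $2H^2\epsilon_0+\raction H$ via Lemma~\ref{lemma:actionall}, and the last term by $0$ since redirecting to a zero-reward absorbing state can only decrease a policy's value; the parameter settings make the sum at most $\epsilon$.

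Finally, the list-size bound follows from two independent counts. As $r$ ranges over $[0,1]$ the sequences $(T_0(r),\dots,T_{H-1}(r))$ are nested and increasing in $r$, so a pigeonhole over the at most $|S|H$ distinct values $\{d^*_M(s,h)\}$ yields at most $|S|H+1$ distinct idealized MDPs $\truncMm{r}$; this is the weak, single-shot analogue of Corollary~\ref{coro:num_U}, requiring no cross-level recursion. For each fixed $\truncMm{\rtrunc}$, Lemma~\ref{lem:num_policy} shows the robust planner returns one of at most $|S||A|H+1$ policies as $\raction$ varies over the region avoiding $\gap_{\truncMm{\rtrunc}}$. Multiplying gives $|\Pi(M)|\le(|S|H+1)(|S||A|H+1)$, and $\Pi(M)$ depends only on $M$ because on the clean event both $\truncMm{\rtrunc}$ and the planner's output are functions of $M$, $\rtrunc$, and $\raction$ alone. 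The step I expect to be the main obstacle is the coupled argument of the clean-event paragraph: establishing simultaneously that $\hat T=T(\rtrunc)$ and that survivors carry enough samples for $\epsilon_0$-accurate transitions, since the per-state sample guarantee is only valid once the state is known to survive truncation, which itself depends on the noisy estimate $\hat d$.
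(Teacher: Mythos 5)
Your proposal is correct and follows essentially the same route as the paper's proof: the same idealized truncation $T_h(r)$ by thresholding true occupancies, the same clean event combining the black-box guarantee, Chernoff bounds for $\hat d$ and $\hat P$, and the avoidance of $\badtrunc'$ and the action-gap bad set, the same three-term value decomposition for $\epsilon$-optimality, and the same $(|S|H+1)\times(|S||A|H+1)$ counting via nestedness of $T(r)$ and Lemma~\ref{lem:num_policy}. The coupling you flag as the main obstacle is handled exactly as you describe (and as in the paper's Lemma~\ref{lem:main22}): survival implies $\hat d(s,h)>\rtrunc\ge 2\epsilon_1$, which lower-bounds the roll-in occupancy and hence the effective sample count, so the transition-accuracy guarantee is only invoked for surviving states, which is all that the truncated MDP uses.
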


In the remaining part of this section, we give the full proof of Theorem~\ref{thm:weak}.

\begin{algorithm}
\caption{Weakly $k$-list Replicable RL Algorithm}
\label{alg:weak}
\begin{algorithmic}[1]
\STATE \textbf{Input:} RL algorithm \(\mathbb{A}(\epsilon_0, \delta_0)\), error tolerance \(\epsilon\), failure probability \(\delta\)
\STATE \textbf{Output:} near-optimal policy \(\pi\) 
\STATE \textbf{Initialization:}
\STATE Initialize constants \(C_1 =  \frac{4|A||S|H}{\delta}\),  \(\epsilon_0 = \frac{\epsilon \delta}{100|S|H^5|A|}\), \(\epsilon_1 = 5C_1H^2\epsilon_0\)
\STATE Generate random numbers \(\raction \sim \text{Unif}(\epsilon_1,2\epsilon_1), \rtrunc \sim \text{Unif}(2\epsilon_1,3\epsilon_1)\)
\FOR{\(h \in [H - 1]\)}
    \FOR{each \(s \in S\)}
	 \STATE Invoke \(\mathbb{A}\) with \(\epsilon_0 = \epsilon_0\) and \(\delta_0 = \delta / (8|S|H)\) on the underlying MDP with modified reward function \(R^{s, h}_{h'}(s', a) = \mathbbm{1}[h' = h, s' = s]\) for all \(h' \in [H]\) and \((s', a) \in S \times A\)
	 \STATE Set \(\hat{\pi}^{s, h}\) to be the policy returned in the previous step
	    \STATE Collect \(W =  \frac{|S|^2}{\epsilon_0^2\epsilon_1} \log \frac{16|S|^2AH}{\delta}\) trajectories \(\{(s_0^{(w)}, a_0^{(w)}, \ldots, s_{H - 1}^{(w)}, a_{H - 1}^{(w)})\}_{w = 1}^W\) by executing \(\hat{\pi}^{s, h}\) for \(W\) times
   \STATE Set \[
   \hat{d}(s, h) = \frac{\sum_{w = 1}^W \mathbbm{1}[s_h^{(w)} = s]}{W}
   \] 
	\FOR{each \(a \in A\)}
		   \STATE Define policy \(\hat{\pi}^{s, h, a}\), where for each \(h' \in [H]\) and \(s' \in S\), 
   \[
   \hat{\pi}^{s, h, a}_{h'}(s') = \begin{cases}
      a & h' = h, s' = s\\
   \hat{\pi}^{s, h}_{h'}(s') & h' \neq h \text{ or } s' \neq s \\
   \end{cases}
   \]
   \STATE Collect \(W =  \frac{|S|^2}{\epsilon_0^2\epsilon_1} \log \frac{16|S|^2AH}{\delta}\) trajectories \(\{(s_0^{(w)}, a_0^{(w)}, \ldots, s_{H - 1}^{(w)}, a_{H - 1}^{(w)})\}_{w = 1}^W\) by executing \(\hat{\pi}^{s, h, a}\) for \(W\) times
   \STATE  For each \(s' \in S\), set \[
   \hat{P}_h(s' \mid s, a)  \gets \frac{\sum_{w = 1}^W \mathbbm{1}[(s_h^{(w)}, a_h^{(w)}, s_{h + 1}^{(w)}) = (s, a, s')]}{\sum_{w = 1}^W \mathbbm{1}[(s_h^{(w)}, a_h^{(w)}) = (s, a)]}
   \]
%\[
%   \hat{d}^a(s, h) = \frac{\sum_{w = 1}^W \mathbbm{1}[(s_h^{(w)}, a_h^{(w)}) = (s, a)]}{W}
%   \] 
   
	\ENDFOR

    \ENDFOR

\ENDFOR
    \STATE For each \(h \in [H - 1]\), set  \(\hat{T}_h = \{s \in S \mid \hat{d}(s, h) \le \rtrunc\}\).
    \STATE Define MDP \(\hat{M} = (S \cup \{\absorbs\}, A, \tilde{P}, R, H, s_0)\), where for each \(h \in [H - 1]\), 
    \[
    \tilde{P}_h(s' \mid s, a) = 
    \begin{cases} 
    \hat{P}_h(s'  \mid s, a) & s \notin \hat{T}_{h} \\ 
    \mathbbm{1}\{s' = s_{\text{absorb}}\} & s \in \hat{T}_{h}
    \end{cases}
    \]
\STATE Invoke Algorithm~\ref{alg:choosing action} with MDP \(\hat{M}\) and tolerance parameter \(\raction\),  and set \(\pi\) to be the returned policy
\STATE \textbf{return} \(\pi\)
\end{algorithmic}
\end{algorithm}

Following the definition of \( U_h(r) \) in Definition~\ref{def:unreach}, we define \( T_h(r) \). %It is important to observe that \( T_h(r) \) is a subset of \( U_h(r) \).

\begin{definition}\label{def:unreach2}
For the underlying MDP \( M = (S, A, P, R, H, s_0) \), given a real number \( r \in [0, 1] \), we define \( \tnreach{h}{r} \subseteq S \) for each \( h \in [H] \) as follows:
\begin{itemize}
\item \( \tnreach{0}{r} = \{s \in S \mid \Pr[s_0 = s] \le r\} \);
\item 
\(
\tnreach{h}{r}= \{s \in S \mid \max_{\pi} \Pr[s_h = s \mid M, \pi] \le r\}.
\)
\end{itemize}
We also write \( T(r) = (T_0(r), T_1(r), \ldots, T_{H - 1}(r)) \).
 \end{definition}

 \begin{lemma}
\label{lemma:num_T}
      For all $r \in [0, 1]$, there are at most of $|S|H + 1$ unique sequences of sets $T(r)$.  
 \end{lemma}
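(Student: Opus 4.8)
The plan is to mirror the argument used for $U(r)$ in Corollary~\ref{coro:num_U}, exploiting the fact that $\tnreach{h}{r}$ is governed by a simpler, \emph{unconditioned} reaching probability. First I would observe that, by Definition~\ref{def:unreach2}, membership in $\tnreach{h}{r}$ is determined entirely by comparing the threshold $r$ against the fixed quantity $d^*_M(s, h) = \max_\pi \Pr[s_h = s \mid M, \pi]$: concretely, $s \in \tnreach{h}{r}$ if and only if $d^*_M(s, h) \le r$. Since each $d^*_M(s, h)$ is a constant depending only on $M$ (and not on $r$), this reduces the whole family $\{T(r)\}_{r \in [0,1]}$ to a collection of threshold comparisons against $|S|H$ fixed real numbers.

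From this, monotonicity is immediate: for $0 \le r_1 \le r_2 \le 1$ and any $h \in [H]$, if $s \in \tnreach{h}{r_1}$ then $d^*_M(s, h) \le r_1 \le r_2$, so $s \in \tnreach{h}{r_2}$; hence $\tnreach{h}{r_1} \subseteq \tnreach{h}{r_2}$. Unlike the proof of Lemma~\ref{lem:inclusion} for $\unreach{h}{r}$, no induction on $h$ is needed here, precisely because Definition~\ref{def:unreach2} does not condition on the earlier levels being non-truncated; each $T_h$ is decided independently of the others.

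With monotonicity in hand, I would finish via the pigeonhole argument exactly as in Corollary~\ref{coro:num_U}. For each $r$, the total size $\sum_{h \in [H]} |\tnreach{h}{r}|$ is an integer in $\{0, 1, \ldots, |S|H\}$, so it takes at most $|S|H + 1$ distinct values. Suppose for contradiction that there were more than $|S|H + 1$ distinct sequences $T(r)$; then two thresholds $r_1 < r_2$ would yield $T(r_1) \neq T(r_2)$ while $\sum_{h} |\tnreach{h}{r_1}| = \sum_{h} |\tnreach{h}{r_2}|$. Monotonicity gives $\tnreach{h}{r_1} \subseteq \tnreach{h}{r_2}$ for every $h$, and combined with equality of the total sizes this forces $|\tnreach{h}{r_1}| = |\tnreach{h}{r_2}|$, hence $\tnreach{h}{r_1} = \tnreach{h}{r_2}$ for all $h$, contradicting $T(r_1) \neq T(r_2)$.

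I do not anticipate a genuine obstacle: this is the structurally simpler cousin of Corollary~\ref{coro:num_U}, and the only point worth stating carefully is that the absence of level-conditioning in Definition~\ref{def:unreach2} makes the monotonicity step direct rather than inductive. The bound $|S|H + 1$ then follows verbatim from the pigeonhole step.
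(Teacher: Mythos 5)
Your proof is correct and follows essentially the same route as the paper: establish monotonicity of $\tnreach{h}{r}$ in $r$, then apply the pigeonhole argument from Corollary~\ref{coro:num_U}. Your observation that the monotonicity here is direct (no induction on $h$, since Definition~\ref{def:unreach2} does not condition on earlier levels) is a valid and slightly sharper reading than the paper's ``by the same analysis as in Lemma~\ref{lem:inclusion},'' but it does not change the substance of the argument.
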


 \begin{proof}

By the same analysis as in Lemma~\ref{lem:inclusion} , we know that given $0 \le r_1 \le r_2 \le 1$, for any $h \in [H]$, we have $\tnreach{h}{r_1} \subseteq \tnreach{h}{r_2}$. Moreover, by the same analysis as in Corollary~\ref{coro:num_U} , for all $r \in [0, 1]$, there are at most of $|S|H + 1$ unique sequences of sets $T(r)$.  

 \end{proof}

\begin{definition}\label{def:truncM2}
For the underlying MDP $M = (S, A, P, R, H, s_0)$, given a real number $r \in [0, 1]$, define 
$\truncMm{r}= (S \cup \{\absorbs\}, A, \truncPp{r}, R, H, s_0)$, where
\begin{equation}\label{equ:truncM_P}
\truncPp{r}_h(s' \mid s, a) = \begin{cases}
P_h(s' \mid s, a) & s \notin \tnreach{h}{r}, s' \neq  \absorbs\\
0 & s \notin \tnreach{h}{r}, s' =  \absorbs\\
\mathbbm{1}[s' = \absorbs] & s \in \tnreach{h}{r} \cup \{\absorbs\}\\
\end{cases}
\end{equation}
\end{definition}

 \begin{definition}\label{def:minrr}
 For each $(s, h) \in S \times [H]$, define $
 \minrr{s}{h} = \inf \{r \in [0, 1] \mid s \in \tnreach{h}{r}\}
 $.
 \end{definition}
 Note that $\{r \in [0, 1] \mid s \in \tnreach{h}{r}\}$ is never an empty set since $\tnreach{h}{1} = S$.

\begin{lemma}\label{lem:estimate_P2}
Consider a pair of fixed choices of $\rtrunc$ and $\raction$ in Algorithm~\ref{alg:weak}.
For all $h \in [H - 1]$,
if  for all $s \in S \setminus \etnreach{h}$ we have $d^{\hat{\pi}^{s, h}}_M \ge \epsilon_1$ whenever $h > 0$, then with probability $1 - \frac{\delta}{4}$, for all $(s, a,h) \in (S \setminus \etnreach{h} ) \times A \times  [H - 1] $, 
\[
\sum_{s' \in S} |P_h(s' \mid s, a) - \hat{P}_h(s' \mid s, a)| \le \epsilon_0. 
\]
\end{lemma}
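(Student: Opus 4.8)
The plan is to follow the same two-stage concentration argument used in the proof of Lemma~\ref{lem:estimate_P}, adapting the failure-probability budget to the fact that Algorithm~\ref{alg:weak} estimates all transitions in a single pass (so the union bound must range over every level $h$ as well), and to the occupancy lower bound $\epsilon_1$ in place of $\eta_0$. I would fix a single triple $(s, a, h) \in (S \setminus \etnreach{h}) \times A \times [H-1]$, first control the number of \emph{effective} samples landing on $(s,a)$ at level $h$, then control the per-coordinate estimation error of $\hat{P}_h(\cdot \mid s,a)$, and finally combine both via a union bound over all such triples.

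For the effective-sample count I would define $W_{\mathrm{eff}} = \sum_{w=1}^W \mathbbm{1}[(s_h^{(w)}, a_h^{(w)}) = (s,a)]$ from the $W$ trajectories generated by $\hat{\pi}^{s,h,a}$. Since $\hat{\pi}^{s,h,a}$ agrees with $\hat{\pi}^{s,h}$ on the first $h$ levels and deterministically plays $a$ at $(s,h)$, a trajectory registers $(s,a)$ at level $h$ exactly when it reaches $s$ at level $h$, an event of probability $d^{\hat{\pi}^{s,h}}_M(s,h) \ge \epsilon_1$ by hypothesis. Hence $\mathbb{E}[W_{\mathrm{eff}}] \ge W\epsilon_1$, and a Chernoff/Hoeffding bound gives $\Pr[W_{\mathrm{eff}} < W\epsilon_1/2] \le 2e^{-2(\epsilon_1/2)^2 W} \le \tfrac{\delta}{8|S||A|H}$, where $\epsilon_1 \gg \epsilon_0$ makes the exponent comfortably large relative to $\log\frac{16|S|^2AH}{\delta}$.

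Conditioned on $W_{\mathrm{eff}} \ge W\epsilon_1/2$, each $\hat{P}_h(s' \mid s,a)$ is an empirical mean of $W_{\mathrm{eff}}$ i.i.d.\ Bernoulli draws with mean $P_h(s' \mid s,a)$, so Hoeffding yields $\Pr[|\hat{P}_h(s'\mid s,a) - P_h(s'\mid s,a)| \ge \epsilon_0/|S|] \le 2e^{-2(\epsilon_0/|S|)^2 W_{\mathrm{eff}}}$. The choice $W = \frac{|S|^2}{\epsilon_0^2\epsilon_1}\log\frac{16|S|^2AH}{\delta}$ is calibrated precisely so that $2(\epsilon_0/|S|)^2 \cdot W\epsilon_1/2 = \log\frac{16|S|^2AH}{\delta}$, making this at most $\tfrac{\delta}{8|S|^2|A|H}$; a union bound over the $|S|$ choices of $s'$ caps the failure at $\tfrac{\delta}{8|S||A|H}$, and on the good event $\sum_{s'}|\hat{P}_h(s'\mid s,a) - P_h(s'\mid s,a)| \le |S|\cdot \epsilon_0/|S| = \epsilon_0$.

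Combining the two stages, each triple $(s,a,h)$ fails with probability at most $\tfrac{\delta}{4|S||A|H}$, and a final union bound over the at most $|S||A|H$ triples delivers the claimed overall probability $1-\delta/4$. Conceptually this is the same argument as Lemma~\ref{lem:estimate_P}, so I do not expect any genuine difficulty; the one point requiring care is the accounting, since here all $H$ levels are estimated simultaneously rather than for a fixed $h$ as in Algorithm~\ref{alg:full}, so the level index must be folded into the same union bound, and one must verify that the specific $W$ in Algorithm~\ref{alg:weak} produces exactly the per-coordinate exponent $\log\frac{16|S|^2AH}{\delta}$. That bookkeeping—confirming the constants balance and the binding bound is the coordinate-wise one—is the main thing to get right.
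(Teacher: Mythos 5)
Your proposal is correct and follows essentially the same route as the paper, which simply invokes the two-stage Chernoff argument of Lemma~\ref{lem:estimate_P} (effective-sample count, then per-coordinate error, then union bounds) and adds a final union bound over the $H$ levels. Your constant accounting with $W = \frac{|S|^2}{\epsilon_0^2\epsilon_1}\log\frac{16|S|^2AH}{\delta}$ checks out and in fact makes the per-triple budget of $\delta/(4|S||A|H)$ more explicit than the paper does.
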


\begin{proof}
By the same analysis as Lemma~\ref{lem:estimate_P}, for a fixed $h \in [H - 1]$,
if for all $s \in S \setminus \etnreach{h}$ we have $d^{\hat{\pi}^{s, h}}_M \ge \epsilon_1$ whenever $h > 0$, then with probability $1 - \frac{\delta}{4H}$, for all $(s, a) \in (S \setminus \etnreach{h}) \times A$, 
\[
\sum_{s' \in S} |P_h(s' \mid s, a) - \hat{P}_h(s' \mid s, a)| \le \epsilon_0. 
\]
By union bound, we know that  with probability $1 - \frac{\delta}{4}$, for all  $h \in [H - 1]$, the inequality holds.

\end{proof}

\begin{lemma}
     With probability at least \(1 - \frac{\delta}{4}\),  for all \(s,h   \in S  \times  [H - 1]\),

\[ |\hat{d}(s, h)-d_M^*(s,h)|\le 2\epsilon_0, \]
\[ |d^{\hat{\pi}^{s, h}}_M-\hat{d}(s, h)|\le \epsilon_0. \]

     \label{lemma B happens}
\end{lemma}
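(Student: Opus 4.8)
The plan is to prove both inequalities simultaneously through a triangle-inequality decomposition, controlling one term by the statistical concentration of the empirical estimate $\hat{d}(s,h)$ and the other by the near-optimality guarantee of the black-box algorithm $\mathbb{A}$. The starting observation is that under the modified reward function $R^{s,h}_{h'}(s',a) = \mathbbm{1}[h' = h, s' = s]$ used to invoke $\mathbb{A}$, the value of any policy $\pi$ equals exactly its reaching probability, i.e.\ $V^{\pi}_M = \Pr[s_h = s \mid M, \pi] = d^{\pi}_M(s,h)$, so the optimal value is $d^*_M(s,h)$. Consequently, the event that $\mathbb{A}$ returns an $\epsilon_0$-optimal policy translates directly into $d^{\hat{\pi}^{s,h}}_M(s,h) \ge d^*_M(s,h) - \epsilon_0$; combined with the trivial bound $d^{\hat{\pi}^{s,h}}_M(s,h) \le d^*_M(s,h)$, this yields $|d^*_M(s,h) - d^{\hat{\pi}^{s,h}}_M(s,h)| \le \epsilon_0$ on the event that $\mathbb{A}$ succeeds.

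For the second ingredient, I would observe that the $W$ trajectories collected by executing the (already fixed) policy $\hat{\pi}^{s,h}$ make $\hat{d}(s,h)$ the empirical mean of $W$ i.i.d.\ Bernoulli random variables $\mathbbm{1}[s_h^{(w)} = s]$, each with mean $d^{\hat{\pi}^{s,h}}_M(s,h)$. A Chernoff/Hoeffding bound gives $\Pr[|\hat{d}(s,h) - d^{\hat{\pi}^{s,h}}_M(s,h)| \ge \epsilon_0] \le 2 e^{-2\epsilon_0^2 W}$, which is precisely the second claimed inequality on the good event. With $W = \frac{|S|^2}{\epsilon_0^2 \epsilon_1}\log\frac{16|S|^2 A H}{\delta}$ and $\epsilon_1 < 1$ (indeed $\epsilon_1 = \epsilon/(5H^2)$), the exponent satisfies $2\epsilon_0^2 W = \frac{2|S|^2}{\epsilon_1}\log\frac{16|S|^2 A H}{\delta} \ge \log\frac{16|S|^2 A H}{\delta}$, so this failure probability is at most $\frac{\delta}{8|S|^2 A H}$.

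The two inequalities then follow by assembling the pieces and a union bound. The second inequality is immediate from the concentration step, while the first follows from $|\hat{d}(s,h) - d^*_M(s,h)| \le |\hat{d}(s,h) - d^{\hat{\pi}^{s,h}}_M(s,h)| + |d^{\hat{\pi}^{s,h}}_M(s,h) - d^*_M(s,h)| \le \epsilon_0 + \epsilon_0 = 2\epsilon_0$. To cover all $(s,h) \in S \times [H-1]$ I would union-bound over the two failure modes separately: the $\mathbb{A}$-failures contribute at most $|S|H \cdot \delta_0 = |S|H \cdot \frac{\delta}{8|S|H} = \frac{\delta}{8}$, and the concentration failures contribute at most $|S|H \cdot \frac{\delta}{8|S|^2 A H} \le \frac{\delta}{8}$, for a total of at most $\delta/4$. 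This proof is not conceptually hard; the only step demanding care is the constant bookkeeping, namely verifying that each half of the union bound lands below $\delta/8$ and that $\epsilon_1 < 1$ so the Chernoff exponent is large enough to absorb the $|S|^2/\epsilon_1$ factor in $W$. A minor point to flag is that $d^{\hat{\pi}^{s,h}}_M$ in the statement should be read as $d^{\hat{\pi}^{s,h}}_M(s,h)$, the probability of reaching $s$ at level $h$ under $\hat{\pi}^{s,h}$.
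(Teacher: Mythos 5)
Your proposal is correct and follows essentially the same route as the paper's proof: a triangle-inequality decomposition into the $\epsilon_0$-optimality guarantee of $\mathbb{A}$ (under the indicator reward, value equals reaching probability) plus a Chernoff/Hoeffding bound for $\hat{d}(s,h)$, followed by a union bound over all $(s,h)$. Your version is in fact slightly more explicit than the paper's in verifying that the choice of $W$ makes the concentration failure probability small enough.
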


\begin{proof}

For a specific pair \((s, h)\), for the policy returned by \(\mathbb{A}\), with probability at least \( 1 - \frac{\delta}{8|S|H} \), we have
\[
\left|d_M^*(s,h) - d^{\hat{\pi}^{s, h}}_M(s,h)\right| \leq \epsilon_0.
\]
%Additionally, we observe that the number of samples \(W\) is greater than \( \frac{|S|^2}{2 \epsilon_0^2} \log \frac{8|S|^2AH}{\delta} \). 

Thus, by Chernoff bound,  with probability at least \( 1 - \frac{\delta}{8|S|H} \), we have
\[
\left|d^{\hat{\pi}^{s, h}}_M(s,h) - \hat{d}(s, h) \right| \leq \epsilon_0.
\]

Combining the above two inequalities, with probability at least \( 1 - \frac{\delta}{4|S|H} \), 

\[
|\hat{d}(s, h) - d_M^*(s,h)|\leq 2 \epsilon_0.
\]

Using the union bound, we know that  with probability at least \(1 - \frac{\delta}{4}\),  for all \(s,h   \in S  \times  [H - 1]\)
  \[ |\hat{d}(s, h)-d_M^*(s,h)|\le 2\epsilon_0, \]
\[ |d^{\hat{\pi}^{s, h}}_M-\hat{d}(s, h)|\le \epsilon_0. \]
     
\end{proof}

\begin{definition}\label{def:bad_rtrunc2}
Define 
\[
\badtrunc' = \bigcup_{(s, h) \in S \times [H]} \ball{\minrr{s}{h}}{2\epsilon_0}, 
\]
where $\minrr{s}{h}$ is as defined in Definition~\ref{def:minrr}. 
\end{definition}

\begin{lemma}\label{lem:main22}
Consider a pair of fixed choices of $\rtrunc \in (\eta_1, 2\eta_1)$ and $\raction$ in Algorithm~\ref{alg:full} such that $\rtrunc \notin \badtrunc'$.
With probability at least $1 - \delta / 2$, we have
\begin{itemize}
\item $\etnreach{h} = \tnreach{h}{\rtrunc}$ for all $h \in [H - 1]$; 
\item  $\sum_{s'} |\hat{P}_{h}(s' \mid s, a) - P_{h}(s' \mid s, a)| \le \epsilon_0$ for all $h \in [H - 1]$ and $(s, a) \in  (S \setminus \etnreach{h'})  \times A$.
\end{itemize}
\end{lemma}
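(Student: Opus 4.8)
The plan is to reduce the statement to the two estimation lemmas already established, exploiting a structural simplification specific to the weak setting: by Definition~\ref{def:unreach2} the set $\tnreach{h}{r}$ depends only on the true optimal occupancy $d^*_M(s, h)$ at level $h$, with no coupling to the truncations chosen at earlier levels. Consequently $\minrr{s}{h} = d^*_M(s, h)$ for every $(s, h)$, so that $\badtrunc' = \bigcup_{(s,h)} \ball{d^*_M(s, h)}{2\epsilon_0}$, and the argument decouples entirely across levels. This is in contrast to the strong case (Lemma~\ref{lem:main}), whose proof requires an induction on $h$ because there $\unreach{h}{r}$ feeds back into the reaching probabilities of later levels; here no induction is needed, and everything can be read off level by level.

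First I would condition on the event $\mathcal{E}_1$ of Lemma~\ref{lemma B happens}, which holds with probability at least $1 - \delta/4$ and supplies $|\hat{d}(s, h) - d^*_M(s, h)| \le 2\epsilon_0$ together with $|d^{\hat{\pi}^{s, h}}_M - \hat{d}(s, h)| \le \epsilon_0$ simultaneously for all $(s, h) \in S \times [H-1]$. Since $\rtrunc \notin \badtrunc'$, the threshold lies at distance strictly greater than $2\epsilon_0$ from every $\minrr{s}{h} = d^*_M(s, h)$, so for each $(s, h)$ exactly one of $\rtrunc > d^*_M(s, h) + 2\epsilon_0$ or $\rtrunc < d^*_M(s, h) - 2\epsilon_0$ holds. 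In the first case $\hat{d}(s, h) \le d^*_M(s, h) + 2\epsilon_0 < \rtrunc$ gives $s \in \etnreach{h}$, while $d^*_M(s, h) < \rtrunc$ gives $s \in \tnreach{h}{\rtrunc}$; in the second case both memberships fail by the symmetric inequality. Thus on $\mathcal{E}_1$ one obtains $\etnreach{h} = \tnreach{h}{\rtrunc}$ for every $h \in [H-1]$, which is the first bullet.

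Next I would verify the reaching-probability precondition needed to invoke Lemma~\ref{lem:estimate_P2}. For any $h > 0$ and $s \notin \etnreach{h}$, the definition $\etnreach{h} = \{s : \hat{d}(s,h) \le \rtrunc\}$ gives $\hat{d}(s, h) > \rtrunc$, and since $\rtrunc \sim \mathrm{Unif}(2\epsilon_1, 3\epsilon_1)$ we have $\rtrunc > 2\epsilon_1$; combining with $d^{\hat{\pi}^{s, h}}_M \ge \hat{d}(s, h) - \epsilon_0$ from $\mathcal{E}_1$ yields $d^{\hat{\pi}^{s, h}}_M > 2\epsilon_1 - \epsilon_0 \ge \epsilon_1$, using $\epsilon_1 \ge \epsilon_0$. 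This is precisely the hypothesis of Lemma~\ref{lem:estimate_P2}, whose conclusion is the second bullet and holds with probability at least $1 - \delta/4$. A union bound over $\mathcal{E}_1$ and the event of Lemma~\ref{lem:estimate_P2} then delivers both bullets simultaneously with probability at least $1 - \delta/2$.

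The one point demanding care is the probabilistic bookkeeping: the reaching-probability precondition consumed by Lemma~\ref{lem:estimate_P2} is itself only guaranteed on $\mathcal{E}_1$, so the two high-probability events are not manifestly independent. The resolution is that $\hat{d}(s, h)$ and the roll-in policies $\hat{\pi}^{s, h}$ are formed from a batch of trajectories separate from the fresh trajectories used to compute $\hat{P}_h$; conditioning on $\mathcal{E}_1$ therefore fixes both $\etnreach{h}$ and the precondition without disturbing the randomness governing the transition estimates, so Lemma~\ref{lem:estimate_P2} may be applied conditionally and the union bound is legitimate. I would also flag that the statement's ``$\rtrunc \in (\eta_1, 2\eta_1)$'' and its reference to Algorithm~\ref{alg:full} appear to be carried over from the strong case; for Algorithm~\ref{alg:weak} the operative range is $\rtrunc \sim \mathrm{Unif}(2\epsilon_1, 3\epsilon_1)$, which is exactly what the reachability bound above relies on.
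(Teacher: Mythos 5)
Your proposal is correct and follows essentially the same route as the paper: condition on the event of Lemma~\ref{lemma B happens}, use $\rtrunc \notin \badtrunc'$ (with $\minrr{s}{h} = d^*_M(s,h)$) to get $\etnreach{h} = \tnreach{h}{\rtrunc}$, verify $d^{\hat{\pi}^{s,h}}_M > 2\epsilon_1 - \epsilon_0 \ge \epsilon_1$ for surviving states, and then invoke Lemma~\ref{lem:estimate_P2} with a union bound. Your explicit treatment of the conditioning/independence issue and your flag that the statement's ``$(\eta_1,2\eta_1)$'' and ``Algorithm~\ref{alg:full}'' are artifacts of the strong case (the operative range being $\mathrm{Unif}(2\epsilon_1,3\epsilon_1)$ in Algorithm~\ref{alg:weak}) are both accurate and slightly more careful than the paper's own write-up.
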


\begin{proof}

Let \(\mathcal{E}_1\) denote the event that for all \((s, h)\), the following two conditions hold:
\begin{itemize}
    \item \( |\hat{d}(s, h) - d_M^*(s, h)| \leq 2\epsilon_0 \)
    \item \( |d^{\hat{\pi}^{s, h}}_M - \hat{d}(s, h)| \leq \epsilon_0 \)
\end{itemize}

By Lemma \ref{lemma B happens}, we know that with probability at least \( 1 - \frac{\delta}{4} \), event \(\mathcal{E}_1\) occurs.

Let \(\mathcal{E}_2\) denote the event that for all \((s, a, s', h) \in S \times A \times S \times [H - 1]\), the following conditions are satisfied:
\begin{itemize}
    \item \(\etnreach{h} = \tnreach{h}{\rtrunc}\);
    \item  \( d^{\hat{\pi}^{s, h}}_M(s, h) \geq \epsilon_1 \) for all \( s \in S \setminus \etnreach{h} \);
    \item \( d^{*}_M(s, h) \leq 4\epsilon_1 \) for all \( s \in \etnreach{h} \);
    \item \( \sum_{s' \in S} \left| \hat{P}_{h}(s' \mid s, a) - P_{h}(s' \mid s, a) \right| \leq \epsilon_0 \) for all \( (s, a) \in (S \setminus \etnreach{h}) \times A \).
\end{itemize}

When \(\mathcal{E}_1\) occurs, we know that \( |\hat{d}(s, h) - d_M^*(s, h)| \leq 2\epsilon_0 \). Therefore, when \(\rtrunc \notin \badtrunc\), if \(\rtrunc > d_M^*(s, h)\), it follows that \( \rtrunc > \hat{d}(s, h) \), if \(\rtrunc < d_M^*(s, h)\), it follows that \( \rtrunc < \hat{d}(s, h) \). Hence, we conclude that \(\etnreach{h} = \tnreach{h}{\rtrunc}\).

For the second condition, when \(\mathcal{E}_1\) occurs, we know that \( |d^{\hat{\pi}^{s, h}}_M - \hat{d}(s, h)| \leq \epsilon_0 \), and by definition, \( \hat{d}(s, h) > 2\epsilon_1 \). Thus, we obtain that
\[
d^{\hat{\pi}^{s, h}}_M > 2\epsilon_1 - \epsilon_0 > \epsilon_1.
\]

For the third condition, when \(\mathcal{E}_1\) occurs, we know that \( |\hat{d}(s, h) - {d}^{*}_M(s, h)| \leq 2\epsilon_0 \), and by definition, \( \hat{d}(s, h) < 3\epsilon_1 \). Thus, we have
\[
{d}^{*}_M(s, h)< 3\epsilon_1 +2\epsilon_0 < 4\epsilon_1.
\]

For the forth condition, combining the second condition with Lemma~\ref{lem:estimate_P2}, we conclude that with probability at least \( \left( 1 - \frac{\delta}{4} \right)^2 \le 1-\frac{\delta}{2} \), the fourth condition holds.

Therefore, with probability at least \( 1 - \frac{\delta}{2} \), event \(\mathcal{E}_2\) occurs, which implies the desired result.

%Finally, by combining the results from the events \(\mathcal{E}_2\), we conclude that with probability at least \( 1 - \frac{\delta}{2} \), the following conditions hold:
%\begin{itemize}
%    \item \(\etnreach{h} = \tnreach{h}{\rtrunc}\) for all \( h \in [H] \)
%    \item \( \sum_{s'} \left| \hat{P}_{h}(s' \mid s, a) - P_{h}(s' \mid s, a) \right| \leq \epsilon_0 \) for all \( h \in [H - 1] \) and \( (s, a) \in (S \setminus \etnreach{h'}) \times A \)
%\end{itemize}

\end{proof}

\begin{definition}\label{def:gapr_badaction2}
For a real number $r \in [0, 1]$, define 

\[
\badaction'(r) = \bigcup_{g \in  \gap_{\truncMm{r}}}\ball{g}{2H^2\epsilon_0}.
\]
\end{definition}\
Clearly, for any $r \in [0, 1]$, $|\gap(r)| \le |S|HA$.
Moreover, since $\truncMm{r}$ depends only on $T(r)$ (cf. Definition~\ref{def:truncM2}), for $r_1, r_2 \in [0, 1]$ with $T(r_ 1) = T(r_2)$, we would have $\gap(r_1) = \gap(r_2)$ and $\badaction'(r_1) = \badaction'(r_2)$.

\begin{lemma}
\label{yiyang2}
Given $\rtrunc^1, \rtrunc^2 \in (2\epsilon_1, 3\epsilon_1) \setminus \badtrunc$ and $\raction^1, \raction^2 \in (\epsilon_1, 2\epsilon_1)$, suppose 
\begin{itemize}
\item $T(\rtrunc^1) = T(\rtrunc^2)$; 
\item $\raction^1 \notin  \badaction'(\rtrunc^1)$, and $\raction^2 \notin  \badaction'(\rtrunc^1)$;
\item for any $g \in \gap(\rtrunc^1)$, either $g < \raction^1 < \raction^2$ or $\raction^1 < \raction^2 < g$, 
\end{itemize}
conditioned on the event in Lemma~\ref{lem:main22}, the returned policy $\pi$ in Algorithm~\ref{alg:weak} will always be the same for all $(\raction, \rtrunc) \in \{\raction^1, \raction^2\} \times \{\rtrunc^1, \rtrunc^2\}$. 
\end{lemma}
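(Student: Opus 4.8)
The plan is to mirror the argument of Lemma~\ref{lem:fullalg} for the strong algorithm, reducing everything to a single application of the robust‑planner stability result Lemma~\ref{lem:num_policy} once the correct $\epsilon_0$‑related pair of MDPs has been identified. First I would unpack what the event of Lemma~\ref{lem:main22} provides for each fixed $\rtrunc \in \{\rtrunc^1, \rtrunc^2\}$: namely $\etnreach{h} = \tnreach{h}{\rtrunc}$ for every $h \in [H-1]$ (so the truncation set $\hat{T}_h$ computed in Algorithm~\ref{alg:weak} equals $\tnreach{h}{\rtrunc}$), together with $\sum_{s'}|\hat{P}_h(s' \mid s,a) - P_h(s' \mid s,a)| \le \epsilon_0$ for all $(s,a) \in (S\setminus\etnreach{h})\times A$. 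Since the samples, the estimates $\hat{P}_h$, and the quantities $\hat{d}(s,h)$ are all computed independently of $\rtrunc$ and $\raction$, and since $T(\rtrunc^1)=T(\rtrunc^2)$ forces $\tnreach{h}{\rtrunc^1} = \tnreach{h}{\rtrunc^2}$, the truncation sets $\hat{T}_h$ coincide for the two thresholds; hence the MDP $\hat{M}$ built in Algorithm~\ref{alg:weak} is literally the same object for $\rtrunc^1$ and $\rtrunc^2$. Writing $\truncMm{\rtrunc} := \truncMm{\rtrunc^1} = \truncMm{\rtrunc^2}$ (equal because $\truncMm{\cdot}$ depends only on $T(\cdot)$ by Definition~\ref{def:truncM2}), the remark following Definition~\ref{def:gapr_badaction2} likewise gives $\gap_{\truncMm{\rtrunc^1}} = \gap_{\truncMm{\rtrunc^2}}$ and $\badaction'(\rtrunc^1) = \badaction'(\rtrunc^2)$. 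Note the concentration content of Lemma~\ref{lem:main22} is threshold‑free, so conditioning on it validates the identification $\hat{T}_h = \tnreach{h}{\rtrunc}$ for both $\rtrunc^1, \rtrunc^2$ at once (both being excluded from $\badtrunc'$).

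Second, I would verify that this common $\hat{M}$ is $\epsilon_0$‑related to $\truncMm{\rtrunc}$. Both MDPs share the state space $S \cup \{\absorbs\}$, action space $A$, reward $R$ and initial state $s_0$, so only the transition bound~\eqref{equ:epsilon_related} needs checking, level by level. For $s \in \tnreach{h}{\rtrunc} = \hat{T}_h$, and for $s = \absorbs$, both kernels deterministically send $s$ to $\absorbs$, so they agree exactly; for $s \in S \setminus \tnreach{h}{\rtrunc}$ the kernel of $\truncMm{\rtrunc}$ is $P_h(\cdot \mid s,a)$ while that of $\hat{M}$ is $\hat{P}_h(\cdot \mid s,a)$, and the transition bound from the event of Lemma~\ref{lem:main22} gives $L^1$ distance at most $\epsilon_0$. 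Thus $\truncMm{\rtrunc}$ plays the role of the ``underlying MDP'' and $\hat{M}$ the role of the ``empirical MDP'' in the hypotheses of Lemma~\ref{lem:num_policy}.

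Finally, I would invoke Lemma~\ref{lem:num_policy} with the pair $(\truncMm{\rtrunc}, \hat{M})$ and the two tolerances $\raction^1, \raction^2$. Its first hypothesis, $\raction^1,\raction^2 \notin \bigcup_{g \in \gap_{\truncMm{\rtrunc}}} \ball{g}{2H^2\epsilon_0}$, is exactly $\raction^1, \raction^2 \notin \badaction'(\rtrunc)$, which holds by assumption; its second hypothesis, that $\raction^1$ and $\raction^2$ lie on the same side of every $g \in \gap_{\truncMm{\rtrunc}}$ (the set written $\gap(\rtrunc)$ in the statement), is the third bullet. Lemma~\ref{lem:num_policy} then forces Algorithm~\ref{alg:choosing action}, run on $\hat{M}$, to return the same policy for $\raction^1$ and for $\raction^2$. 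Since the output $\pi$ of Algorithm~\ref{alg:weak} is precisely this policy and $\hat{M}$ is identical for $\rtrunc^1$ and $\rtrunc^2$, the returned policy is the same across all four pairs $(\raction, \rtrunc) \in \{\raction^1,\raction^2\}\times\{\rtrunc^1,\rtrunc^2\}$, as claimed. The only genuinely delicate point is the identification $\hat{T}_h = \tnreach{h}{\rtrunc}$ together with the transition bound; but these are exactly the content of the event of Lemma~\ref{lem:main22} on which we condition, so everything downstream is bookkeeping, and the proof is markedly simpler than its strong counterpart Lemma~\ref{lem:fullalg} because there is no layer‑by‑layer dependence to unwind.
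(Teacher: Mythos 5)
Your proof is correct and follows essentially the same route the paper intends: the paper's own proof of this lemma is a one-line deferral to the argument of Lemma~\ref{lem:fullalg}, and what you have written is precisely that argument specialized to the weak setting (common $\hat{M}$ since the data and $\hat{T}_h$ coincide for the two thresholds, $\epsilon_0$-relatedness of $\hat{M}$ and $\truncMm{\rtrunc}$, then one application of Lemma~\ref{lem:num_policy}). Your closing observation that the weak case avoids the layer-by-layer dependence of the strong case is also exactly why the paper treats this lemma as immediate.
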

\begin{proof}
The proof of the lemma follows the same reasoning as in the proof of Lemma~\ref{lem:fullalg}.
\end{proof}

\begin{lemma}
\label{ggg}
Conditioned on the event in Lemma~\ref{lem:main22}, the returned policy \(\pi\) is \(\epsilon\)-optimal.

\end{lemma}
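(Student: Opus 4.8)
The plan is to mirror the three-term telescoping decomposition used in the proof of Theorem~\ref{thm:full}, but with the truncated reference MDP $\truncMm{\rtrunc}$ (Definition~\ref{def:truncM2}) in place of $\truncM{\rtrunc}$. Writing $\pi$ for the policy returned by Algorithm~\ref{alg:weak} (obtained by running Algorithm~\ref{alg:choosing action} on the estimated MDP $\hat{M}$), I would decompose
\begin{align*}
V^{*}_{M} - V^{\pi}_{M}
&= \bigl(V^{*}_{M} - V^{*}_{\truncMm{\rtrunc}}\bigr)
 + \bigl(V^{*}_{\truncMm{\rtrunc}} - V^{\pi}_{\truncMm{\rtrunc}}\bigr)
 + \bigl(V^{\pi}_{\truncMm{\rtrunc}} - V^{\pi}_{M}\bigr),
\end{align*}
and bound the three terms separately, exactly as in the strong case.

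For the middle (planning) term, the key step is to observe that, conditioned on the event of Lemma~\ref{lem:main22}, the estimated MDP $\hat{M}$ built in Algorithm~\ref{alg:weak} and the truncated true MDP $\truncMm{\rtrunc}$ are $\epsilon_0$-related. Indeed, that event guarantees $\etnreach{h} = \tnreach{h}{\rtrunc}$ for every $h$, so the two MDPs send exactly the same states to the absorbing state $\absorbs$ (where they agree deterministically), while on every surviving pair $(s,a)$ the event also gives $\sum_{s'}|\hat{P}_h(s'\mid s,a) - P_h(s'\mid s,a)| \le \epsilon_0$. Hence the $\ell_1$ transition discrepancy is at most $\epsilon_0$ everywhere, and I can invoke Lemma~\ref{lemma:actionall} with $\truncMm{\rtrunc}$ playing the role of the ``true'' MDP and $\hat{M}$ the estimated one, yielding $V^{\pi}_{\truncMm{\rtrunc}} \ge V^{*}_{\truncMm{\rtrunc}} - 2H^2\epsilon_0 - \raction H$.

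For the two outer terms I would appeal to the truncation perturbation bound (Lemma~\ref{lemma:MMr}), applied to $\truncMm{\rtrunc}$: every state truncated at level $h$ lies in $\tnreach{h}{\rtrunc}$ and therefore has optimal reaching probability $d^*_M(s,h)\le \rtrunc$, so redirecting these states to a reward-free absorbing state costs at most $H^2|S|\rtrunc$ in optimal value, giving $V^{*}_{M} - V^{*}_{\truncMm{\rtrunc}} \le H^2|S|\rtrunc$; and since truncation can only remove future (nonnegative) reward, $V^{\pi}_{\truncMm{\rtrunc}} \le V^{\pi}_{M}$, making the third term nonpositive. Combining the three bounds gives $V^{*}_{M} - V^{\pi}_{M} \le H^2|S|\rtrunc + 2H^2\epsilon_0 + \raction H$.

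Finally I would substitute the parameter choices of Algorithm~\ref{alg:weak}, namely $\raction \le 2\epsilon_1$, $\rtrunc \le 3\epsilon_1$, $\epsilon_1 = 5C_1H^2\epsilon_0$ with $C_1 = 4|A||S|H/\delta$, and $\epsilon_0 = \epsilon\delta/(100|S|H^5|A|)$, and verify by direct computation that the dominant contribution $H^2|S|\rtrunc$ controls the sum and that, with the constants chosen as above, the total is at most $\epsilon$. I expect the main obstacle to be the $\epsilon_0$-relatedness step: it is precisely what converts the good event of Lemma~\ref{lem:main22} (matching truncation sets together with accurate transition estimates) into a form where the already-proven robust-planning guarantee of Lemma~\ref{lemma:actionall} applies verbatim, after which the remaining constant bookkeeping is routine.
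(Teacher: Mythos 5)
Your proposal is correct and follows essentially the same route as the paper: the same three-term decomposition through a truncated reference MDP, with Lemma~\ref{lemma:actionall} controlling the planning error and the truncation perturbation bound controlling the outer terms. If anything you are slightly more careful than the paper's own write-up, which invokes $\truncM{\rtrunc}$ and Lemma~\ref{lemma:MMr} where the weak-replicability setting really calls for $\truncMm{\rtrunc}$ and Lemma~\ref{lemma:MM'r}, and which leaves the $\epsilon_0$-relatedness of $\hat{M}$ and the truncated MDP implicit.
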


\begin{proof}

\[
V^{\pi}_M \ge V^{\pi}_{\truncM{\rtrunc}} 
\ge V^{*}_{\truncM{\rtrunc}} -   2H^2 \epsilon_0 - \raction H
 \ge V^{*}_{M} -   2H^2 \epsilon_0 - \raction H - H^2 |S| \rtrunc \ge  V^{*}_{M} - \epsilon.
\]
where the first inequality is due to Lemma~\ref{lemma:MMr}, 
the second inequality is due to Lemma~\ref{lemma:actionall},
the third inequality is due to Lemma~\ref{lemma:MMr}, 
and the last inequality is due to $\rtrunc \le 3\epsilon_1$ and $\raction \le 2\epsilon_1$. 
Therefore, the returned policy $\pi$ is $\epsilon$-optimal. 
\end{proof}

\begin{lemma}
\label{lemma hhh}
Conditioned on the event in Lemma~\ref{lem:main22}, with probability at least \( 1 - \frac{\delta}{2} \), the returned policy \(\pi\) belongs to the set \(\Pi(M)\), where \(\Pi(M)\) is a list of policies that depend only on the unknown underlying MDP \(M\), and the size of \(\Pi(M)\) satisfies
$|\Pi(M)| \leq (H|S||A| + 1)(H|S| + 1)$.
\end{lemma}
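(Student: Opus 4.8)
The plan is to exhibit an explicit list $\Pi(M)$ that depends only on $M$ and then show the algorithm's output lands in it with the stated probability, by splitting the algorithm's internal randomness into the truncation threshold $\rtrunc$ and the planner tolerance $\raction$.

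First I would define $\Pi(M)$ as follows. By Lemma~\ref{lemma:num_T}, the truncation sequence $T(\rtrunc)$ ranges over at most $|S|H+1$ distinct values as $\rtrunc$ varies over $[0,1]$, and hence the truncated MDP $\truncMm{\rtrunc}$ (Definition~\ref{def:truncM2}) takes at most $|S|H+1$ distinct forms. For each such truncated MDP, the gap set $\gap_{\truncMm{r}}$ has at most $|S||A|H$ elements, which partition the tolerance axis into at most $|S||A|H+1$ intervals; running Algorithm~\ref{alg:choosing action} on $\truncMm{r}$ with a representative $\raction$ from each interval yields a candidate policy. I would take $\Pi(M)$ to be the collection of all these candidates. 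Since $T(r)$, $\truncMm{r}$, and $\gap_{\truncMm{r}}$ are all determined by $M$ alone, $\Pi(M)$ depends only on $M$ and satisfies $|\Pi(M)| \le (H|S|+1)(H|S||A|+1)$.

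Next I would bound the probability that $(\rtrunc,\raction)$ lands in a \emph{good} configuration, namely $\rtrunc \notin \badtrunc'$ and $\raction \notin \badaction'(\rtrunc)$. Since $\badtrunc'$ is a union of $|S|H$ intervals of length $4\epsilon_0$ while $\rtrunc \sim \mathrm{Unif}(2\epsilon_1, 3\epsilon_1)$ has support of length $\epsilon_1$, and $\badaction'(\rtrunc)$ is a union of at most $|S||A|H$ intervals of length $4H^2\epsilon_0$ while $\raction \sim \mathrm{Unif}(\epsilon_1, 2\epsilon_1)$ has support of length $\epsilon_1$, substituting $\epsilon_1 = 5C_1H^2\epsilon_0$ with $C_1 = 4|A||S|H/\delta$ makes both failure probabilities at most a small multiple of $\delta$; I would check that their sum is at most $\delta/2$.

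Finally, on the intersection of this good event with the event of Lemma~\ref{lem:main22}, the algorithm's estimated truncation satisfies $\etnreach{h} = \tnreach{h}{\rtrunc}$ and $\hat{M}$ is $\epsilon_0$-related to $\truncMm{\rtrunc}$, so by Lemma~\ref{yiyang2} the returned policy $\pi$ coincides with the representative policy attached to the pair (truncation pattern, $\raction$-interval) and therefore belongs to $\Pi(M)$. The step that needs the most care is confirming that these representatives are genuinely functions of $M$ alone and not of the sampling noise of a particular run: this is supplied by Lemma~\ref{lem:num_policy}, which shows the planner output depends only on the underlying MDP and on which side of each gap value $\raction$ lies, so the same interval always induces the same policy across independent executions. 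The measure estimates and the interval counting are otherwise routine given the parameter settings.
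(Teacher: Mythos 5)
Your proposal is correct and follows essentially the same route as the paper: the paper likewise bounds $\Pr[\rtrunc\in\badtrunc']$ and $\Pr[\raction\in\badaction'(\rtrunc)]$ by $\delta/4$ each via the same uniform-measure argument, and then invokes Lemma~\ref{yiyang2} together with the $(|S|H+1)\times(|S||A|H+1)$ counting (deferred there to ``similar reasoning as in the proof of Theorem~\ref{thm:full}'') to place $\pi$ in the instance-dependent list. Your write-up merely makes explicit the construction of $\Pi(M)$ from truncation patterns and $\raction$-intervals, which the paper leaves implicit.
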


\begin{proof}

First, we have \( \Pr[\rtrunc \in \badtrunc'] \leq \frac{5|S|H \epsilon_0}{\epsilon_1} <  \frac{\delta}{4} \). 
Moreover, for a fixed \(\rtrunc \notin \badtrunc'\), we have \( \Pr[\raction \in \badaction'(\rtrunc)] \leq \frac{5H^2 \epsilon_0* |S||A|H}{\epsilon_1} <  \frac{\delta}{4} \). Thus, with probability at least \( 1 - \frac{\delta}{2} \), it is satisfied that \(\raction \notin \badaction'(\rtrunc)\) and \(\rtrunc \notin \badtrunc'\) .

By Lemma \ref{yiyang2}, and applying similar reasoning as in the proof of Theorem~\ref{thm:full}, we conclude that conditioned on the event in Lemma~\ref{lem:main22}, with probability at least \( 1 - \frac{\delta}{2} \), the policy \(\pi\) belongs to the set \(\Pi(M)\), where \(\Pi(M)\) is a list of policies that depend only on the unknown underlying MDP \(M\). Moreover, the size of \(\Pi(M)\) is bounded by
$|\Pi(M)| \leq (H|S||A| + 1)(H|S| + 1)$.
\end{proof}
\begin{proof}[Proof of Theorem~\ref{thm:weak}]
The proof follows by combining Lemma~\ref{lem:main22}, Lemma~\ref{ggg} and Lemma~\ref{lemma hhh}    
\end{proof}

 \section{Perturbation Analysis in MDPs}
\label{app:perturb}
\begin{lemma}
\label{lemma M1M2}
    Consider two MDP \( M_1 \) and \( M_2 \) that are \(\epsilon_0\)-related.  Let \(P'\) and \(P''\) denote the transition models of \(M_1\) and \(M_2\), respectively. It holds that
    \[
    \left| V_{h,M_1}^*(s) - V_{h,M_2}^*(s) \right| \leq H^2 \epsilon_0,
    \]
    \[
    \left| Q_{h,M_1}^*(s,a) - Q_{h,M_2}^*(s,a) \right| \leq H^2 \epsilon_0,
    \]
    where \( H \) is the horizon length. 

    Specifically, for the value function at the initial state \( s_0 \), it holds that
    \[
    \left| V_{M_1}^* - V_{M_2}^*\right| \leq H^2 \epsilon_0.
    \]
\end{lemma}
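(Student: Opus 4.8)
The plan is to establish both bounds simultaneously by backward induction on the level $h$, using crucially that $M_1$ and $M_2$ share the same reward function so that only the transition kernels differ. Define $\Delta_h = \max_{s \in S} \left| V^*_{h,M_1}(s) - V^*_{h,M_2}(s) \right|$. Since all rewards lie in $[0,1]$ and the horizon has length $H$, every optimal value function satisfies the uniform bound $\left\| V^*_{h,M_i} \right\|_\infty \le H$; this bound is what ultimately produces the quadratic factor $H^2$. The base case is immediate: at $h = H$ both value functions vanish, so $\Delta_H = 0$.

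For the inductive step I would write the Bellman optimality equation $V^*_{h,M_i}(s) = \max_{a}\bigl[ R_h(s,a) + \sum_{s'} P^{M_i}_h(s' \mid s,a)\, V^*_{h+1,M_i}(s') \bigr]$ for $i \in \{1,2\}$, and use the $1$-Lipschitz property of the maximum, namely $\left| \max_a f(a) - \max_a g(a) \right| \le \max_a \left| f(a) - g(a) \right|$. Because the reward terms cancel, the difference reduces to $\max_a \bigl| \sum_{s'} \bigl( P^{M_1}_h(s'\mid s,a) V^*_{h+1,M_1}(s') - P^{M_2}_h(s'\mid s,a) V^*_{h+1,M_2}(s') \bigr) \bigr|$. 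The key algebraic move is to add and subtract $\sum_{s'} P^{M_2}_h(s' \mid s,a)\, V^*_{h+1,M_1}(s')$, splitting the expression into a kernel-difference term and a value-difference term. The first term is bounded by $\left\| V^*_{h+1,M_1} \right\|_\infty \sum_{s'} \left| P^{M_1}_h(s'\mid s,a) - P^{M_2}_h(s'\mid s,a) \right| \le H\epsilon_0$ using the definition of $\epsilon_0$-relatedness in~\eqref{equ:epsilon_related}, while the second is at most $\Delta_{h+1}$ since $P^{M_2}_h(\cdot \mid s,a)$ is a probability distribution. This gives the recursion $\Delta_h \le H\epsilon_0 + \Delta_{h+1}$, which unrolls from $\Delta_H = 0$ to $\Delta_h \le (H-h)\,H\epsilon_0 \le H^2\epsilon_0$.

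The $Q$-function bound then follows from the identical decomposition applied to $Q^*_{h,M_i}(s,a) = R_h(s,a) + \sum_{s'} P^{M_i}_h(s'\mid s,a)\, V^*_{h+1,M_i}(s')$, but without the outer maximum over actions, yielding $\left| Q^*_{h,M_1}(s,a) - Q^*_{h,M_2}(s,a) \right| \le H\epsilon_0 + \Delta_{h+1} \le H^2\epsilon_0$. The final claim for the initial state is simply the special case $h = 0$, $s = s_0$, so no separate argument is needed. I expect no real conceptual obstacle here: the only point requiring care is the two-term split via adding and subtracting a cross term, together with correctly tracking the $\left\| V \right\|_\infty \le H$ factor so that the bound comes out as $H^2\epsilon_0$ rather than $H\epsilon_0$, and handling the outer maximum through the Lipschitz property of $\max$ rather than bounding the two maximizing actions separately.
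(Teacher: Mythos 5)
Your proof is correct and follows essentially the same route as the paper's: a backward induction establishing the recursion $\Delta_h \le H\epsilon_0 + \Delta_{h+1}$ via the Lipschitz property of the max (the paper argues this step slightly more concretely through the two optimal policies) and the add-and-subtract cross-term split, with the only cosmetic difference being which kernel multiplies the value difference in the split. No gaps.
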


\begin{proof}
We denote \( \pi_1^* \) as the optimal policy of \( M_1 \) and \( \pi_2^* \) as the optimal policy of \( M_2 \).
For \( 0 \leq i \leq H-1 \), we have  
\begin{align*}
\left|V_{i,M_1}^{\pi_1^*}(s) - V_{i,M_2}^{\pi_2^*}(s)\right| 
&\overset{\hyperlink{ineq1}{(1)}}{\leq}\max_{a}\left|Q_{i,M_1}^{\pi_1^*}(s,a) - Q_{i,M_2}^{\pi_2^*}(s,a)\right| \\
&\leq \max_{a}\left(\left|\sum_{s'}{{P}'_i(s' \mid s,a) \cdot V_{i+1,M_1}^{\pi_1^*}(s')} - \sum_{s'}{{P}''_i(s' \mid s,a) \cdot V_{i+1,M_2}^{\pi_2^*}(s')}\right|\right) \\
&\leq \max_{a}\Bigg(
\left|\sum_{s'}{{P}'_i(s' \mid s,a) \cdot \left(V_{i+1,M_1}^{\pi_1^*}(s') - V_{i+1,M_2}^{\pi_2^*}(s')\right)}\right|\\
&\quad+\left|\sum_{s'}{\left({P}'_i(s' \mid s,a) - P_i''(s' \mid s,a)\right) \cdot V_{i+1,M_2}^{\pi_2^*}(s') }\right|
\Bigg) \\
&\overset{\hyperlink{ineq2}{(2)}}{\leq} H \epsilon_0 + \max_{s}\left| V_{i+1,M_1}^{\pi_1^*}(s) - V_{i+1,M_2}^{\pi_2^*}(s)\right|.
\end{align*}

\hypertarget{ineq1}{}\textbf{Inequality (1):} This follows from selecting \( a^* \) as the optimal action and \( \hat{a} \) as the action selected by the policy, which ensures \( Q_{i,M_2}^{\pi_1^*}(s,a) \leq Q_{i,M_2}^{\pi_2^*}(s,a) \).

\hypertarget{ineq2}{}\textbf{Inequality (2):} This holds because \( V_{i+1}^{\pi^*}(s') \leq H \), the total variation bound \( \sum_{s' \in S} |P_i'(s' \mid s,a) - P_i''(s' \mid s,a)| \leq \epsilon_0 \), and the fact that \( \sum_{s'} P_i'(s' \mid s,a) = 1 \).

At layer \( H \), it is given that \( V_{H,M_1}^{\pi_1^*} = V_{H,M_2}^{\pi_2^*} = 0 \).  
Applying the above inequality recursively, we obtain  
\[
\left| V_{i,M_1}^{\pi_1^*}(s) - V_{i,M_2}^{\pi_2^*}(s) \right| \leq H(H-i)\epsilon_0 \leq H^2 \epsilon_0,
\]
\[
\left| Q_{i,M_1}^{\pi_1^*}(s,a) - Q_{i,M_2}^{\pi_2^*}(s,a) \right| \leq H \epsilon_0 + \max_s \left| V_{i+1,M_1}^{\pi_1^*}(s) - V_{i+1,M_2}^{\pi_2^*}(s) \right| \leq H \epsilon_0 + H(H-1)\epsilon_0 \leq H^2 \epsilon_0.
\]

In particular, for the initial layer,
\[
\left| V_{M_1}^* - V_{M_2}^* \right| = \left| V_{0,M_1}^{\pi_1^*}(s_0) - V_{0,M_2}^{\pi_2^*}(s_0) \right| \leq H^2 \epsilon_0.
\]
\end{proof}

\begin{lemma}
\label{lemma M1M2pi}
Consider two MDP \( M_1 \) and \( M_2 \) that are \(\epsilon_0\)-related . Let \(P'\) and \(P''\) denote the transition models of \(M_1\) and \(M_2\), respectively. 
For any policy $\pi$, it holds that
\[
\left| V_{M_1}^{\pi} - V_{M_2}^{\pi} \right| \leq H^2  \epsilon_0,
\]  
where \(H\) is the horizon length.
\end{lemma}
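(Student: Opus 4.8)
The plan is to mirror the backward-induction argument already used for Lemma~\ref{lemma M1M2}, while exploiting the crucial simplification that here the policy $\pi$ is held fixed across both MDPs, so that no mismatch between distinct optimal policies can arise. Concretely, I would prove by downward induction on $h$ (from $h = H$ to $h = 0$) the slightly stronger pointwise statement
\[
\max_{s} \left| V^{\pi}_{h,M_1}(s) - V^{\pi}_{h,M_2}(s) \right| \le (H - h)\, H \epsilon_0 ,
\]
which at $h = 0$ and $s = s_0$ specializes to the claimed bound $H^2 \epsilon_0$, since $V^{\pi}_{M_i} = V^{\pi}_{0,M_i}(s_0)$.

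For the inductive step I would fix a state $s$ and set $a = \pi_h(s)$, the same action in both MDPs because $\pi$ is fixed and does not depend on the transition model. Since $M_1$ and $M_2$ are $\epsilon_0$-related they share the reward function, so the immediate reward $R_h(s,a)$ cancels in the difference, leaving
\[
V^{\pi}_{h,M_1}(s) - V^{\pi}_{h,M_2}(s) = \sum_{s'} P'_h(s' \mid s, a)\, V^{\pi}_{h+1,M_1}(s') - \sum_{s'} P''_h(s' \mid s, a)\, V^{\pi}_{h+1,M_2}(s') .
\]
Inserting and subtracting $\sum_{s'} P'_h(s' \mid s, a)\, V^{\pi}_{h+1,M_2}(s')$ splits the right-hand side into a value-difference term and a kernel-difference term. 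The first is bounded by $\max_{s'} |V^{\pi}_{h+1,M_1}(s') - V^{\pi}_{h+1,M_2}(s')|$ because $P'_h(\cdot \mid s, a)$ is a probability distribution summing to one; the second is bounded by $H \epsilon_0$ using $V^{\pi}_{h+1,M_2}(s') \le H$ together with the $\epsilon_0$-relatedness bound $\sum_{s'} |P'_h(s' \mid s, a) - P''_h(s' \mid s, a)| \le \epsilon_0$. This yields the recursion $\max_s |V^{\pi}_{h,M_1}(s) - V^{\pi}_{h,M_2}(s)| \le H\epsilon_0 + \max_s |V^{\pi}_{h+1,M_1}(s) - V^{\pi}_{h+1,M_2}(s)|$, with base case $V^{\pi}_{H,M_1} \equiv V^{\pi}_{H,M_2} \equiv 0$; unfolding it over the at most $H$ remaining levels completes the induction.

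I do not anticipate any genuine obstacle, since this statement is strictly easier than Lemma~\ref{lemma M1M2}. The only delicate point in that lemma was its inequality~(1), where one compared $Q$-values across two \emph{different} optimal policies $\pi_1^*$ and $\pi_2^*$ and had to invoke optimality; because our $\pi$ is identical in both MDPs, that step disappears and the decomposition above is an exact identity rather than a max over actions. The remaining work is purely the routine telescoping bookkeeping of the per-level error $H\epsilon_0$ across $H$ levels.
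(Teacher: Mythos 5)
Your proof is correct and follows essentially the same route as the paper's: a backward induction that adds and subtracts the cross term $\sum_{s'} P'_h(s'\mid s,a)\,V^{\pi}_{h+1,M_2}(s')$, bounds the kernel-difference term by $H\epsilon_0$ via the total-variation condition and $V^{\pi}_{h+1,M_2}\le H$, and telescopes the recursion from the base case $V^{\pi}_{H,\cdot}\equiv 0$. Your observation that the fixed policy removes the only delicate step of Lemma~\ref{lemma M1M2} matches the paper, which likewise replaces the optimality comparison there with the exact identity $V^{\pi}_{h,M_i}(s)=Q^{\pi}_{h,M_i}(s,\pi_h(s))$.
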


\begin{proof}
For \(0 \leq i \leq H-1\), we have  
\begin{align*}
\left|V_{i,M_1}^{\pi}(s) - V_{i,M_2}^{\pi}(s)\right| 
&=\left|Q_{i,M_1}^{\pi}(s,\pi_i(s) )- Q_{i,M_2}^{\pi}(s,\pi_i(s))\right| \\
&\leq \max_{a}\left(\left|\sum_{s'}{{P}'_i(s' \mid s,a) \cdot V_{i+1,M_1}^{\pi}(s')} - \sum_{s'}{{P}''_i(s' \mid s,a) \cdot V_{i+1,M_2}^{\pi}(s')}\right|\right) \\
&\leq \max_{a}\Bigg(
\left|\sum_{s'}{{P}'_i(s' \mid s,a) \cdot \left(V_{i+1,M_1}^{\pi}(s') - V_{i+1,M_2}^{\pi}(s')\right)}\right|\\
&\quad+\left|\sum_{s'}{\left({P}'_i(s' \mid s,a) - P_i''(s' \mid s,a)\right) \cdot V_{i+1,M_2}^{\pi}(s') }\right|
\Bigg) \\
&\overset{\hyperlink{ineq3}{(1)}}{\leq} H \epsilon_0 + \max_{s}\left| V_{i+1,M_1}^{\pi}(s) - V_{i+1,M_2}^{\pi}(s)\right|.
\end{align*}

\hypertarget{ineq3}{}\textbf{Inequality (1):} This holds because \(V^{\pi^*}_{i+1}(s') \leq H\), the total variation bound \(\sum_{s' \in S} \left| P_i'(s' \mid s, a) - {P}_i''(s' \mid s, a) \right| \leq \epsilon_0\), and the fact that \(\sum_{s'}{\hat{P}_i(s' \mid s,a)} = 1\).  

At layer \(H\), it is given that \(V_{H,M_1}^{\pi} = V_{H,M_2}^{\pi} = 0\).  

Applying the above inequality recursively, we obtain  
\[
\left|V_{i,M_1}^{\pi}(s) - V_{i,M_2}^{\pi}(s) \right| \leq  H^2  \epsilon_0,
\]  

In particular, for the initial layer,  
\[
\left|V_{0,M_1}^{\pi}(s_0) - V_{0,M_2}^{\pi}(s_0) \right|\leq  H^2  \epsilon_0,
\]  
\end{proof}
\begin{lemma}\label{lemma:MMr}
For any policy \( \pi \), we have
\[
0 \leq V_{M}^{\pi}- V_{M^r}^{\pi} \leq H^2 |S| r,
\]
where \(\truncM{r}\)  is defined as in Definition~\ref{def:truncM}  and \(|S|\) is the size of the state space.  
\end{lemma}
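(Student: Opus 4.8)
The plan is to compare the trajectories of $\pi$ in $M$ and in $\truncM{r}$ through a coupling, exploiting that $\truncM{r}$ behaves exactly like $M$ except that a trajectory is ``killed'' (sent to the absorbing state $\absorbs$, where we adopt the natural convention $R_h(\absorbs,\cdot)=0$) the first time it enters an unreachable state. Concretely, I would run $\pi$ in both MDPs with the same source of randomness; since $\truncP{r}_h(\cdot\mid s,a)=P_h(\cdot\mid s,a)$ whenever $s\notin\unreach{h}{r}$, the two trajectories coincide until the first level $\tau$ at which $s_\tau\in\unreach{\tau}{r}$ (set $\tau=H$ if this never happens). For all $h<\tau$ and at $h=\tau$ the two processes occupy the same state and collect identical rewards (the redirection only affects the transition \emph{out} of $s_\tau$), while for $h>\tau$ the $\truncM{r}$ trajectory sits in $\absorbs$ and collects $0$, whereas the $M$ trajectory collects nonnegative rewards. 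This immediately yields the lower bound $V^{\pi}_M-V^{\pi}_{\truncM{r}}\ge 0$.

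For the upper bound, the same coupling gives
\[
V^{\pi}_M-V^{\pi}_{\truncM{r}}=\mathbb{E}\!\left[\mathbbm{1}[\tau<H]\sum_{h=\tau+1}^{H-1}R_h(s_h,a_h)\right]\le H\cdot\Pr[\tau<H],
\]
since the only lost rewards are those accrued strictly after the kill level, of which there are at most $H$, each bounded by $1$. It therefore suffices to bound the kill probability $\Pr[\tau<H]=\sum_{h\in[H]}\Pr[\tau=h]$.

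The crux is to control $\Pr[\tau=h]$ using the definition of $\unreach{h}{r}$. By construction, $\{\tau=h\}$ is exactly the event that $s_0\notin\unreach{0}{r},\dots,s_{h-1}\notin\unreach{h-1}{r}$ and $s_h\in\unreach{h}{r}$, so Lemma~\ref{lem:occupancy_unreach} gives $\Pr[\tau=h]=\sum_{s\in\unreach{h}{r}}d^{\pi}_{\truncM{r}}(s,h)$. For each $s\in\unreach{h}{r}$, combining $d^{\pi}_{\truncM{r}}(s,h)\le d^{*}_{\truncM{r}}(s,h)$ with the equivalence in Lemma~\ref{lem:occupancy_unreach} (namely $s\in\unreach{h}{r}\iff d^{*}_{\truncM{r}}(s,h)\le r$) yields $d^{\pi}_{\truncM{r}}(s,h)\le r$, whence $\Pr[\tau=h]\le|\unreach{h}{r}|\,r\le|S|\,r$. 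Summing over the $H$ levels gives $\Pr[\tau<H]\le H|S|r$, and substituting into the display above produces $V^{\pi}_M-V^{\pi}_{\truncM{r}}\le H^2|S|r$. I expect the only delicate point to be making the coupling precise so that the ``first-entry'' decomposition is valid and the surviving-mass identity $\Pr[\tau=h]=\sum_{s\in\unreach{h}{r}}d^{\pi}_{\truncM{r}}(s,h)$ holds exactly; once this identity is in place, the remaining steps are direct applications of the reachability definitions and the $[0,1]$ reward bound.
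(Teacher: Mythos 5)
Your proof is correct, but it takes a genuinely different route from the paper's. The paper argues level by level: it expands $\sum_{s_h} d^{\pi}_{\truncM{r}}(s_h,h)\bigl(V^{\pi}_{h,M}(s_h)-V^{\pi}_{h,\truncM{r}}(s_h)\bigr)$, splits off the states in $\unreach{h}{r}$ (each contributing at most $r\cdot H$ by the occupancy bound, for a per-level loss of $|S|rH$), and shows the remaining sum telescopes into the same quantity at level $h+1$; iterating over $H$ levels gives $H^2|S|r$. You instead couple the two trajectories, localize the entire discrepancy at the first-entry time $\tau$ into an unreachable state, and bound $\Pr[\tau<H]\le H|S|r$ using the identity $\Pr[\tau=h]=\sum_{s\in\unreach{h}{r}}d^{\pi}_{\truncM{r}}(s,h)$ together with $d^{\pi}_{\truncM{r}}(s,h)\le d^{*}_{\truncM{r}}(s,h)\le r$ from Lemma~\ref{lem:occupancy_unreach}. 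Both arguments ultimately rest on the same key fact (truncated states have $\truncM{r}$-occupancy at most $r$), but your probabilistic presentation is arguably cleaner: the lower bound $V^{\pi}_M-V^{\pi}_{\truncM{r}}\ge 0$ falls out immediately from the coupling rather than being asserted, and you avoid the paper's slightly imprecise claim that $V^{\pi}_{h,\truncM{r}}(s_h)=0$ for $s_h\in\unreach{h}{r}$ (the level-$h$ reward is still collected there; this is harmless for the paper's upper bound but your handling — identical rewards up to and including level $\tau$, losses only strictly after — is the more careful accounting). The one point you flag as delicate, the validity of the first-entry decomposition, is indeed fine because $\truncP{r}_h(\cdot\mid s,a)$ agrees exactly with $P_h(\cdot\mid s,a)$ on states $s\notin\unreach{h}{r}$, so the synchronous coupling is well defined up to time $\tau$.
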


\begin{proof}
Clearly, \( V_{M}^{\pi}- V_{M^r}^{\pi} \geq 0 \).

We observe that for any \( h \) and \( s_h \in S \), the following holds:
\begin{align*}
& \quad \sum_{s_h \in S} d_{M^r}^{\pi}(s_h, h) \left( V_{h,M}^{\pi}(s_h) - V_{h,M^r}^{\pi}(s_h) \right) \\
& \overset{\hyperlink{MMr1}{(1)}}{=} \sum_{s_h \in U_h(r)} d_{M^r}^{\pi}(s_h, h) V_{h,M}^{\pi}(s_h) + \sum_{s_h \notin U_h(r)} d_{M^r}^{\pi}(s_h, h) \left( V_{h,M}^{\pi}(s_h) - V_{h,M^r}^{\pi}(s_h) \right) \\
& \overset{\hyperlink{MMr2}{(2)}}{\leq} |S| \cdot r \cdot H + \sum_{s_h \notin U_h(r)} d_{M^r}^{\pi}(s_h, h) \left( V_{h,M}^{\pi}(s_h) - V_{h,M^r}^{\pi}(s_h) \right) \\
&  \overset{\hyperlink{MMr3}{(3)}}{=}  |S| \cdot r \cdot H + \sum_{s_h \notin U_h(r)} d_{M^r}^{\pi}(s_0, h) \left( r_h(s_h, \pi(s_h)) + \sum_{s_{h+1} \in S} P_h(s_{h+1} | s_h, \pi(s_h)) V_{h+1,M}^{\pi}(s_{h+1}) \right. \\
& \quad \quad \quad \quad \quad \quad \quad \quad \quad \left. - r_h(s_h, \pi(s_h)) - \sum_{s_{h+1} \in S} P_h(s_{h+1} | s_h, \pi(s_h)) V_{h+1,M^r}^{\pi}(s_{h+1}) \right) \\
& = |S| \cdot r \cdot H + \sum_{s_h \notin U_h(r)} d_{M^r}^{\pi}(s_{h+1}, h+1) \left( V_{h+1,M}^{\pi}(s_{h+1}) - V_{h+1,M^r}^{\pi}(s_{h+1}) \right) \\
&  \overset{\hyperlink{MMr4}{(4)}}{=}  |S| \cdot r \cdot H + \sum_{s_{h+1} \in S} d_{M^r}^{\pi}(s_{h+1}, h+1) \left( V_{h+1,M}^{\pi}(s_{h+1}) - V_{h+1,M^r}^{\pi}(s_{h+1}) \right)
\end{align*}

\begin{itemize}
    \item \textbf{Step \hypertarget{MMr1}{(1)}:} The first equality arises because for all \( s_h \in U_h(r) \), the value function \( V_{h,M^r}^{\pi}(s_h) = 0 \).
    
    \item \textbf{Step \hypertarget{MMr2}{(2)}:} The inequality follows from the definition of \( d_{M^r}^{\pi}(s_h, h) \leq r \) and the fact that \( V_{h,M}^{\pi}(s_h) \leq H \). This ensures that the first term in the sum is bounded by \( |S| \cdot r \cdot H \).
    
    \item \textbf{Step \hypertarget{MMr3}{(3)}:} The equality holds because for all \( s_h \notin U_h(r) \), the transition probability \( P_h(s_{h+1} | s_h, \pi(s_h)) \) under the original model \( M \) is identical to that under the modified model \( M^r \), i.e., \( P_h(s_{h+1} | s_h, \pi(s_h)) = P_h^r(s_{h+1} | s_h, \pi(s_h)) \). Thus, the only difference in the value functions is the difference in the values at the next time step.
    
    \item \textbf{Step \hypertarget{MMr4}{(4)}:} The final equality follows from interchanging the order of summation, allowing us to express the sum over \( s_h \) as a sum over \( s_{h+1} \).
\end{itemize}

Next, we observe that
\[
V_{0,M}^{\pi}(s_0) - V_{0,M^r}^{\pi}(s_0) 
\overset{\hyperlink{MMr5}{(5)}}{=} \sum_{s_1 \in S} d_{M^r}^{\pi}(s_1, 1) \left( V_{1,M}^{\pi}(s_1) - V_{1,M^r}^{\pi}(s_1) \right),
\]
where \textbf{Step \hypertarget{MMr5}{(5)}:} holds because \( s_0 \) is the fixed initial state, and by definition, \( d_{M^r}^{\pi}(s_1, 1) = d_{M}^{\pi}(s_1, 1) = P_0(s_1|s_0,\pi(s_0)) \).

By recursively applying the same reasoning for each time step \( h \), we obtain the following upper bound:
\[
V_{0,M}^{\pi}(s_0) - V_{0,M^r}^{\pi}(s_0) \leq |S| \cdot r \cdot H^2.
\]

Thus, we conclude that
\[
0 \leq V_{M}^{\pi} - V_{M^r}^{\pi} \leq H^2 |S| r.
\]
\end{proof}

\begin{lemma}\label{lemma:MM'r}
For any policy \( \pi \), we have
\[ 
0 \leq V_{{M}}^{\pi} - V_{\overline{M}^{r}}^{\pi} \leq H^2 |S| r,
\]
where \(\overline{M}^r\)  is defined as in Definition~\ref{def:truncM2}  and \(|S|\) is the size of the state space.

\end{lemma}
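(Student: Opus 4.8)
The plan is to reuse the argument of Lemma~\ref{lemma:MMr} essentially verbatim, since $\truncMm{r}$ (Definition~\ref{def:truncM2}) is structurally identical to $\truncM{r}$ (Definition~\ref{def:truncM}): both redirect a designated set of states at each level to the absorbing state $\absorbs$ and leave the transitions of the remaining states untouched. The only change is that the truncated set at level $h$ is now $\tnreach{h}{r}$ (the \emph{unconditional} low-reach set, where $s\in\tnreach{h}{r}$ iff $\max_\pi \Pr[s_h=s\mid M,\pi]=d^*_M(s,h)\le r$) rather than $\unreach{h}{r}$. First I would record the nonnegativity $V^{\pi}_{M}-V^{\pi}_{\truncMm{r}}\ge 0$, which holds because rewards are nonnegative and truncation only replaces the tail of a trajectory by a zero-value absorbing branch; a trajectory coupling between $M$ and $\truncMm{r}$ makes this precise.

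For the upper bound, the single ingredient I need in place of Lemma~\ref{lem:occupancy_unreach} is the occupancy bound $d^{\pi}_{\truncMm{r}}(s,h)\le r$ for every $s\in\tnreach{h}{r}$ and every policy $\pi$. I would establish this directly: since $\truncMm{r}$ differs from $M$ only by sending some states into the fresh absorbing state $\absorbs\notin S$, reaching $s\in S$ at level $h$ in $\truncMm{r}$ requires the trajectory to have avoided all truncated states at earlier levels, so $d^{\pi}_{\truncMm{r}}(s,h)\le d^{\pi}_{M}(s,h)\le d^*_M(s,h)\le r$ by the definition of $\tnreach{h}{r}$ in Definition~\ref{def:unreach2}.

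With this bound the rest follows the five steps of Lemma~\ref{lemma:MMr}. At each level $h$ I would split $\sum_{s}d^{\pi}_{\truncMm{r}}(s,h)\bigl(V^{\pi}_{h,M}(s)-V^{\pi}_{h,\truncMm{r}}(s)\bigr)$ into the truncated part $s\in\tnreach{h}{r}$ and the untruncated part. On the truncated part $V^{\pi}_{h,\truncMm{r}}(s)=0$, and using the occupancy bound together with $V^{\pi}_{h,M}(s)\le H$ this contributes at most $|S|\cdot r\cdot H$. On the untruncated part the transition kernels of $M$ and $\truncMm{r}$ coincide, so the rewards cancel and the difference telescopes into the level-$(h+1)$ occupancy-weighted value gap; reindexing the sum over $s_{h+1}$ and noting that the dropped occupancy mass is nonnegative lets me enlarge the sum back to all of $S$. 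Unrolling this recursion from the fixed initial state across all $H$ levels accumulates at most $H$ copies of the per-level bound $|S|\cdot r\cdot H$, yielding $V^{\pi}_{M}-V^{\pi}_{\truncMm{r}}\le H^2|S|r$.

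The main obstacle, and really the only nontrivial point, is justifying the occupancy bound $d^{\pi}_{\truncMm{r}}(s,h)\le r$ without an exact $T$-analogue of Lemma~\ref{lem:occupancy_unreach}. In fact this is milder than in the $U$-case: because $\tnreach{h}{r}$ is defined through the untruncated reaching probability $d^*_M(s,h)$, I can compare $\truncMm{r}$ directly against $M$ rather than reasoning about reach probabilities that are themselves defined conditionally on avoiding earlier truncations. Everything downstream is the same bookkeeping as in Lemma~\ref{lemma:MMr}.
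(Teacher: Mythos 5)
Your proposal is correct and follows essentially the same route as the paper: the paper's proof of Lemma~\ref{lemma:MM'r} is a verbatim adaptation of the Lemma~\ref{lemma:MMr} argument, with the same level-by-level occupancy-weighted decomposition and with the truncated-part contribution bounded exactly as you describe, via $d^{\pi}_{\truncMm{r}}(s,h)\le \max_{\pi}\Pr[s_h=s\mid M,\pi]\le r$ for $s\in\tnreach{h}{r}$. Your observation that this occupancy bound is actually easier here than in the $U$-case (since $\tnreach{h}{r}$ is defined through unconditional reach probabilities) matches the paper's treatment.
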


\begin{proof}

Clearly, \( V_{{M}}^{\pi} - V_{\overline{M}^r}^{\pi} \geq 0 \).

By the similar analysis as above,
we observe that for any \( h \) and \( s_h \in S \), the following holds:
\begin{align*}
& \quad \sum_{s_h \in S} d_{{M}^r}^{\pi}(s_h, h) \left( V_{h,{M}}^{\pi}(s_h) - V_{h,\overline{M}^r}^{\pi}(s_h) \right) \\ 
& \overset{}{=} \sum_{s_h \in T_h(r)} d_{\overline{M}^r}^{\pi}(s_h, h) V_{h,{M}}^{\pi}(s_h) + \sum_{s_h \notin T_h(r)} d_{\overline{M}^r}^{\pi}(s_h, h) \left( V_{h,{M}}^{\pi}(s_h) - V_{h,\overline{M}^r}^{\pi}(s_h) \right) \\ 
& \overset{\hyperlink{MMr'1}{(1)}}{\leq} |S| \cdot r \cdot H + \sum_{s_h \notin T_h(r)} d_{\overline{M}^r}^{\pi}(s_h, h) \left( V_{h,{M}}^{\pi}(s_h) - V_{h,\overline{M}^r}^{\pi}(s_h) \right) \\ 
&  \overset{}{=}  |S| \cdot r \cdot H + \sum_{s_h \notin T_h(r)} d_{\overline{M}^r}^{\pi}(s_0, h) \left( r_h(s_h, \pi(s_h)) + \sum_{s_{h+1} \in S} P_h(s_{h+1} | s_h, \pi(s_h)) V_{h+1,{M}}^{\pi}(s_{h+1}) \right. \\ 
& \quad \quad \quad \quad \quad \quad \quad \quad \quad \left. - r_h(s_h, \pi(s_h)) - \sum_{s_{h+1} \in S} P_h(s_{h+1} | s_h, \pi(s_h)) V_{h+1,\overline{M}^r}^{\pi}(s_{h+1}) \right) \\ 
& = |S| \cdot r \cdot H + \sum_{s_{h+1} \in S} d_{\overline{M}^r}^{\pi}(s_{h+1}, h+1) \left( V_{h+1,{M}}^{\pi}(s_{h+1}) - V_{h+1,\overline{M}^r}^{\pi}(s_{h+1}) \right)
\end{align*}

\begin{itemize}

    \item \textbf{Step \hypertarget{MMr'1}{(1)}:} The inequality follows from the definition of \( d_{\overline{M}^r}^{\pi}(s_h, h) \leq \max_{\pi} \Pr[s_h = s \mid M, \pi] \le r \) and the fact that \( V_{h,{M}}^{\pi}(s_h) \leq H \). This ensures that the first term in the sum is bounded by \( |S| \cdot r \cdot H \).
  
\end{itemize}

Next, we observe that
\[ 
V_{0,{M}}^{\pi}(s_0) - V_{0,\overline{M}^r}^{\pi}(s_0) 
\overset{}{=} \sum_{s_1 \in S} d_{\overline{M}^r}^{\pi}(s_1, 1) \left( V_{1,{M}}^{\pi}(s_1) - V_{1,\overline{M}^r}^{\pi}(s_1) \right),
\]

By recursively applying the same reasoning for each time step \( h \), we obtain the following upper bound:
\[ 
V_{0,{M}}^{\pi}(s_0) - V_{0,\overline{M}^r}^{\pi}(s_0) \leq |S| \cdot r \cdot H^2.
\]

Thus, we conclude that
\[ 
0 \leq V_{{M}}^{\pi} - V_{\overline{M}^r}^{\pi} \leq H^2 |S| r.
\]

\end{proof}
\section{Hardness Result}\label{sec:hardness}

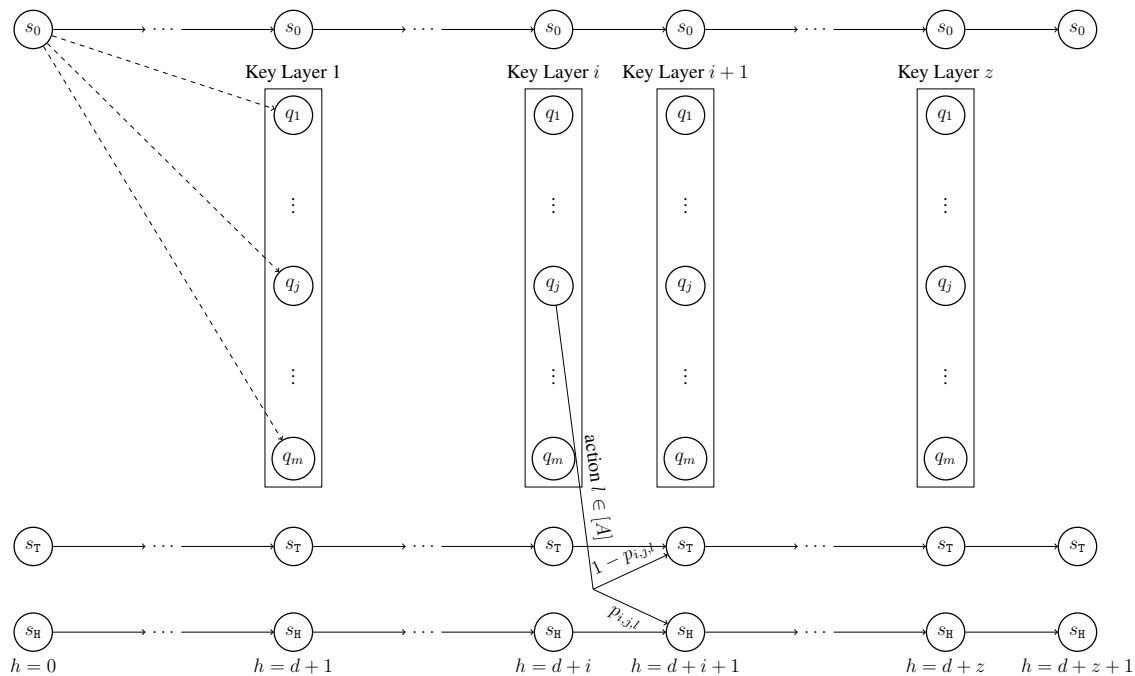
\begin{figure}[ht]
\centering
\resizebox{\textwidth}{!}{
\begin{tikzpicture}
\tikzstyle{round}=[thick,draw=black,circle,minimum size=0.8cm]
\tikzstyle{selectionedge}=[->, color=red]
\tikzset{node distance=1cm and 2cm}

\node[round] (i0) {$s_0$};
\node[right = of i0] (idots) {$\ldots$};
\node[round, right = of idots] (i1) {$s_0$};
\node[right = of i1] (iidots) {$\ldots$};
\node[round, right = of iidots] (i2) {$s_0$};
\node[round, right = of i2] (i3) {$s_0$};
\node[right = of i3] (iiidots) {$\ldots$};
\node[round, right = of iiidots] (i4) {$s_0$};
\node[round, right = of i4] (i5) {$s_0$};
\draw[->] (i0) -- (idots);
\draw[->] (idots) -- (i1);
\draw[->] (i1) -- (iidots);
\draw[->] (iidots) -- (i2);
\draw[->] (i2) -- (i3);
\draw[->] (i3) -- (iiidots);
\draw[->] (iiidots) -- (i4);
\draw[->] (i4) -- (i5);

\node[round, below = of i1] (g1) {$q_1$};
\node[below = of g1] (udots1) {$\vdots$};
\node[round, below = of udots1] (h1) {$q_j$};
\node[below = of h1] (vdots1) {$\vdots$};
\node[round, below = of vdots1] (j1) {$q_{m}$};
\node[fit=(g1)(j1), draw, label=Key Layer 1] {};

\node[round, below = of i2] (g2) {$q_1$};
\node[below = of g2] (udots2) {$\vdots$};
\node[round, below = of udots2] (h2) {$q_j$};
\node[below = of h2] (vdots2) {$\vdots$};
\node[round, below = of vdots2] (j2) {$q_{m}$};
\node[fit=(g2)(j2), draw, label=Key Layer $i$] {};

\node[round, below = of i3] (g3) {$q_1$};
\node[below = of g3] (udots3) {$\vdots$};
\node[round, below = of udots3] (h3) {$q_j$};
\node[below = of h3] (vdots3) {$\vdots$};
\node[round, below = of vdots3] (j3) {$q_{m}$};
\node[fit=(g3)(j3), draw, label=Key Layer $i+1$] {};

\node[round, below = of i4] (g4) {$q_1$};
\node[below = of g4] (udots4) {$\vdots$};
\node[round, below = of udots4] (h4) {$q_j$};
\node[below = of h4] (vdots4) {$\vdots$};
\node[round, below = of vdots4] (j4) {$q_{m}$};
\node[fit=(g4)(j4), draw, label=Key Layer $z$] {};

\node[round, below = of j1] (T1) {$s_\Tail$};
\node[right = of T1] (TTdots) {$\ldots$};
\node[round, right = of TTdots] (T2) {$s_\Tail$};
\node[round, right = of T2] (T3) {$s_\Tail$};
\node[right = of T3] (TTTdots) {$\ldots$};
\node[round, right = of TTTdots] (T4) {$s_\Tail$};
\node[round, right = of T4] (T5) {$s_\Tail$};

\node[left = of T1] (Tdots) {$\ldots$};
\node[round, left = of Tdots] (T0) {$s_\Tail$};

\draw[->] (T0) -- (Tdots);
\draw[->] (Tdots) -- (T1);
\draw[->] (T1) -- (TTdots);
\draw[->] (TTdots) -- (T2);
\draw[->] (T2) -- (T3);
\draw[->] (T3) -- (TTTdots);
\draw[->] (TTTdots) -- (T4);
\draw[->] (T4) -- (T5);

\node[round, below = of T0, label={below:$h=0$}] (H0) {$s_\Head$};
\node[right = of H0] (Hdots) {$\ldots$};
\node[round, right = of Hdots, label={below:$h=d+1$}] (H1) {$s_\Head$};
\node[right = of H1] (HHdots) {$\ldots$};
\node[round, right = of HHdots, label={below:$h=d+i$}] (H2) {$s_\Head$};
\node[round, right = of H2, label={below:$h=d+i+1$}] (H3) {$s_\Head$};
\node[right = of H3] (HHHdots) {$\ldots$};
\node[round, right = of HHHdots, label={below:$h=d+z$}] (H4) {$s_\Head$};
\node[round, right = of H4, label={below:$h=d+z+1$}] (H5) {$s_\Head$};
\draw[->] (H0) -- (Hdots);
\draw[->] (Hdots) -- (H1);
\draw[->] (H1) -- (HHdots);
\draw[->] (HHdots) -- (H2);
\draw[->] (H2) -- (H3);
\draw[->] (H3) -- (HHHdots);
\draw[->] (HHHdots) -- (H4);
\draw[->] (H4) -- (H5);

\coordinate (m) at ($0.35*(T2)+0.35*(H2)+0.15*(T3)+0.15*(H3)$);
\draw[-] (h2) -- (m) node[midway, sloped, above, pos=0.65] {action $l\in[A]$};
\draw[->] (m) -- (T3) node[midway, sloped, above] {$1-p_{i,j,l}$};
\draw[->] (m) -- (H3) node[midway, sloped, below] {$p_{i,j,l}$};

\draw[->,dashed] (i0) -- (g1);
\draw[->,dashed] (i0) -- (h1);
\draw[->,dashed] (i0) -- (j1);
\end{tikzpicture}
}
\caption{MDP to solve $\BestArm$. \label{fig:ok}}
\label{fig:MDPlowerbound}
\end{figure}
%\begin{lemma}[\citep{chen2025regret}]\label{lem:bestarmlowerbound}
%Let $\epsilon \leq \frac{1}{2k}$ and $\delta \leq \frac{1}{k+1}$. There is no $(k-1)$-list replicable algorithm for $(k, \epsilon, \delta)$-$\BestArm$. Even when each arm is restricted to Bernoulli distribution. Also even when unbounded number of samples.
%\end{lemma}

%\begin{theorem}\label{thm:hardness}
%If there is a weakly $(\ell,\delta)$-list replicable RL algorithm that interacts with an MDP $M$ with $S$ states, $A$ actions, horizon $H$ and such that every policy in $\Pi(M)$ is $\epsilon$-optimal where $\epsilon \leq\frac{1}{2SAH}$ and $\delta\leq\frac{1}{SAH+1}$ then $\ell\geq\frac{SA(H-\lceil\log_A S\rceil-3)}{3}$
%\end{theorem}

%Assume the the $k$ distributions are indexed as triplets, i.e. $D=(D_{i,j,l})$ where $i\in[H]$, $j\in[S]$,  $l\in[A]$ and let $p_{i,j,l}$ be the Bernoulli parameter for $D_{i,j,l}$.
\begin{definition}[\textsc{BestArm} Problem]
Consider a $k$-armed bandit problem.
Let $k$ be the number of arms, and fix parameters $\epsilon > 0$ and $\delta \in (0,1)$. The $(k, \epsilon, \delta)$-\textsc{BestArm} problem is defined as follows:
given access to $k$ arms, each associated with an unknown distribution (e.g., Bernoulli), the goal for an algorithm is to identify an arm whose mean reward is within $\epsilon$ of the best arm’s mean, with probability at least $1 - \delta$.
\end{definition}

\begin{lemma}[\citep{chen2025regret}]\label{lem:bestarmlowerbound}
Consider a $k$-armed bandit problem. Let $\epsilon \leq \frac{1}{2k}$ and $\delta \leq \frac{1}{k+1}$. Then, there exists no $(k - 1)$-list replicable algorithm for the $(k, \epsilon, \delta)$-\textsc{BestArm} problem, even when each arm follows a Bernoulli distribution and an unbounded number of samples is allowed.
\end{lemma}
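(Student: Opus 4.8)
The plan is to prove the contrapositive: for \emph{any} candidate algorithm $\mathcal{A}$ I will exhibit a single Bernoulli instance on which $\mathcal{A}$'s answer is so spread out that no list of $k-1$ arms captures it with probability $1-\delta$. Since unbounded samples let $\mathcal{A}$ estimate the arm means to arbitrary precision, its output distribution $\Phi_i(I)=\Pr[\mathcal{A}(I)=i]$ is (as discussed at the end) a continuous function of the instance's Bernoulli parameters, so I view $\Phi$ as a continuous map into the output simplex $\Delta^{k-1}=\{\mathbf{p}:p_i\ge 0,\ \sum_i p_i=1\}$. I then build a $(k-1)$-parameter family of instances indexed by the simplex and run a topological no-retraction argument to force $\Phi$ to hit the barycenter.

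Concretely, for $\mathbf{x}\in\Delta^{k-1}$ let instance $I_{\mathbf{x}}$ have Bernoulli arm $i$ with mean $\mu_i(\mathbf{x})=\tfrac12-k\epsilon+2k\epsilon\,x_i$; the hypothesis $\epsilon\le\tfrac{1}{2k}$ is exactly what keeps every mean in $[0,1]$. At a vertex $e_i$ arm $i$ leads the rest by $2k\epsilon>\epsilon$, so it is the unique $\epsilon$-optimal arm and correctness of the \BestArm{} algorithm forces $\Phi_i(e_i)\ge 1-\delta$. On the opposite face $F_i=\{x_i=0\}$ arm $i$ attains the minimal mean and trails the best by at least $2k\epsilon/(k-1)>2\epsilon$, hence it is never $\epsilon$-optimal there; correctness then forces $\Phi_i(\mathbf{x})\le\delta$ for every $\mathbf{x}\in F_i$. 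These two facts completely control $\Phi$ on the boundary $\partial\Delta^{k-1}$.

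With these boundary estimates I claim $\Phi$ must hit the barycenter $\mathbf{b}=(\tfrac1k,\dots,\tfrac1k)$. Suppose not; compose $\Phi$ with the radial retraction $\rho:\Delta^{k-1}\setminus\{\mathbf{b}\}\to\partial\Delta^{k-1}$ and consider the straight-line homotopy $H_t(\mathbf{x})=\rho\bigl((1-t)\mathbf{x}+t\,\Phi(\mathbf{x})\bigr)$ on $\partial\Delta^{k-1}$. For $\mathbf{x}\in F_i$ the $i$-th coordinate of $(1-t)\mathbf{x}+t\,\Phi(\mathbf{x})$ equals $t\,\Phi_i(\mathbf{x})\le\delta<\tfrac1k=\mathbf{b}_i$, so the segment never meets $\mathbf{b}$ and $H_t$ is well defined; thus $\rho\circ\Phi|_{\partial}$ is homotopic to the identity, making $\rho\circ\Phi$ a retraction of the ball $\Delta^{k-1}$ onto its boundary sphere $\partial\Delta^{k-1}$, which is impossible. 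Hence some $\mathbf{x}^\star$ satisfies $\Phi(\mathbf{x}^\star)=\mathbf{b}$, i.e.\ $\mathcal{A}$ outputs every arm with probability exactly $1/k$ on $I_{\mathbf{x}^\star}$. Because $\delta\le\tfrac{1}{k+1}<\tfrac1k$, deleting any single arm from a list leaves probability $\tfrac{k-1}{k}<1-\delta$, so no $(k-1)$-element list covers the output with probability $1-\delta$, contradicting $(k-1)$-list replicability.

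The main obstacle is justifying that $\Phi$ is continuous on this family when $\mathcal{A}$ may use unbounded and adaptive samples: for a fixed finite budget the output probabilities are polynomials in the means, but for an adaptive stopping rule one must argue, via dominated convergence and the fact that the histories leading to each output have total probability one and vary continuously with the Bernoulli parameters, that $\mathbf{x}\mapsto\Phi(\mathbf{x})$ is still continuous. Once continuity and the boundary bounds $\Phi_i|_{F_i}\le\delta$ are secured, the degree/no-retraction step is routine.
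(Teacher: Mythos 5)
The paper does not actually prove this lemma: it is imported verbatim from \citet{chen2025regret} and used as a black box in the proof of Theorem~\ref{thm:hardness}, so there is no in-paper argument to compare yours against. Judged on its own, your topological argument is the right kind of proof and its core is sound: the family $\mu_i(\mathbf{x})=\tfrac12-k\epsilon+2k\epsilon x_i$ is well defined precisely because $\epsilon\le\tfrac1{2k}$, the face bound $\Phi_i|_{F_i}\le\delta<\tfrac1k$ follows correctly from correctness of the algorithm (the gap on $F_i$ is at least $2k\epsilon/(k-1)>\epsilon$), the homotopy $H_t$ is well defined because the $i$-th coordinate of the segment stays at $t\Phi_i(\mathbf{x})\le\delta<\tfrac1k$, and the final counting step needs exactly $\delta<\tfrac1k$, which $\delta\le\tfrac1{k+1}$ supplies. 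Two cosmetic points: the vertex condition $\Phi_i(e_i)\ge 1-\delta$ is never used (only the face bounds enter the no-retraction argument), and $\rho\circ\Phi$ is not literally a retraction --- the correct statement is that it is a map $\Delta^{k-1}\to\partial\Delta^{k-1}$ whose boundary restriction is homotopic to the identity, which is already impossible.

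The one genuine gap is the continuity of $\Phi$, which you flag but do not close. You should (i) state explicitly that a valid \BestArm{} algorithm terminates almost surely on every instance, so that $\sum_i\Phi_i\equiv 1$ and $\Phi$ really maps into $\Delta^{k-1}$ (otherwise the barycenter argument breaks), and (ii) replace the appeal to dominated convergence by the following: for each $N$, the probability of terminating within $N$ samples is a finite sum of monomials in the $\mu_j,1-\mu_j$ (averaged over internal randomness), hence continuous, and these functions increase pointwise to the constant $1$ on the compact family $\{I_{\mathbf{x}}\}$; Dini's theorem gives uniform convergence, so each $\Phi_i$ is a uniform limit of continuous partial sums and is therefore continuous. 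With that lemma in place your argument is complete; without it, an adversarially constructed stopping rule could in principle make $\Phi$ discontinuous and the no-retraction step would not apply.
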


\begin{theorem}\label{thm:hardness}
Suppose there exists a weakly $\ell$-list replicable RL algorithm that interacts with an MDP $M$ with state space $S$, action space $A$, and horizon length $H$, such that there is a list of policies $\Pi(M)$ with cardinality at most $\ell$ that depend only on $M$, 
so that with probability at least $1 - \delta$, $\pi$ is $\epsilon$-optimal and $\pi \in \Pi(M)$, where $\pi$ is the near-optimal policy returned by the algorithm when interacting with $M$. Suppose $\epsilon \leq \frac{1}{2|S||A|H}$ and $\delta \leq \frac{1}{|S||A|H + 1}$. Then it must hold that
\[
\ell \geq \frac{|S||A|\left(H - \lceil \log_{|A|} |S| \rceil - 3\right)}{3}.
\]
\end{theorem}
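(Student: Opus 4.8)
The plan is to prove Theorem~\ref{thm:hardness} by reducing the $(k,\epsilon,\delta)$-\BestArm problem to list-replicable RL and then invoking the list lower bound of Lemma~\ref{lem:bestarmlowerbound}. Given any weakly $\ell$-list replicable RL algorithm $\mathbb{B}$ for MDPs with $|S|$ states, $|A|$ actions and horizon $H$, I would encode an arbitrary $k$-armed \BestArm instance $\mathcal{I}$ into one MDP $M(\mathcal{I})$ of the shape in Figure~\ref{fig:MDPlowerbound}, run $\mathbb{B}$ on $M(\mathcal{I})$, and read a single arm off the returned policy. The key point is that $M(\mathcal{I})$ can host $k=\Theta(|S||A|H)$ arms, so the single-instance \BestArm lower bound will force $\ell$ to be this large.

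\textbf{The gadget and its value.} The MDP has two absorbing sinks $s_\Head$ and $s_\Tail$, where reaching $s_\Head$ yields total reward $1$ and $s_\Tail$ yields $0$. From $s_0$, a deterministic addressing sub-tree of depth $d=\lceil\log_{|A|}|S|\rceil$ (the chain of $s_0$-copies) lets any deterministic policy steer to a chosen key state $q_j$ inside a chosen key layer; the remaining levels are split into key layers, each holding up to $|S|$ key states, and at a key state the $|A|$ actions play the role of the arms of $\mathcal{I}$, with action $l$ sending the trajectory to $s_\Head$ with probability $p_{i,j,l}$ (the arm's mean) and to $s_\Tail$ otherwise. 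Identifying each reachable triple (layer, key state, action) with one arm realizes $\mathcal{I}$ as a single instance with $k\ge \frac{|S||A|(H-d-3)}{3}$ arms, where the $-d$, $-3$ and $1/3$ account for the levels consumed by addressing and gadget entry. Since the sinks are absorbing, $V^*_{M(\mathcal{I})}=\max_{i,j,l}p_{i,j,l}$ and every policy's value equals the mean of the unique arm its trajectory commits to; hence an $\epsilon$-optimal policy commits to an $\epsilon$-best arm of $\mathcal{I}$, and this arm is a deterministic function $\mathrm{arm}(\pi)$ of the returned policy $\pi$.

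\textbf{From the list of policies to a list of arms.} Define the \BestArm algorithm $\mathbb{A}'$ that runs $\mathbb{B}$ on $M(\mathcal{I})$ and outputs $\mathrm{arm}(\pi)$ for the returned $\pi$. As $M(\mathcal{I})$ depends only on $\mathcal{I}$, the list $\Pi(M(\mathcal{I}))$ depends only on $\mathcal{I}$, and its image $L(\mathcal{I})=\{\mathrm{arm}(\pi):\pi\in\Pi(M(\mathcal{I}))\}$ satisfies $|L(\mathcal{I})|\le\ell$ and is hit with probability at least $1-\delta$; by the previous step every element of $L(\mathcal{I})$ is $\epsilon$-optimal, so $\mathbb{A}'$ is an $\ell$-list replicable $(k,\epsilon,\delta)$-\BestArm algorithm. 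The hypotheses of Lemma~\ref{lem:bestarmlowerbound} transfer because there are at most $|S|(H-d-3)\le|S|H$ key positions, so $k\le|S||A|H$, whence $\epsilon\le\frac{1}{2|S||A|H}\le\frac{1}{2k}$ and $\delta\le\frac{1}{|S||A|H+1}\le\frac{1}{k+1}$.

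\textbf{Conclusion and main obstacle.} If $\ell\le k-1$, then padding $L(\mathcal{I})$ to size $k-1$ would make $\mathbb{A}'$ a $(k-1)$-list replicable $(k,\epsilon,\delta)$-\BestArm algorithm, contradicting Lemma~\ref{lem:bestarmlowerbound}; therefore $\ell\ge k\ge\frac{|S||A|(H-\lceil\log_{|A|}|S|\rceil-3)}{3}$, which is the claim. I expect the main difficulty to lie in the packing and level-bookkeeping of the construction: simultaneously (i) showing that $\Theta(|S||A|H)$ arms genuinely fit under the horizon and state budgets with the stated $\lceil\log_{|A|}|S|\rceil$ addressing depth and the $-3$, $1/3$ overheads, and (ii) keeping $k\le|S||A|H$ so that the accuracy and confidence conditions of Lemma~\ref{lem:bestarmlowerbound} are preserved. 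The value-equals-max computation and the list-projection argument are then routine.
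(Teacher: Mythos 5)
Your proposal is correct and follows essentially the same route as the paper's own proof: the same MDP gadget (addressing tree of depth $\lceil\log_{|A|}|S|\rceil$, key layers whose state–action pairs encode the $k=\Theta(|S||A|H)$ arms, absorbing $s_\Head$/$s_\Tail$ sinks with reward only at the end), the same projection of the policy list to an arm list, and the same contradiction with Lemma~\ref{lem:bestarmlowerbound} after verifying $k\le|S||A|H$ so the $\epsilon$ and $\delta$ conditions transfer. The only difference is cosmetic bookkeeping (the paper routes to $m=|S|/3$ key states with tree depth $\lceil\log_{|A|}m\rceil$, which your more conservative depth still accommodates).
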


\begin{proof}
Assume for contradiction that there exists an RL algorithm that satisfies the conditions of the theorem, with
\[
\ell < \frac{|S||A|\left(H - \lceil \log_{|A|} |S| \rceil - 3\right)}{3}.
\]
We will show that this assumption leads to a contradiction with Lemma~\ref{lem:bestarmlowerbound}.

Without loss of generality, assume $|S|$ is divisible by 3. Let $m = |S|/3$, $n = |A|$, $z = H - \lceil \log_n m \rceil - 3$, and define $k = mnz$. We now construct a reduction from the $k$-armed bandit problem (with Bernoulli rewards) to an MDP instance.

We index the $k$ arms by triplets $(i, j, \ell)$, where $i \in [z]$, $j \in [m]$, and $\ell \in [n]$. Each arm is associated with a Bernoulli distribution $D_{i,j,\ell}$ with mean $p_{i,j,\ell}$. We will design an MDP $M$ such that interacting with it corresponds to querying these $k$ arms.

\paragraph{Key Layer Construction.}
Let $\{q_1, \dots, q_m\} \subset S$ denote a set of $m$ designated \emph{key-layer} states (illustrated in Figure~\ref{fig:ok}). We will construct the MDP such that for each $i \in [z]$ and $j \in [m]$, there exists a unique deterministic policy that reaches state $q_j$ precisely at time step $h_i = d + i$, where $d = \lceil \log_n m \rceil$.

Once in state $q_j$ at time $h_i$, the agent can choose action $a_\ell \in A$ to simulate pulling arm $(i, j, \ell)$. Let $s_H, s_T \in S$ denote two absorbing states. We define
\[
\forall (i, j, \ell), \quad P_{h_i}(s_H \mid q_j, a_\ell) = p_{i,j,\ell}, \quad P_{h_i}(s_T \mid q_j, a_\ell) = 1 - p_{i,j,\ell}.
\]
and for all $h$, $a$: $r_h(s_H, a) = \mathbbm{1}[h = H - 1]$ and $r_h(s_T, a) = 0$. Both $s_H$ and $s_T$ are absorbing: $P(s' \mid s_H, a) = \mathbbm{1}[s' = s_H]$ and similarly for $s_T$.

\paragraph{Auxiliary Structure.}
We now describe the deterministic routing structure that reaches each $q_j$ in exactly $d$ steps. We construct a complete $n$-ary tree rooted at a state $w_1 \in S$. Every non-leaf state in the tree has $n$ children, one for each action in $A$, and transitions deterministically based on the action played.

The final layer connects to key-layer states $q_1, \dots, q_m$. There may be more than $m$ leaf actions; any excess actions simply self-loop. The tree has depth $d$, requires at most $2m$ states, and all transitions have reward zero. Transitions are time-homogeneous.

\paragraph{Initial State and Entry Mechanism.}
Let $s_0 \in S$ be the initial state. Define its transitions as follows:
\begin{enumerate}
    \item Playing a designated action $a_0 \in A$ transitions to the root $w_1$ of the $n$-ary tree;
    \item Playing a designated action $a_1 \in A$ causes the agent to remain in $s_0$;
    \item All other actions lead to $s_T$.
\end{enumerate}

To reach a key-layer state $q_j$ at time $h_i = d + i$, a policy selects $a_1$ for $i$ time steps in $s_0$, followed by action $a_0$ to enter the tree, and then a sequence of $d$ actions that leads to $q_j$. From there, it plays $a_\ell$ to simulate arm $(i,j,\ell)$.

\paragraph{Correctness of the Reduction.}
This construction yields a one-to-one correspondence between bandit arms and deterministic policies in the MDP that reach $q_j$ at $h_i$ and play $a_\ell$. Thus, any $\epsilon$-optimal policy in the MDP induces an $\epsilon$-optimal arm in the bandit problem. Note also that all non-rewarding policies cannot match the optimal value due to the delayed structure and reward placement.

\paragraph{Contradiction.}
Now suppose we run the assumed RL algorithm on this MDP. By hypothesis, the algorithm returns a $\epsilon$-optimal policy that lies in a list of $\ell$ policies with $\ell < k = mnz$, with probability at least $1 - \delta$, where $\epsilon \leq \frac{1}{2k}$ and $\delta \leq \frac{1}{k+1}$. Since each policy corresponds to a unique arm, this implies the existence of a $(k - 1)$-list replicable algorithm for the $(k, \epsilon, \delta)$-\textsc{BestArm} problem.
This contradicts Lemma~\ref{lem:bestarmlowerbound}, completing the proof.
\end{proof}

 \section{Experiments of more Complex Environment}\label{sec:exp_full}
 All our experiments are performed based on environments in the Gymnasium~\citep{towers2024gymnasium} package, and we use the PyTorch 2.1.2 for training neural networks.
 We use fixed random seeds in our experiments for better reproducibility.

\subsection{CartPole-v1 with DQN}\label{sec:cp}
We evaluate the performance of the DQN algorithm~\citep{mnih2015human} on \texttt{CartPole-v1}, where we replace the planning algorithm with our robust planner (Algorithm~\ref{alg:choosing action}) in Section~\ref{sec:planning}.

\textbf{Network Architecture:}

We use a feedforward neural network to approximate the  Q-function.
%The input layer has 4 units (state dimension), and the output layer has 2 units (Q-values for each action). The network consists of two hidden layers with 64 units each and uses ReLU activation functions.

\begin{itemize}
    \item Input layer: 4-dimensional state vector
    \item Hidden layer 1: Fully connected, 64 units, ReLU
    \item Hidden layer 2: Fully connected, 64 units, ReLU
    \item Output layer: Fully connected, 2 units (Q-values)
\end{itemize}

%Formally, the Q-network computes:
%\[
%Q(s; \theta) = W_3\,\mathrm{ReLU}(W_2\,\mathrm{ReLU}(W_1 s + b_1) + b_2) + b_3
%\]

\textbf{Experience Replay:}
\begin{itemize}
    \item Buffer capacity: $10^5$ transitions stored in a FIFO deque
    \item Batch size: $B = 256 $
    \item Learning begins once buffer size $\geq B$
\end{itemize}

\textbf{Target Network Updates:}
\begin{itemize}
    \item Two networks: local ($\theta$) and target ($\theta^-$)
    \item We use soft target updates to stabilize learning. After every Q-network update (which occurs every step once the buffer contains \(\geq 256\) transitions), the target network parameters are softly updated using
$\theta_{\text{target}} \leftarrow \tau \theta_{\text{online}} + (1 - \tau) \theta_{\text{target}}$
with \(\tau = 0.001\).
\end{itemize}

%This leads to gradual and consistent synchronization between the online and target networks.

%
%\textbf{Action Selection (Algorithm 1):}
%\begin{enumerate}
%    \item Compute $Q_{\max} = \max_{a} Q(s, a)$.
%    \item Tolerance $r_{\mathrm{action}} \in \{0 ,\, 0.05,\,0.1,\,0.5\}$.
%    \item Select the \emph{first} action
%    \[
%      a^* = \min\bigl\{\,a\in\mathcal{A}\mid Q(s,a)\ge Q_{\max}-r_{\mathrm{action}}\bigr\}.
%    \]
%\end{enumerate}

\textbf{Hyperparameters:}

\begin{center}
\begin{tabular}{llll}
\toprule
\textbf{Parameter} & \textbf{Symbol} & \textbf{Value(s)} & \textbf{Description} \\
\midrule
Learning rate & $\alpha$ & $2.5 \times 10^{-3}$ & Adam optimizer step size \\
Discount factor & $\gamma$ & 0.99 & Future reward discount \\
Replay batch size & $B$ & 256 & Transitions per learning update \\
Replay buffer capacity & $N$ & $10^5$ & Max number of stored transitions \\
Soft update factor & $\tau$ & $10^{-3}$ & Target network mixing coefficient \\
Exploration start & $\epsilon_0$ & 1.0 & Initial exploration probability \\
Exploration end & $\epsilon_{\min}$ & 0.01 & Minimum exploration probability \\
Exploration decay & $\epsilon_{\mathrm{decay}}$ & 0.997 & Multiplicative decay per episode \\
Training episodes & -- & 400 & Total training episodes \\
Max steps per episode & -- & 500 & Episode length limit \\
Evaluation episodes & -- & 100 & Used to compute mean returns \\
Independent runs & -- & 50 & Used to report mean/std \\
\bottomrule
\end{tabular}
\end{center}

\textbf{Training Procedure:}
\begin{enumerate}
    \item Initialize local and target networks; create empty replay buffer.
    \item For each episode:
    \begin{itemize}
        \item Reset environment; compute $\epsilon_t = \max(\epsilon_{\min}, \epsilon_0 \cdot \epsilon_{\mathrm{decay}}^t)$
        \item For each step $t$:
        \begin{itemize}
            \item Select action using $\epsilon$-greedy or Algorithm~\ref{alg:choosing action}
            \item Store transition $(s,a,r,s')$ in the replay buffer
            \item  If buffer size $\geq B$, sample mini-batch and update Q-network
            \item Update target network using soft update rule
        \end{itemize}
    \end{itemize}
%    \item Every 10 episodes, evaluate deterministic policy ($\epsilon = 0$) via Algorithm~1.
%    \item Final performance is averaged across 100 random seeds.
\end{enumerate}
When invoking Algorithm~\ref{alg:choosing action}, we use the Q-network as our estimate of $Q^*_{h, \hat{M}}$, and select actions using Algorithm~\ref{alg:choosing action} with $r_{\mathrm{action}} \in \{0.0, 0.05, 0.1, 0.5\}$. 
Note that when $\raction = 0$, Algorithm~\ref{alg:choosing action} is equivalent to picking actions that maximize the estimated $Q$-value as in the original DQN algorithm.

\textbf{Evaluation Protocol:}

Every 10 training episodes, we evaluate the policy over 100 test episodes, where each episode is initialized using a fixed random seed for reproducibility. During the evaluation, we disable $\epsilon$-greedy but still use Algorithm~\ref{alg:choosing action} to choose actions. 
In Figure~\ref{fig:cp}, we report the average award of the trained policy, $\pm$ standard deviation, across different runs.

%\begin{itemize}
%    \item \textbf{Eval Interval:} every 10 episodes.
%    \item \textbf{Eval Seeds:} $s \in \{0, 1, \dots, 99\}$.
%    \item \textbf{No Exploration:} no noise added.
%    \item \textbf{Action:} 
%    \[
%    a = \min \left\{ a_i \mid Q(s, a_i) \ge \max_{a'} Q(s, a') - r_{\mathrm{action}} \right\}
%    \]
%    \item \textbf{No Update:} $\theta, \theta_{\text{target}}$ frozen.
%    \item \textbf{Output:} 
%    \[
%    \bar{R} = \frac{1}{100} \sum_{i=1}^{100} R_i
%    \]
%    where $R_i$ is the cumulative reward in episode $i$.
%\end{itemize}

%
%\textbf{Implementation Details:}
%\begin{itemize}
%    \item Framework: PyTorch 2.1.2
%    \item Reproducibility: Seeded NumPy and PyTorch
%    \item Evaluation: Rewards averaged over 100 episodes (deterministic policy)
%    \item Logging: Reward smoothed with moving average (window size 1)
%    \item Runtime: Approx.\ 2 minutes per run on single GPU or CPU
%\end{itemize}

\subsection{Acrobot-v1 with Double DQN}
We evaluate the performance of the Double DQN algorithm~\citep{van2016deep} on \texttt{Acrobot-v1}, where we replace the planning algorithm with our robust planner (Algorithm~\ref{alg:choosing action}) in Section~\ref{sec:planning}.

%We evaluate a Double DQN agent augmented with a threshold-based greedy policy in the \texttt{Acrobot-v1} environment. The agent is trained for 90 epochs and evaluated every 10 episodes using a deterministic thresholded policy.
%
%\textbf{Environment:}
%\begin{itemize}
%    \item Gymnasium \texttt{Acrobot-v1} (formerly OpenAI Gym)
%    \item State $s=[\cos\theta_1,\sin\theta_1,\cos\theta_2,\sin\theta_2,\dot\theta_1,\dot\theta_2]\in\mathbb{R}^6$
%    \item Actions $\mathcal{A}=\{0,1,2\}$: torque $-1,0,+1$ on actuated joint
%    \item Reward: $r=-1$ per step until height threshold reached or 500 steps
%    \item Max steps per episode: 500
%\end{itemize}

\paragraph{Network Architecture:}
We use a feedforward neural network to approximate the  Q-function.
  \begin{itemize}
    \item Input layer: state vector (\(\dim=6\))
    \item Hidden layers: 256 → 512 → 512 units, ReLU activations
    \item Output layer: Q-values for each action (\(\dim=3\))
%    \item \[
%Q(s;\theta)=W_4\,\mathrm{ReLU}\bigl(W_3\,\mathrm{ReLU}(W_2\,\mathrm{ReLU}(W_1 s + b_1)+b_2)+b_3\bigr)+b_4
%\]
  \end{itemize}

\paragraph{Hyperparameters:}

\begin{center}
\begin{tabular}{llll}
\toprule
\textbf{Parameter} & \textbf{Symbol} & \textbf{Value(s)} & \textbf{Description} \\
\midrule
Learning rate & \(\alpha\) & \(1\times10^{-5}\) & Adam step size \\
Discount factor & \(\gamma\) & 0.99 & Future reward discount \\
Batch size & \(B\) & 8192 & Samples per update \\
Replay capacity & $N$ & \(5\!\times\!10^{4}\) & Max transitions stored \\
Target update freq. & -- & 100 steps & Hard copy interval \\
Initial \(\varepsilon\) & \(\varepsilon_0\) & 1.0 & Exploration start \\
Min \(\varepsilon\) & \(\varepsilon_{\min}\) & 0.01 & Exploration floor \\
\(\varepsilon\)-decay & \(\delta\) & \(5\times10^{-4}\) & Exploration decay per episode \\
Training epochs & -- & 90 & Total learning epochs \\
Eval interval & -- & 10 episodes & Test frequency \\
Eval episodes & -- & 100 runs &Used to compute mean returns \\
Independent runs & -- & 25 & Used to report mean/std \\
%Threshold \(D\) & \(D\) & \{0, 0.05, 0.1, 0.2, 0.5\} & Action tolerance \\
\bottomrule
\end{tabular}
\end{center}

\paragraph{Replay Buffer:}
\begin{itemize}
  \item Capacity: \(50{,}000\) transitions
  \item Batch size: \(B = 8192\)
\end{itemize}
%
%\paragraph{Action Selection (Thresholded Greedy):}
%\begin{enumerate}
%  \item With probability \(\varepsilon\), select a random action.
%  \item Otherwise compute \(Q\)-values \(q = Q_{\text{local}}(s)\), let \(q_{\max} = \max_i q_i\).
%  \item Choose the \emph{first} action
%  \[
%    a = \min\bigl\{\,i \mid q_{\max} - q_i \le D\,\bigr\}.
%  \]
%\end{enumerate}

%\paragraph{Epsilon Decay:}
%\[
%
%\]
%with initial \(\varepsilon_0 = 1.0\), \(\varepsilon_{\min} = 0.01\), and \(\delta = 5\times10^{-4}\).

\paragraph{Training Procedure:}
\begin{enumerate}
  \item Initialize networks, replay buffer, and seeds.
  \item For each episode $t$:
  \begin{itemize}
    \item Reset environment; compute $\varepsilon_{t} = \max(\varepsilon_{\min}, \,\varepsilon_0 - t\delta)$
    \item For each step:
    \begin{itemize}
        \item Select action using $\epsilon$-greedy or Algorithm~\ref{alg:choosing action}
        \item Store transition $(s,a,r,s')$ in the replay buffer.
        \item If buffer size $\geq B$, sample mini-batch and update Q-network using double Q-learning
      \item Every 100 learning steps, replace target weights
    \end{itemize}
  \end{itemize}
 % \item Every 10 episodes, evaluate deterministically (\(\varepsilon=0\)) over 100 runs and record average reward.
\end{enumerate}
When invoking Algorithm~\ref{alg:choosing action}, we use the Q-network as our estimate of $Q^*_{h, \hat{M}}$, and select actions using Algorithm~\ref{alg:choosing action} with $r_{\mathrm{action}} \in \{0, 0.05, 0.1, 0.2\} $. 
Note that when $\raction = 0$, Algorithm~\ref{alg:choosing action} is equivalent to picking actions that maximize the estimated $Q$-value as in the original Double DQN algorithm.

\textbf{Evaluation Protocol:}

Same as Section~\ref{sec:cp}.

%\paragraph{Evaluation Protocol:}
%\begin{itemize}
%  \item Every 10 episodes, set exploration rate \(\varepsilon = 0\) to execute the deterministic policy:
%  \[
%    a_t = \arg\max_a Q(s_t, a)
%  \]
%  \item Perform \(N=100\) independent test episodes per evaluation; compute average return:
%  \[
%    \bar{R} = \frac{1}{N} \sum_{i=1}^N R_i
%  \]
%  \item For each threshold parameter \(D\), run \(M=25\) independent trials; report mean and standard deviation of average returns over runs:
%  \[
%    \mu_D(t) = \frac{1}{M} \sum_{j=1}^M \bar{R}_{j,D}(t), \quad
%    \sigma_D(t) = \sqrt{\frac{1}{M} \sum_{j=1}^M \big(\bar{R}_{j,D}(t) - \mu_D(t)\big)^2}
%  \]
%  where \(t\) indexes evaluation points (every 10 episodes).
%\end{itemize}

%
%\paragraph{Implementation Details:}
%\begin{itemize}
%  \item Framework: PyTorch 2.1.2
%  \item Environment seeding for reproducibility
%  \item Logging: print eval scores and save per-run arrays
%  \item Visualization: Matplotlib plots with mean \(\pm\) Std shading
%\end{itemize}
%

\subsection{MountainCar-v0 with Tabular Q-Learning}
We evaluate the performance of the Q-Learning on \texttt{MountainCar-v0}, where we replace the planning algorithm with our robust planner (Algorithm~\ref{alg:choosing action}) in Section~\ref{sec:planning}. 

%We evaluate the effectiveness of threshold-based action selection in the \texttt{MountainCar-v0} environment using a tabular Q-learning agent with discrete state space discretization and a tolerance parameter $r_{\text{action}}$. The environment is considered solved when the car reaches the goal at the top of the hill within 200 steps.

%\paragraph{Environment:}
%\begin{itemize}
%    \item Gymnasium \texttt{MountainCar-v0}
%    \item Continuous 2-dimensional state space (position, velocity)
%    \item Discrete action space: \{push left, no push, push right\}
%\end{itemize}

\paragraph{State Discretization:}
\begin{itemize}
    \item Discretized into a $20 \times 20$ grid
    \item Bin size computed from environment bounds
    \item Discrete state: $\texttt{tuple}((s - s_{\min}) / \Delta s)$
\end{itemize}

\paragraph{Q-table:}
\begin{itemize}
    \item Shape: $(20, 20, 3)$
    \item Initialized uniformly in $[-2, 0]$
\end{itemize}

\paragraph{Hyperparameters:}
\begin{center}
\begin{tabular}{llll}
\toprule
\textbf{Parameter} & \textbf{Symbol} & \textbf{Value(s)} & \textbf{Description} \\
\midrule
Learning rate & $\alpha$ & 0.1 & Q-learning update step \\
Discount factor & $\gamma$ & 0.95 & Discount for future rewards \\
Exploration schedule & $\epsilon$ & $\max(0.01, 1 - t/500)$ & Episode-based decay \\
State bins & -- & $20 \times 20$ & For discretization \\
Training episodes & -- & 10,000 & Total learning episodes \\
Evaluation interval & -- & 200 & Test policy every 200 episodes \\
Test episodes & -- & 100 & Used to compute mean returns \\
Independent runs & -- & 25 & Used to report mean/std \\
%Threshold values & $r_{\text{action}}$ & \{0, 0.001, 0.005, 0.01, 0.02\} & Action tolerance \\
\bottomrule
\end{tabular}
\end{center}

\textbf{Training Procedure:}
%\begin{enumerate}
%    \item 

    For each episode $t$:
    \begin{itemize}
        \item Reset environment; discretize initial state; compute $\epsilon_t = \max(0.01, 1 - t/500)$
        \item Select actions using $\epsilon$-greedy or Algorithm~\ref{alg:choosing action}
                \item Update Q-table with learning rate $\alpha = 0.1$ and discount factor $\gamma = 0.95$:
        \[
            Q(s, a) \leftarrow (1 - \alpha) Q(s, a) + \alpha \left[ r + \gamma \max_{a'} Q(s', a') \right]
        \]
        \item If terminal state is reached and the goal is achieved, set $Q(s,a) \leftarrow 0$
    \end{itemize}
    %\item Every 200 episodes, evaluate the current policy without exploration over 100 test episodes
%\end{enumerate}
When invoking Algorithm~\ref{alg:choosing action}, we use the Q-table as our estimate of $Q^*_{h, \hat{M}}$, and select actions using Algorithm~\ref{alg:choosing action} with $r_{\mathrm{action}} \in \{0, 0.001, 0.005, 0.02\} $. 
Note that when $\raction = 0$, Algorithm~\ref{alg:choosing action} is equivalent to picking actions that maximize the estimated $Q$-value as in the original Q-learning algorithm.

\paragraph{Evaluation Protocol:} Same as Section~\ref{sec:cp}. 

%\paragraph{Implementation Details:}
%\begin{itemize}
%    \item Framework: NumPy + Gymnasium
%    \item Visualization: Matplotlib
%    \item Reproducibility: Fixed random seeds
%    \item Runtime: Approx. 2 minutes per run on CPU
%    \item Logging: Mean and std plotted with optional smoothing
%\end{itemize}

\subsection{Namethisgame with Beyond The Rainbow}
We evaluate the performance of the Beyond The Rainbow on \texttt{Namethisgame}, where we replace the planning algorithm with our robust planner (Algorithm~\ref{alg:choosing action}) in Section~\ref{sec:planning}. 
\paragraph{Environment:}
\begin{itemize}
    \item Domain: Atari 2600, evaluated on \texttt{NameThisGame}
    \item Simulator: ALE with frame skip $=4$
    \item Observations: grayscale $84 \times 84$ stacked frames
    \item Actions: discrete Atari action set
\end{itemize}

\paragraph{Baseline:}
\begin{itemize}
    \item Algorithm: BTR (Bootstrapped Transformer Reinforcement learning)
    \item Training budget: $100$M Atari frames
\end{itemize}

\paragraph{Threshold Strategy:}
\begin{itemize}
    \item Planner augmented with a decaying action-threshold rule
    \item At each decision point, we select
          \[
            a = \arg\max_{a'} Q(s,a') \quad \text{subject to} \quad 
            Q(s,a) \geq \max_{a'} Q(s,a') - r_{\text{action}}(t),
          \]
          where $r_{\text{action}}(t)$ is a step-dependent threshold
    \item Decay schedule:
          \[
            r_{\text{action}}(t) = 0.4 \times (0.98)^{\,\lfloor t/5000 \rfloor},
          \]
          with $t$ denoting the training step index
    \item When $r_{\text{action}}(t) \to 0$, the method reduces to the vanilla BTR algorithm
\end{itemize}

\begin{table}[t]
\centering
\small                     % 表整体字号稍微缩小
\setlength{\tabcolsep}{3pt}     % 列间距缩小一点
\renewcommand{\arraystretch}{0.95} % 行距稍微压缩
\begin{tabular}{p{3cm} c p{3cm} p{7cm}}
\toprule
\textbf{Parameter} & \textbf{Symbol} & \textbf{Value(s)} & \textbf{Description} \\
\midrule
Learning rate        & $lr$              & $1\times 10^{-4}$        & Optimizer step size (Adam/AdamW) \\
Discount factor      & $\gamma$         & 0.997                     & Discount for future rewards \\
Batch size           & $B$              & 256                       & Mini-batch size for updates \\
Replay buffer size   & --               & $10^6$                    & PER capacity \\
PER coefficient      & $\alpha$         & 0.2                       & Priority exponent \\
PER annealing        & $\beta$          & $0.45 \to 1.0$            & Importance weight schedule \\
Gradient clipping    & --               & 10.0                      & Norm clipping for stability \\
Target update        & --               & 500 steps                 & Replace target network \\
Slow net update      & --               & 5000 steps                & Replace slow network \\
Optimizer            & --               & Adam/AdamW                & With $\epsilon = 0.005/B$ \\
Loss function        & --               & Huber                     & Temporal difference loss \\
Replay ratio         & --               & 1.0                       & Grad updates per env step \\
Exploration schedule & $\epsilon$       & $1.0 \to 0.01$ (2M steps) & $\epsilon$-greedy decay \\
Noisy layers         & --               & Enabled                   & Factorized Gaussian noise \\
Network arch.        & --               & Impala-IQN / C51          & Conv backbone + distributional head \\
Model size           & --               & 2                         & Scale factor for Impala CNN \\
Linear hidden size   & --               & 512                       & Fully-connected layer width \\
Cosine embeddings    & $n_{\cos}$       & 64                        & IQN quantile embedding size \\
Number of quantiles  & $\tau$           & 8                         & Quantile samples for IQN \\
Frame stack          & --               & 4                         & History frames per state \\
Image size           & --               & $84 \times 84$            & Input resolution \\
Trust-region         & --               & Disabled                  & Optional stabilizer \\
EMA stabilizer       & $\tau$           & 0.001                     & Soft target update (if enabled) \\
Munchausen           & $\alpha$         & 0.9                       & Entropy regularization (if enabled) \\
Distributional       & --               & C51/IQN                   & Distributional RL variants \\
Threshold start      & $D_{\text{start}}$ & 0.4                     & Initial threshold ratio \\
Threshold decay      & $D_{\text{decay}}$ & 0.98                    & Multiplicative decay factor \\
Threshold interval   & --               & 5000 steps                & Decay period \\
D-strategy           & --               & none / minnumber / lastact / slownet & Action selection rule \\
Training frames      & --               & 200M                      & Total Atari interaction budget \\
Evaluation freq.     & --               & 250k frames               & Eval episodes per checkpoint \\
Independent runs     & --               & 5 seeds                   & Reported mean/std \\
\bottomrule
\end{tabular}
\end{table}

\textbf{Training Procedure:}
\begin{itemize}
    \item Interact with the environment for $100$M frames using $\epsilon$-greedy exploration
    \item Store transitions into a replay buffer and update the Q-network with Adam optimizer
    \item Report mean and standard deviation over 5 independent seeds
\end{itemize}

\noindent We observe that augmenting BTR with the threshold strategy improves performance in \texttt{NameThisGame} by over 10\% compared to the baseline.

\end{document}